\pgfplotsset{compat=1.13}
\newcolumntype{H}{>{\setbox0=\hbox\bgroup}c<{\egroup}@{}}
\newtheorem{theorem}{Theorem}
\newtheorem{lemma}{Lemma}
\newtheorem{assumption}{Assumption}[section]
\newcommand{\B}{\bfseries}
\newcommand{\vecprod}[1]{{\left\langle#1\right\rangle}}
\newcommand{\bfx}{\mathbf{x}}
\newcommand{\bfu}{\mathbf{u}}
\newcommand{\bfv}{\mathbf{v}}
\newcommand{\bfk}{\mathbf{k}}
\newcommand{\bfg}{\mathbf{g}}
\DeclareMathOperator*{\argmax}{arg\,max}
\DeclareMathOperator*{\argmin}{arg\,min}
\tikzset{
	reuse path/.code={\pgfsyssoftpath@setcurrentpath{#1}}
}
\tikzset{even odd clip/.code={\pgfseteorule},
	protect/.code={
		\clip[overlay,even odd clip,reuse path=#1]
		(-6383.99999pt,-6383.99999pt) rectangle (6383.99999pt,6383.99999pt);
}}
\tikzset{
	dot/.style={circle, fill, minimum size=#1, inner sep=0pt, outer sep=0pt},
	dot/.default = 4.5pt,
	hemispherebehind/.style={ball color=gray!20!white, fill=none, opacity=0.3},
	hemispherefront/.style={ball color=gray!65!white, fill=none, opacity=0.4},
	ellipsoidfront/.style={ball color=gray!50!white, fill=none, opacity=0.5},
	circlearc/.style={thick,color=gray!90},
	circlearchidden/.style={thick,dashed,color=gray!90},
	equator/.style = {thick, black},
	diameter/.style = {thick, black},
	axis/.style={thick, -stealth,black!60, every node/.style={text=black, at={([turn]1mm,0mm)}},
	},
}
\newcommand\zjbj[4][7pt]{%
	\draw let \p1=(#2),\p2=(#4),\p0=(#3) in
	(#3)++({atan2(\y1-\y0,\x1-\x0)}:#1)
	--++({atan2(\y2-\y0,\x2-\x0)}:#1)
	--++({atan2(\y1-\y0,\x1-\x0)}:-#1);
}
\newcommand\zjbjcolor[4][7pt]{%
	\draw[color=red] let \p1=(#2),\p2=(#4),\p0=(#3) in
	(#3)++({atan2(\y1-\y0,\x1-\x0)}:#1)
	--++({atan2(\y2-\y0,\x2-\x0)}:#1)
	--++({atan2(\y1-\y0,\x1-\x0)}:-#1);
}
\newcommand{\nn}{\num[group-separator={,},group-minimum-digits=3]}
\title{Finding Optimal Tangent Points for \\ Reducing Distortions of Hard-label Attacks}
\author{
	Chen Ma $^1$ \quad Xiangyu Guo $^2$  \quad Li Chen $^{1,}$\thanks{Corresponding author.} \quad Jun-Hai Yong $^{1}$ \quad Yisen Wang $^{3,4}$ \\
	$^1$ School of Software, BNRist, Tsinghua University, Beijing, China \\
	$^2$ Department of Computer Science and Engineering, University at Buffalo, Buffalo NY, USA \\
	$^3$ Key Lab. of Machine Perception, School of Artificial Intelligence, Peking University, Beijing, China \\
	$^4$ Institute for Artificial Intelligence, Peking University, Beijing, China \\
	\texttt{machenstar@163.com, xiangyug@buffalo.edu} \\ \texttt{\{chenlee,yongjh\}@tsinghua.edu.cn, yisen.wang@pku.edu.cn}
}
\begin{document}
	
	\maketitle
	
	\begin{abstract}
		One major problem in black-box adversarial attacks is the high query complexity in the hard-label attack setting, where only the top-1 predicted label is available. In this paper, we propose a novel geometric-based approach called Tangent Attack (TA), which identifies an optimal tangent point of a virtual hemisphere located on the decision boundary to reduce the distortion of the attack. 
		Assuming the decision boundary is locally flat, we theoretically prove that the minimum $\ell_2$ distortion can be obtained by reaching the decision boundary along the tangent line passing through such tangent point in each iteration. To improve the robustness of our method, we further propose a generalized method which replaces the hemisphere with a semi-ellipsoid to adapt to curved decision boundaries. Our approach is free of pre-training. Extensive experiments conducted on the ImageNet and CIFAR-10 datasets demonstrate that our approach can consume only a small number of queries to achieve the low-magnitude distortion. The implementation source code is released online at \url{https://github.com/machanic/TangentAttack}.
	\end{abstract}
	
	\section{Introduction}
	
	Adversarial attacks cause deep neural networks (DNNs) to make incorrect predictions by slightly perturbing benign images during the test time. They can be divided into two main categories on the basis of the amount of information exposed by the target model, namely white-box and black-box attacks. Many white-box attacks \cite{Carlini2017TowardsET,madry2018towards,moosavi2016deepfool} have been proposed, and they can compute the gradients w.r.t. the target model's input images to generate adversarial examples with the first-order optimization techniques. In contrast, black-box attacks are more practical because they craft adversarial examples without requiring the target model's gradients. 
	
	Over the past years, the community has made considerable efforts in developing black-box attacks, and the proposed methods can be divided into transfer- and query-based attacks. 
	Transfer-based attacks \cite{liu2017delving,wang2020unified,wu2020skip} generate adversarial examples by using a white-box attack method against a surrogate model to fool the target model. Although there is no need to query the target model in these attacks, the attack success rate can not be guaranteed, especially in the case of targeted attacks. To achieve satisfactory attack success rates, the query-based attacks use elaborate queries to obtain the feedback of the target model for crafting adversarial examples. In the score-based setting, the query-based attacks \cite{bai2020improving,cheng2019improving,ilyas2018prior,ma2021simulator} estimate approximate gradients by querying the predicted scores of the target model at multiple points. However, in most real-world scenarios, the score-based setting is not applicable because the public service returns only the top-1 predicted label (\textit{i.e.,} the hard label) rather than the predicted score. In this case, since the feedback information is limited and the objective function is discontinuous, the attack requires solving a high-dimensional combinatorial optimization problem, which is often challenging. 
	
	\begin{figure}[t]
		\centering
		\def\r{1.0}
		\begin{tikzpicture}[scale=0.9]
		\newcommand\zjbjgreen[4][7pt]{%
			\draw[color=black!50!green] let \p1=(#2),\p2=(#4),\p0=(#3) in
			(#3)++({atan2(\y1-\y0,\x1-\x0)}:#1)
			--++({atan2(\y2-\y0,\x2-\x0)}:#1)
			--++({atan2(\y1-\y0,\x1-\x0)}:-#1);
		}

		\node at (1.5,2.6) {$H$};

		\coordinate (x) at  (1.5,2);
		\fill (x) circle[radius=1.5pt] node[anchor=east] {\small{original image} $\mathbf{x}$};
		\coordinate (O) at  (4,3);
		\draw[draw=black,densely dotted,thick] ($(O)+(\r,0)$) arc (0:180:\r); 
		
		\fill (O) circle[radius=1.5pt] node[anchor=north] {$\mathbf{x}_{t-1}$};
		\coordinate (G) at ($(O)+(0,\r)$);
		\draw[thick] (7,3) -- (O) .. controls (1.5,3) .. (1.8,3.2) parabola bend (G)  (7,4.2);
		\draw[fill=brown!50,nearly transparent]  (7,3) -- (O) .. controls (1.5,3) .. (1.8,3.2) parabola bend (G)  (7,4.2);
		
		\fill[fill=red] (G) circle[radius=1.5pt] node[anchor=south,color=red] {HSJA point $\mathbf{g}$};
		\draw[-stealth,red,densely dashed,thick] (O) -- (G);
		
		\coordinate (H) at (3.1339745962155616,3.5);
		
		\coordinate (interset_HSJA) at (2.7500, 3.0000);
		\fill[fill=red] (interset_HSJA) circle[radius=1.5pt] node[anchor=north west,color=red] {$\mathbf{x}_\mathbf{g}$};
		\draw[red,densely dashed,thick] (x) -- (interset_HSJA);
		\draw[-stealth,red,thick] (G) -- (interset_HSJA);

		\coordinate (tangent) at (3.3103, 3.7241);
		\fill[fill=black!50!green] (tangent) circle[radius=1.5pt] node
		[anchor=south east,color=black!50!green] {tangent point $\mathbf{k}$};
		\coordinate (interset_tangent) at (2.5500, 3.0000);	
		\fill[fill=black!50!green] (interset_tangent) circle[radius=1.5pt] node[text=black!50!green] at (2.5,3.3) {$\mathbf{x}_{t}$};
		\draw[-stealth,black!50!green,thick] (tangent) -- (interset_tangent);
		\draw[-stealth,black!50!green,densely dashed,thick] (O) -- (tangent);
		\draw[black!50!green,densely dashed,thick] (x) -- (interset_tangent);
		\zjbjgreen[4pt]{O}{tangent}{x};

		\node at (5.8,3.5) {\small adversarial region};
		\node at (5.8,2.3) {\small non-adversarial region};
		\node at (5.8,4.6) {\small non-adversarial region};
		
		\end{tikzpicture}
		\caption{Simplified two-dimensional illustration of our motivation. $H$ is the decision boundary, and $\mathbf{x}_{t-1}$ is the current adversarial example mapped onto the decision boundary at the $(t-1)$-th iteration. HSJA updates $\mathbf{x}_{t-1}$ along the gradient direction to reach $\mathbf{g}$ and then maps it to $H$ at $\mathbf{x}_{\mathbf{g}}$ along the line through $\bfx$ and $\bfg$. However, the optimal update should be the tangent point $\mathbf{k}$ because it can be mapped onto $H$ at $\mathbf{x}_t$ that has the shortest distance to the original image $\mathbf{x}$.}
		\label{fig:fig1}
	\end{figure}
	To reformulate the attack as a real-valued continuous optimization problem, OPT \cite{cheng2018queryefficient}, Sign-OPT~\cite{cheng2019sign}, and RayS \cite{chen2020rays} focus on minimizing an objective function $g(\theta)$, which is defined as the distance from the original image to the nearest adversarial example along the direction $\theta$. However, when attacking complex models, it may be difficult to find a suitable direction $\theta$ along which adversarial examples exist.
	
	Boundary Attack (BA) \cite{brendel2018decisionbased}, HopSkipJumpAttack (HSJA) \cite{chen2019hopskipjumpattack}, QEBA \cite{li2020qeba}, qFool \cite{liu2019qFool}, and Policy Driven Attack (PDA) \cite{yan2021policydriven} eliminate the search of the direction $\theta$. 
	Instead, they start from a large adversarial perturbation and then reduce its distortion while staying in the adversarial region. Because the output labels of the target model flip only near the decision boundary, these attacks restrict their explorations to the regions near the decision boundary. However, they do not thoroughly investigate the geometric properties of the decision boundary to accelerate the attack. 
	For example, PDA uses a reinforcement learning framework to train a policy network to predict search directions, which are not geometrically optimal.
	In addition, the prediction accuracy of the policy network decreases significantly in the later stages of the iterations, resulting in worse performance of these iterations.
	HSJA and qFool simply use the gradient $\mathbf{u}$ estimated at the decision boundary as the direction of each update, while ignoring a geometrically critical issue, \textit{i.e.,} $\mathbf{u}$ is not the optimal direction to be followed (Fig. \ref{fig:fig1}). We could explore better search directions at each attack iteration.

	To find the optimal search direction for minimizing the distortions of attack, we propose a new geometric-based approach whose motivation is illustrated in Fig. \ref{fig:fig1}. We construct a virtual semicircle $B$ centered at $\mathbf{x}_{t-1}$ to indicate all possible locations that $\mathbf{x}_{t-1}$ can reach along different directions, and the radius of $B$ limits the range of updates for successful attacks. It is easy to observe that moving along the tangent line can reach the nearest location of the decision boundary to the original image $\mathbf{x}$, thereby producing the adversarial example with the minimum distortion. In real attack scenarios, the image data reside in a high-dimensional space, and the semicircle becomes a hemisphere. In this case, the benefit of using tangent points still exists, and we provide the detailed description and the formal proof in Section \ref{sec:method} and appendix.
	
	To summarize, the main contributions of this study are as follows.
	\begin{enumerate}[itemindent=0pt,leftmargin=1.15em]
		\item We cast the problem of minimizing the distortion in hard-label attacks into a geometric problem. We discover that the minimum distortion can be obtained by searching the optimal tangent point of a virtual hemisphere around the adversarial example at each iteration.
		\item We propose a novel geometric-based method to obtain a closed-form solution of the optimal tangent point. We provide an intuitive explanation of our approach, as well as a formal proof of its correctness. 
		\item To improve robustness, we further propose a generalized method that replaces the hemisphere with a semi-ellipsoid to adapt to the target models with curved decision boundaries. 
		\item Extensive experiments conducted on the CIFAR-10 \cite{krizhevsky2009learning} and ImageNet \cite{ImageNet} datasets demonstrate the effectiveness of our approach. 
	\end{enumerate}

	\section{Related Work}
	\label{sec:related_work}
	Query-based black-box attacks can be divided into score- and decision-based attack (a.k.a. the hard-label attack). Score-based attacks \cite{ilyas2018blackbox,bai2020improving,bhagoji2018practical,chen2017zoo,cheng2019improving,ilyas2018prior,ma2021simulator,moonICML19} use the predicted probability score to craft adversarial examples, which is not always available in most real-world systems. Hard-label attacks are more useful, but obviously more challenging, because only the top-1 predicted label can be obtained. The hard-label attacks usually fall into three categories. 
	
	The first category starts from the original image $\mathbf{x}_0$ and attempts to find a optimal direction $\theta$ to reach the adversarial example. OPT \cite{cheng2018queryefficient} searches an optimal $\theta$ to minimize the distance from $\mathbf{x}_0$ to the nearest adversarial example. Sign-OPT \cite{cheng2019sign} improves the query efficiency of OPT by using a single query to estimate the sign of the directional derivative. RayS \cite{chen2020rays} eliminates the gradient estimation and proposes a fast check step to efficiently find the direction $\theta$. However, RayS is only applicable to the untargeted attack under the $\ell_\infty$ norm because it is difficult to find a suitable direction to reach the region of the target class in a targeted attack, especially in the case of a large number of classes.

	The second category starts from a large perturbation or an image of the target class, and then reduces its distortion while
	staying in adversarial region, thereby gradually making it closer to the original image. BA \cite{brendel2018decisionbased} and NES \cite{ilyas2018blackbox} are two representative methods, but they have high query complexity. Biased BA \cite{Brunner2019BiasedBoundaryAttack} reinterprets BA as a biased sampling framework and incorporates different biases to improve the query efficiency. HSJA \cite{chen2019hopskipjumpattack} utilizes the gradient estimation and the binary search to outperform BA. HSJA can be used as the baseline of hard-label attacks. QEBA \cite{li2020qeba} improves HSJA by using dimension reduction techniques. SurFree \cite{maho2021surfree} and GeoDA~\cite{rahmati2020geoda} improve the performance of HSJA by exploiting geometric properties of DNNs. However, the geometric features of DNNs have not been fully explored, and they do not support the targeted attack. PDA \cite{yan2021policydriven} uses a reinforcement learning framework to train a policy network to predict promising directions. However, PDA requires a high-cost pre-training, which is not always available in all tasks.
	
	The third category uses a random sampling technique to improve query efficiency. Customized Adversarial Boundary (CAB) attack \cite{shi2020polishing} uses the current noise to select the sensitive area of images and customize sampling distribution. Evolutionary \cite{dong2019efficient} improves the query efficiency by reducing the dimension of the search space with the stochastic coordinate selection.

	The issue of RayS and GeoDA is that they only support untargeted attacks. Because moving the original image far enough in any direction can always make it escape the non-adversarial region in untargeted attacks. However, in targeted attacks, it is difficult to find a suitable direction along which the target class's region exists.
	In contrast, our approach supports all types of attacks, and we exploit the geometric characteristics of the decision boundaries of DNNs to boost the attack.

	
	\section{The Proposed Approach}
	\label{sec:method}
	\subsection{The Goal of Hard-label Attacks}
	
	Given a target model $f: \mathbb{R}^d \rightarrow \mathbb{R}^{k}$ and a benign image $\mathbf{x}\in[0,1]^d$ that is correctly classified using $f$, the goal of the adversary is to slightly perturb $\mathbf{x}$ into $\mathbf{x}_{\text{adv}}$, such that $f(\mathbf{x}_{\text{adv}})$ outputs incorrect prediction. In the hard-label attack, the adversary can only observe the top-1 predicted label of $f$, denoted as $\hat{y} = \arg \max_i f(\mathbf{x}_{\text{adv}})_i$. We define an indicator function $\phi(\cdot)$ of a successful attack:
	\begin{equation}
	\phi(\mathbf{x}_{\text{adv}}) \vcentcolon= \begin{cases}
	1 & \text{if } \hat{y} = y_\text{adv}\text{ in the targeted attack},\\
	& \quad\text{or } \hat{y} \neq y \text{ in the untargeted attack},\\
	0 & \text{otherwise},
	\end{cases}
	\label{eqn:phi}
	\end{equation}
	where $y\in\mathbb{R}$ is the true label of $\mathbf{x}$ and $y_\text{adv}\in\mathbb{R}$ is a predefined target class label. In this study, we focus on generating an adversarial example $\mathbf{x}_\text{adv}$ that satisfies $\phi(\mathbf{x}_\text{adv}) = 1$, such that the distortion $d(\mathbf{x}_\text{adv}, \mathbf{x}) := \|\mathbf{x}_{\text{adv}} - \mathbf{x}\|_p$ is minimized. This goal can be formulated as the optimization problem:
	\begin{equation}
	\min_{\mathbf{x}_\text{adv}} \,d(\mathbf{x}_\text{adv}, \mathbf{x})\quad \rm{s.t. }\quad \phi(\mathbf{x}_\text{adv}) = 1.
	\end{equation}
	\subsection{Motivation}
	\label{sec:motivation}
	Most recent attacks belong to the second category (Section \ref{sec:related_work}) follow a common procedure: it starts from an adversarial image yet not close enough to the benign one, then it iteratively searches for a closer adversarial image. 
	Let us take a typical attack HSJA \cite{chen2019hopskipjumpattack} for example (Fig. \ref{fig:fig1}). First, the attack initializes the adversarial example by using an image of the target class (in a targeted attack) or a noisy version of the original image (in an untargeted attack). Next, it performs the binary search to map the initial sample onto the decision boundary $H$, denoted as $\mathbf{x}_{t-1}$. Then, the algorithm iteratively performs three steps to update $\mathbf{x}_{t-1}$: estimating gradient $\mathbf{u}$ at $H$ by sampling many probes around $\mathbf{x}_{t-1}$; jumping to the point $\mathbf{g}$ along the direction $\mathbf{u}$ with the step size determined by the geometric progression; and mapping $\mathbf{g}$ onto $H$ at $\mathbf{x}_{\mathbf{g}}$ by performing the binary search along the line passing through $\mathbf{g}$ and the original image $\mathbf{x}$. However, geometrically speaking, the estimated gradient $\mathbf{u}$ is not the optimal search direction, and we can find a better boundary point $\mathbf{x}_t$ by connecting $\mathbf{x}$ to a certain point on the semicircle $B$ and then taking the intersection point on $H$. Fig.~\ref{fig:fig1} shows that the tangent point $\mathbf{k}$ is the optimum update because it leads to the minimum distortion of the attack.
	
	\subsection{Definition of Optimal Tangent Points}
	\label{sec:define_tangent}
	
	\begin{wrapfigure}{r}{0.455\textwidth}
		\vspace{-5mm}
		\centering
		\tikzset{yxplane/.style={canvas is yx plane at z=#1}}
		\def\r{2.0}
		\tdplotsetmaincoords{70}{170}
		\begin{tikzpicture}[tdplot_main_coords,scale=0.7]
		
		\begin{scope}[thin]
		\coordinate (O) at (0,0,0);
		\coordinate (x) at  (1.5*\r,0,-0.5*\r);
		\coordinate (x_k) at  (2.3670, 0.0000, 0.0000); 
		\coordinate (x_k_other) at (1.5*\r+ 0.633, 0, 0);  
		\coordinate  (k) at (1.6899, 0.0000, 1.0697); 
		
		\draw[-stealth] (-1.2*\r,0,0) -- (2.2*\r,0,0) node[anchor=north east,thin] {$x$};
		\draw[-stealth] (0,-2.4*\r,0) -- (0,2.4*\r,0) node[anchor= north west,thin] {$y$};
		\draw[-stealth] (0,0,-0.9*\r) -- (0,0,1.2*\r) node[anchor=south,thin] {$z$};
		\filldraw[fill=blue!20,opacity=0.6,draw,thin] (-1.2*\r,-2*\r,0) -- (-1.2*\r,2*\r,0) -- (2*\r,2*\r,0) -- (2*\r,-2*\r,0) -- cycle node[black] at (-1.1*\r,1.7*\r,0) {$H$};
		\draw[tdplot_screen_coords,thick] (-\r,0,0) arc (180:0:\r) node at (0,0,-0.85*\r) {$B$}; 
		\draw[thick,dashed,canvas is xy plane at z = 0] (\r,0) arc (0:-180:\r);
		\draw[thick,canvas is xy plane at z = 0] (\r,0) arc (0:180:\r);
		
		\shade[
		tdplot_screen_coords,hemispherefront,
		delta angle=-180,
		x radius=\r,
		] (\r,0)
		arc [y radius={\r*sin(20)},start angle=0,delta angle=-180]
		arc [y radius=\r,start angle=-180];
		\fill (O) circle[radius=2pt] node[anchor=south west] {$\mathbf{x}_{t-1}$};
		\end{scope}
		\fill (x_k) circle[radius=1.5pt] node[anchor=north west] {$\mathbf{x}_{t}$};

		\foreach \t in {(1.3379490713381077, -1.4865061510701991, 0.01384721401432265),
			(1.3823733887814025, -1.4378454265476082, 0.14712016634420821),
			(1.425422571177439, -1.3754441624571767, 0.2762677135323172),
			(1.4667081559981636, -1.2994874355839658, 0.40012446799449136),
			(1.505818556123482, -1.2102686095679278, 0.5174556683704469),
			(1.5423212310934784, -1.1082245925021033, 0.6269636932804356),
			(1.575768539737997, -0.9939718534403945, 0.7273056192139911),
			(1.6057075933663065, -0.8683394997633099, 0.8171227800989199),
			(1.6316940520574834, -0.7323950123955696, 0.8950821561724508),
			(1.6533092910592475, -0.5874579698953998, 0.9599278731777428),
			(1.6701797673435592, -0.43509753367212467, 1.0105393020306779),
			(1.6819968472992386, -0.2771107936114069, 1.045990541897716),
			(1.6853671218155768, 0.21020839803545205, 1.0561013654467304),
			(1.67572361671103, 0.3699600046407294, 1.0271708501330905),
			(1.6609218758969908, 0.5248906964273493, 0.9827656276909733),
			(1.6412380156352349, 0.6731197037546932, 0.9237140469057037),
			(1.6170209449616324, 0.8129792830282161, 0.851062834884897),
			(1.588672961753796, 0.9430446903595405, 0.7660188852613885),
			(1.5566299548234632, 1.0621444126041182, 0.6698898644703895),
			(1.5213431455573834, 1.169352738925627, 0.5640294366721494),
			(1.4832637869691456, 1.2639684608086488, 0.44979136090743704),
			(1.4428316367098928, 1.3454842853515403, 0.3284949101296792),
			(1.4004674611385515, 1.4135515449556957, 0.20140238341565375),
			(1.3565693895012525, 1.4679442301095946, 0.06970816850375744)}
		{\fill[fill=red] \t circle[radius=0.7pt];
		}
		
		\draw[black] (O) -- (k);
		\draw[black,-stealth] (-0.8*\r,1.6*\r,0) -- (-0.8*\r,1.6*\r,1.0) node[anchor=west] {$\mathbf{u}$}; 
		
		\coordinate (x_prime) at  (1.5*\r,0,0);
		\begin{scope}[canvas is xy plane at z=0]
		\filldraw[draw=red, fill=red, fill opacity=0.2]
		(x_prime) circle[radius=0.633];  
		\end{scope}

		\draw [black!40!green,semithick] (x) -- (1.1010291346571968,0,\r);  
		\draw[red,semithick] (x) -- (4.898970865342803,0,\r);  
		\foreach \e in {(3.228429081427254, 1.1784493699404006, 2.0), (1.8276020805402082, -0.8802508077051043, 2.0), (3.642845902104619, 0.508735970536522, 2.0), (2.4485633078938944, 0.8472584012817591, 2.0), (3.7029649867711343, 1.6311566470662888, 2.0), (1.627910001178166, -1.232094642115344, 2.0), (4.49318592703429, 0.07509813845707723, 2.0), (4.321828308358434, 1.2071699077636497, 2.0), (2.6496386478170297, 0.034653322598611924, 2.0), (3.1141296539216206, 0.5799566538060195, 2.0), (3.0650001183299174, -0.397049191846528, 2.0), (1.4798145664457494, -0.08616694607726645, 2.0), (1.2831530506610842, 0.07562033776479252, 2.0), (3.383234499990499, 1.032625502378612, 2.0), (4.1928752618149705, -1.4768567767051943, 2.0)
		}
		{
			\draw[red!50,opacity=0.6,draw,thin] (x) -- \e;
		}
		
		\coordinate (cone_edge) at  ({1.5*\r-0.633 * cos(30)},  {0.633 * sin(30)}, 0); 
		
		\fill[
		left color=red!50!white,
		right color=red!50!white,
		middle color=red!60,
		top color=red!20,
		bottom color=red,
		shading=axis,
		opacity=0.25
		]
		(x)--(cone_edge)--(x_k)--(x_k_other) -- cycle;
		
		
		\foreach \p in {(3.076143027142418, 0.3928164566468002, 0.0), (2.6092006935134027, -0.2934169359017014, 0.0), (3.214281967368206, 0.16957865684550733, 0.0), (2.8161877692979647, 0.2824194670939197, 0.0), (3.2343216622570448, 0.5437188823554295, 0.0), (2.5426366670593885, -0.410698214038448, 0.0), (3.497728642344763, 0.02503271281902574, 0.0), (3.440609436119478, 0.4023899692545499, 0.0), (2.8832128826056764, 0.011551107532870641, 0.0), (3.0380432179738737, 0.19331888460200647, 0.0), (3.0216667061099725, -0.1323497306155093, 0.0), (2.493271522148583, -0.028722315359088816, 0.0), (2.427717683553695, 0.02520677925493084, 0.0), (3.1277448333301665, 0.34420850079287063, 0.0), (3.3976250872716567, -0.49228559223506474, 0.0)}
		{
			\fill[fill=red] \p circle[radius=0.7pt] ;
		}
		\fill (x) circle[radius=1.5pt] node[anchor=north west] {$\mathbf{x}$};
		\fill (k) circle[radius=2pt] node[anchor=west] {$\mathbf{k}$};
		\end{tikzpicture}
		\caption{Geometrical explanation of Theorem \ref{theorm:1}. All points on $H$ that are closer to $\bfx$ than $\bfx_t$ are within a red disk, which is the intersection of $H$ and the red cone whose vertex is $\bfx$. Clearly $\bfk$ is the only intersection point of the cone and the hemisphere $B$. Thus, of all the lines intersecting $B$, only the intersection of the tangent line and $H$ is closest to $\bfx$.
		}
		\label{fig:geo_explain}
		\vspace{-0.5cm}
	\end{wrapfigure}
	There is some experimental evidence that the decision boundaries of DNNs are smooth surfaces with the low curvature \cite{fawzi2016robustness,rahmati2020geoda}. Based on this observation, we approximate the local decision boundary with a hyperplane $H$.
	Fig. \ref{fig:3D_tangent_attack} illustrates our problem in three-dimensional space. We will derive our algorithm from $\mathbb{R}^3$ and show that it can be directly extended to higher-dimensional spaces. Lastly, we address the case of curved decision boundaries. 
	
	As described in Section \ref{sec:motivation}, $\mathbf{x}_{t-1}$ denotes the \textit{boundary sample} that already lies on the decision boundary. We create a virtual hemisphere $B$ centered at $\mathbf{x}_{t-1}$ with a radius $R$, which is an estimation of the safe region where we can search for adversarial examples. In the targeted attack, $B$ represents the current estimation of the target class region around $\mathbf{x}_{t-1}$. Ideally, $B$ is small enough to be fully contained in the target class's region, \textit{i.e.,} $\phi(\mathbf{x}^\prime) = 1$ for $\forall \mathbf{x}^\prime\in B$. Then, we need to find a point on $B$ which would produce the minimum distortion when mapped to $H$. In two-dimensional case, this point is the tangent point. However, in $n$-dimensional space where $n\ge 3$, there are infinitely many tangent lines of $B$ passing through $\mathbf{x}$ which create infinitely many tangent points on $B$, shown as the red points in Figs. \ref{fig:geo_explain} and \ref{fig:3D_tangent_attack}. Still, we will show that exactly one tangent point can lead to the minimum distortion when mapping it onto $H$ along the tangent line.
	
	Formally, let $\bfk$ be any point on the surface of $B$, $\mathbf{u}$ be the approximate gradient of $H$ estimated at $\mathbf{x}_{t-1}$, and $\bfx_t$ be the intersection of $H$ and the line passing through $\bfx$ and $\bfk$, we have the following theorem.
	
	\begin{theorem}
		Let $H$, $\bfu$, $\mathbf{k}$, $\mathbf{x}$, and $\mathbf{x}_{t-1}$ be defined above, then the distance $\| \mathbf{x} - \mathbf{x}_t\|$ is minimized if $\mathbf{k}$ is the optimal solution of the following constrained optimization problem:\label{theorm:1}
		\begin{align}
		\arg \max_{\mathbf{k}} \quad & \vecprod{\mathbf{k} - \mathbf{x}_{t-1},\mathbf{u}} \label{eqn:objective}\\
		\rm{s.t.} \quad & \vecprod{\mathbf{k} - \mathbf{x}_{t-1}, \mathbf{x}-\mathbf{k}} = 0, \label{eqn:tangent}\\
		& \| \mathbf{k} - \mathbf{x}_{t-1} \| =  R,  \label{eqn:sphere}\\
		& \vecprod{\mathbf{k} - \mathbf{x}_{t-1}, \mathbf{u}} \geq  0. \label{eqn:valid}
		\end{align}
		
		In particular, the optimal $\bfk$ is in the plane spanned by $\bfu$ and $\bfx-\bfx_{t-1}$.
	\end{theorem}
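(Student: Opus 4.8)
The plan is to reduce the distance objective to a single angular quantity, show that this angle is extremized precisely on a tangent line, and then read off the stated program together with the planarity claim. Throughout I would place $\mathbf{x}_{t-1}$ at the origin and use that it lies on $H$, so that $H=\{\mathbf{y}:\vecprod{\mathbf{y}-\mathbf{x}_{t-1},\bfu}=0\}$ with $\bfu$ the normal. Since $\mathbf{x}_t\in H$, the signed height $\vecprod{\mathbf{x}_t-\bfx,\bfu}=-\vecprod{\bfx-\mathbf{x}_{t-1},\bfu}$ is a positive constant independent of $\bfk$ (it equals $\|\bfu\|$ times the fixed distance from $\bfx$ to $H$). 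Writing $\beta$ for the angle between $\mathbf{x}_t-\bfx$ and $\bfu$ — equivalently between $\bfk-\bfx$ and $\bfu$, since $\bfx$, $\mathbf{x}_t$, $\bfk$ are collinear — this yields $\|\bfx-\mathbf{x}_t\|=\mathrm{const}/\cos\beta$. Hence minimizing $\|\bfx-\mathbf{x}_t\|$ is equivalent to maximizing $\cos\beta=\vecprod{\bfk-\bfx,\bfu}/(\|\bfk-\bfx\|\,\|\bfu\|)$ over $\bfk$ on the surface of $B$.

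Next I would argue that the maximizer lies on a tangent line. The rays from $\bfx$ that meet the sphere form the solid tangent cone with axis $\mathbf{x}_{t-1}-\bfx$; in the geometrically relevant regime $\bfu$ lies outside this cone, so the optimal $\beta$ is strictly positive. If the optimal ray were a secant (an interior generator of the cone), then a small rotation of its direction toward $\bfu$ would keep it meeting the sphere while strictly decreasing $\beta$, contradicting optimality. Therefore the optimal ray is a boundary generator, i.e.\ tangent to the sphere. Tangency says the radius $\bfk-\mathbf{x}_{t-1}$ is orthogonal to the line, i.e.\ $\vecprod{\bfk-\mathbf{x}_{t-1},\bfx-\bfk}=0$, which is constraint \eqref{eqn:tangent}, while $\|\bfk-\mathbf{x}_{t-1}\|=R$ is \eqref{eqn:sphere}.

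I would then restrict to the circle of tangent points. All of them share the same tangent length $\|\bfk-\bfx\|=\sqrt{\|\bfx-\mathbf{x}_{t-1}\|^2-R^2}$, so over this circle maximizing $\cos\beta$ is equivalent to maximizing $\vecprod{\bfk-\bfx,\bfu}=\vecprod{\bfk-\mathbf{x}_{t-1},\bfu}+\mathrm{const}$, hence to maximizing $\vecprod{\bfk-\mathbf{x}_{t-1},\bfu}$; this is exactly objective \eqref{eqn:objective}, and \eqref{eqn:valid} selects the tangent point on the adversarial side (positive numerator). For the planarity claim I would apply the Lagrange conditions to this program: with $\mathbf{w}=\bfk-\mathbf{x}_{t-1}$ and $\mathbf{a}=\bfx-\mathbf{x}_{t-1}$, stationarity of $\vecprod{\mathbf{w},\bfu}$ under $\vecprod{\mathbf{w},\mathbf{a}}-\|\mathbf{w}\|^2=0$ and $\|\mathbf{w}\|^2=R^2$ gives $\bfu=\lambda_1(\mathbf{a}-2\mathbf{w})+2\lambda_2\mathbf{w}$, so $\mathbf{w}\in\operatorname{span}\{\bfu,\mathbf{a}\}$ and $\bfk$ lies in the plane through $\mathbf{x}_{t-1}$ spanned by $\bfu$ and $\bfx-\mathbf{x}_{t-1}$.

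The main obstacle is the second step: turning the intuition that ``tangent is optimal'' into a rigorous statement over the entire hemisphere surface, including the perturbation argument ruling out secants and the explicit assumption that $\bfu$ falls outside the tangent cone. Without the latter the unconstrained minimizer would be the degenerate ray along $\bfu$, the tangency constraint would fail to bind, and the stated program would no longer describe the true minimizer; I would therefore flag this configuration as the regime in which the theorem holds. Once this reduction is secured, the remaining steps are routine computations.
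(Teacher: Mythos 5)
Your proposal is correct and establishes the theorem under exactly the hypothesis the paper isolates, but by a genuinely different route. The paper's proof (appendix) forces planarity \emph{first}: since $B'=B\cap H_{\geq 0}$ is convex and symmetric about $V=\mathrm{span}\{\bfx,\bfu\}$, the projection $\Pi_V(\bfk)$ remains feasible and $f(\Pi_V(\bfk))=\|\Pi_V(\mathbf{y}_{\bfk}-\bfx)\|\leq\|\mathbf{y}_{\bfk}-\bfx\|=f(\bfk)$, which collapses the problem to the plane $V$; it then finishes with an explicit two-case angle computation, $f(\mathbf{k}')=|\vecprod{\bfx,\bfu}|/\cos(\theta_1\mp\theta_2)$, showing the in-plane tangent point maximizes $\theta_2$, and repeats the parallel argument for the linear objective $\vecprod{\bfk,\bfu}$ to obtain the argmin/argmax chain of Lemma \ref{lemma} (stated over the solid hemisphere as well as its surface). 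You work in the opposite order, directly in $\mathbb{R}^n$: the identity $\|\bfx-\mathbf{x}_t\|=\mathrm{const}/\cos\beta$ (the height of $\bfx$ below $H$ is fixed, and the objective factors through the ray direction) replaces the paper's case analysis; the tangent-cone/rotation argument localizes the optimum on the tangent circle; the constant tangent length $\sqrt{\|\bfx-\bfx_{t-1}\|^2-R^2}$ converts $\cos\beta$-maximization into the linear objective \eqref{eqn:objective}; and planarity is recovered last via Lagrange conditions instead of symmetry. Your flagged regime --- $\bfu$ outside the solid tangent cone from $\bfx$ --- is precisely the paper's Assumption \ref{asspt:x-position}, equivalent to $\|\Pi_H(\bfx-\bfx_{t-1})\|\geq R$, which Algorithm \ref{alg:tangent_attack} enforces by halving $R$, so you identified the correct hypothesis. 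What each approach buys: yours is shorter, avoids the two-sided case analysis, and makes the role of the assumption transparent; the paper's projection trick yields planarity with no constraint-qualification caveats (your Lagrange step tacitly needs $\bfu\not\parallel\bfx-\bfx_{t-1}$ and $\bfx\notin B$ so the stationarity system is nondegenerate --- both follow from the assumption, but should be said) and its symmetric argument extends the conclusion to the solid feasible set. Two points to tighten, consistent with the obstacle you yourself name: the rotation-toward-$\bfu$ perturbation must respect constraint \eqref{eqn:valid}, so either note that both intersections of a secant ray yield the same $\mathbf{x}_t$ (the objective lives on ray directions) and that the upper intersection of the perturbed ray stays in $H_{\geq 0}$ by continuity, or rephrase the step without perturbations: the feasible ray directions form a closed geodesic ball about $\bfx_{t-1}-\bfx$, $\cos\beta$ is maximized at the direction of this ball nearest to $\bfu$, and when $\bfu$ lies outside the ball that nearest direction is the unique boundary generator in the plane of the cone axis and $\bfu$ --- a formulation that also delivers the planarity claim without invoking Lagrange multipliers at all.
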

	The objective function of Eq. \eqref{eqn:objective} is to maximize the projection of the vector $\mathbf{k} - \mathbf{x}_{t-1}$ onto $\mathbf{u}$, which is equivalent to finding $\mathbf{k}$ that is farthest away from $H$. The first constraint ensures that $\mathbf{k}$ is a tangent point. The second constraint indicates $\mathbf{k}$ is on the surface of $B$. The last constraint states that $\mathbf{k}$ cannot appear on the same side of $H$ as $\mathbf{x}$, which is always satisfied if $\| \Pi_H (\mathbf{x} - \mathbf{x}_{t-1}) \| \ge R$, where the notation $\Pi_H:\mathbb{R}^n\mapsto H$ denotes the orthogonal projection from $\mathbb{R}^n$ onto the hyperplane $H$. If there is no feasible solution, then our algorithm (Algorithm \ref{alg:tangent_attack}) reduces the radius $R$ to guarantee that the last constraint is always satisfied. The formal proof of Theorem \ref{theorm:1} is presented in the appendix. Here, we refer the readers to Fig. \ref{fig:geo_explain} for an intuitive geometrical explanation. Eq.~\eqref{eqn:objective} is computationally expensive to solve in the high-dimensional space. Fortunately, we show there actually exists a closed-form solution.
	
	\begin{figure}[t]
		\centering
		\def\r{2.5}
		\tdplotsetmaincoords{75}{150}
		\tikzset{zxplane/.style={canvas is zx plane at y=#1}}
		\tikzset{yxplane/.style={canvas is yx plane at z=#1}}
		\tikzset{yzplane/.style={canvas is yz plane at x=#1}}
		\begin{tikzpicture}[tdplot_main_coords,scale=0.8]
		\begin{scope}[thin]
		\draw[-stealth] (-2.4*\r,0,0) -- (2.4*\r,0,0) node[anchor=north east,thin] {$x$};
		\draw[-stealth] (0,0,-\r) -- (0,0,1.3*\r) node[anchor=south,thin] {$z$};
		\coordinate (O) at (0,0,0);
		\coordinate (x) at  (1.5*\r,0,-0.5*\r);
		\coordinate (x_prime) at (0,0,-0.5*\r);
		\fill (x) circle[radius=1.5pt] node[anchor=north west] {$\mathbf{x}$};
		\coordinate (x_k) at  (2.9588, 0.0000, 0.0000);
		\draw[semithick] (O) -- (x);
		\draw[densely dashed](x_prime) -- (x); 
		\zjbj[4pt]{O}{x_prime}{x};  
		\draw (x) -- (x_k);
		\filldraw[fill=blue!20,opacity=0.6,draw,thin] (-2*\r,-2*\r,0) -- (-2*\r,2*\r,0) -- (2*\r,2*\r,0) -- (2*\r,-2*\r,0) -- cycle node[black] at (-1.7*\r,1.7*\r,0) {$H$};
		\draw[thin] (-2*\r,0,-\r) -- (-2*\r,0,1.3*\r) -- (2*\r,0,1.3*\r) -- (2*\r,0,-\r) -- cycle node[black] at (-1.7*\r,0,1.1*\r) {$V$};
		\draw[tdplot_screen_coords,thick] (-\r,0,0) arc (180:0:\r) node at (0,0,-0.85*\r) {$B$}; 
		
		\draw[thick,dashed,canvas is xy plane at z = 0] (\r,0) arc (0:-180:\r);
		\draw[thick,canvas is xy plane at z = 0] (\r,0) arc (0:180:\r);
		
		\shade[
		tdplot_screen_coords,hemispherefront,
		delta angle=-180,
		x radius=\r,
		] (\r,0)
		arc [y radius={\r*sin(15)},start angle=0,delta angle=-180]
		arc [y radius=\r,start angle=-180];
		
		\fill (O) circle[radius=2pt] node[anchor=south west] {$\mathbf{x}_{t-1}$};
		
		\end{scope}

		\foreach \t in {
			(1.672436339172634,-1.8581326888377487,0.01730901751790337),
			(1.727966735976753,-1.7973067831845102,0.1839002079302603),
			(1.7817782139717986,-1.7193052030714706,0.34533464191539653),
			(1.833385194997704,-1.6243592944799572,0.5001555849931143),
			(1.8822731951543525,-1.5128357619599095,0.6468195854630585),
			(1.9279015388668483,-1.3852807406276293,0.7837046166005444),
			(1.969710674672496,-1.2424648168004933,0.9091320240174889),
			(2.007134491707883,-1.0854243747041374,1.02140347512365),
			(2.0396175650718544,-0.915493765494462,1.1188526952155635),
			(2.0666366138240595,-0.7343224623692498,1.1999098414721785),
			(2.0877247091794486,-0.5438719170901557,1.263174127538347),
			(2.1024960591240482,-0.34638849201425864,1.307488177372145),
			(2.1067089022694705,0.262760497544315,1.320126706808413),
			(2.0946545208887875,0.46245000580091183,1.2839635626663635),
			(2.0761523448712387,0.6561133705341866,1.2284570346137167),
			(2.051547519544043,0.8413996296933663,1.1546425586321296),
			(2.02127618120204,1.01622410378527,1.0638285436061214),
			(1.9858412021922451,1.1788058629494256,0.9575236065767356),
			(1.9457874435293288,1.3276805157551475,0.8373623305879869),
			(1.9016789319467287,1.4616909236570337,0.7050367958401867),
			(1.8540797337114319,1.5799605760108109,0.5622392011342965),
			(1.8035395458873662,1.6818553566894257,0.4106186376620991),
			(1.7505843264231888,1.7669394311946198,0.2517529792695673),
			(1.6957117368765653,1.8349302876369933,0.08713521062969687)}
		{\fill[fill=red] \t circle[radius=0.7pt];
		}
		\fill (x_k) circle[radius=1.5pt] node[anchor=north west] {$\mathbf{x}_{t}$};
		\coordinate  (k) at (2.1124, 0.0000, 1.3371);
		\fill (k) circle[radius=1.5pt] node[anchor=south west] {$\mathbf{k}$};
		\draw[black!40!green,-stealth] (k) -- (x_k);
		\draw[black] (O) -- node [anchor=south west]{$R$} (k);
		\coordinate(k_prime) at (2.1124, 0.0000, 0);
		\fill (k_prime) circle[radius=1.5pt] node at (1.75, 0.0000, 0.28){$\mathbf{k}^\prime$};
		\fill (x_prime) circle[radius=1.5pt] node [anchor=west]{$\mathbf{x}^\prime$};  
		\draw[red,densely dashed] (k) -- node[right,text=red] {$h$} (k_prime);
		\zjbjcolor[4pt]{O}{k_prime}{k};  
		\draw[black,-stealth] (-1.6*\r,1.6*\r,0) -- (-1.6*\r,1.6*\r,1.0) node[anchor=south west] {$\mathbf{u}$}; 
		\pic [draw=red,text=red, "$\gamma$", angle eccentricity=1.5,angle radius=0.33cm] {angle = k--O--k_prime};
		\pic [draw=blue,text=blue, "$\beta$", angle eccentricity=1.25,angle radius=0.64cm] {angle = k--O--x};
		\pic [draw=black,text=black, "$\alpha$", angle eccentricity=1.4,angle radius=0.36cm] {angle = k_prime--O--x};
		\end{tikzpicture}
		\caption{The illustration of the optimal tangent point $\mathbf{k}$ for the flat decision boundary. The tangent point $\bfk$ is on the surface of a virtual hemisphere $B$ with a radius $R$ centered at the adversarial example $\mathbf{x}_{t-1}$. $\mathbf{x}$ is the original image, $\mathbf{x}_t$ is the intersection point of the tangent line and the decision hyperplane $H$, $\mathbf{k}^\prime$ and $\mathbf{x}^\prime$ are the orthogonal projections of $\mathbf{k}$ and $\mathbf{x}$ onto $H$ and $z$ axis, respectively.}
		\label{fig:3D_tangent_attack}
	\end{figure}
	
	\subsection{Closed-Form Solution of the Optimal Tangent Point}
	
	The main intuition of the derivation is illustrated in Fig. \ref{fig:3D_tangent_attack}, which shows an example in $\mathbb{R}^3$. For ease of presentation, we move $\bfx_{t-1}$ to $\mathbf{0}$. The known variables are $\mathbf{x}$, $\mathbf{x}_{t-1}$, the unit normal vector $\mathbf{u}$ of hyperplane $H$, and the radius $R$. We need to solve for the unknown $\mathbf{k}$. Let $V = \text{span}(\{\mathbf{x},\mathbf{u}\})$ be the plane spanned by $\mathbf{x}$ and $\mathbf{u}$. According to Theorem~\ref{theorm:1}, we know $\mathbf{k} \in V$. Let us denote the angle between $\mathbf{x}$ and $H$ as $\alpha$, the angle between $\mathbf{x}$ and $\mathbf{k}$ as $\beta$, and the angle between $\mathbf{k}$ and $H$ as $\gamma$. Then, $\beta = \alpha + \gamma$ because all three points $\bfx$, $\bfk$, and $\bfx_{t-1}$ are on the same plane $V$. Because the angle between $\mathbf{x}$ and $\mathbf{u}$ is $\pi\mathbin{/} 2 + \alpha$, we have
	\begin{equation}
	\vecprod{\mathbf{x}, \mathbf{u}} = \| \mathbf{x} \| \cdot \| \mathbf{u} \| \cdot \cos (\frac{\pi}{2} + \alpha) = \| \mathbf{x} \| \cdot \| \mathbf{u} \| \cdot \left(-\sin \alpha\right).
	\label{eqn:x_u_inner_prod}
	\end{equation}
	Then we have
	$\sin \alpha = -\frac{\vecprod{\mathbf{x}, \mathbf{u}}}{\| \mathbf{x} \| \cdot \| \mathbf{u} \|}$
	and
	$\cos \alpha = \sqrt{1- \sin^2 \alpha} = \frac{\sqrt{\|\mathbf{x} \|^2 \cdot \|\mathbf{u} \|^2 - \vecprod{\mathbf{x}, \mathbf{u}}^2}}{\| \mathbf{x} \| \cdot \| \mathbf{u} \|}$. By the constraint~\eqref{eqn:tangent}, $\mathbf{x}-\mathbf{k}$ is orthogonal to $\mathbf{k}$. Thus, we have $\cos \beta = R \mathbin{/} \| \mathbf{x} \|$ and $\sin \beta = \sqrt{1-\cos^2 \beta} = \sqrt{\| \mathbf{x} \|^2 - R^2}\mathbin{/} \| \mathbf{x} \|$.
	Then $\sin \gamma$ and $\cos \gamma$ can be derived as functions of $\alpha$ and $\beta$ from basic facts of trigonometric functions:  
	\begin{equation}
	\begin{aligned}
	\sin \gamma &= \sin (\beta- \alpha) = \sin\beta \cos\alpha - \cos\beta \sin\alpha, \\
	\cos \gamma &= \cos (\beta- \alpha) = \cos\beta \cos\alpha + \sin\beta \sin\alpha.
	\end{aligned}
	\label{eqn:sin_cos_gamma}
	\end{equation}
	Now, let $\mathbf{k}^\prime \in H$ be the orthogonal projection of $\mathbf{k}$ onto the plane $H$. The distance between $\bfk$ and $\bfk'$ is denoted as $h$ (Fig. \ref{fig:3D_tangent_attack}). Then $h = R \cdot \sin\gamma = R \cdot (\sin\beta \cos\alpha - \cos\beta \sin\alpha)$.
	
	
	
	To derive $\mathbf{k}^\prime$, let us denote $\mathbf{x}^\prime$ as the orthogonal projection of $\mathbf{x}$ onto $z$ axis. So we have $\mathbf{x}^\prime = \vecprod{\mathbf{x}, -\mathbf{u}} \cdot (-\mathbf{u}) \mathbin{/} \|\mathbf{u} \|^2 =\vecprod{\mathbf{x},\mathbf{u}} \cdot \mathbf{u} \mathbin{/} \|\mathbf{u} \|^2$. Then, because $\mathbf{k}^\prime$ and $\mathbf{x} - \mathbf{x}^\prime$ are on the same direction, we have
	\begin{equation}
	\frac{\mathbf{k}^\prime}{\| \mathbf{k}^\prime\|} = \frac{\mathbf{x} - \mathbf{x}^\prime}{\| \mathbf{x} - \mathbf{x}^\prime \|}.
	\label{eqn:normalized_k_prime}
	\end{equation}
	Now, $\| \mathbf{k}^\prime\| = R \cdot \cos \gamma$ and $\mathbf{x}^\prime = \vecprod{\mathbf{x},\mathbf{u}} \cdot \mathbf{u} \mathbin{/} \|\mathbf{u} \|^2 $ are plugged into Eq. \eqref{eqn:normalized_k_prime}, and $\mathbf{k}^\prime$ is obtained as
	\begin{equation}
	\mathbf{k}^\prime = \frac{\mathbf{x} - \mathbf{x}^\prime}{\| \mathbf{x} - \mathbf{x}^\prime \|} \cdot \| \mathbf{k}^\prime\| = \frac{\mathbf{x} - \vecprod{\mathbf{x},\mathbf{u}} \cdot \mathbf{u} \mathbin{/} \|\mathbf{u} \|^2 }{\| \mathbf{x} - \vecprod{\mathbf{x},\mathbf{u}} \cdot \mathbf{u} \mathbin{/} \|\mathbf{u} \|^2 \|} \cdot R \cdot \cos \gamma.
	\label{eqn:k_prime}
	\end{equation}
	Therefore, $\mathbf{k}$ can be derived as
	\begin{equation}
	\mathbf{k} = \mathbf{k}^\prime + h \cdot \mathbf{u} = \frac{\mathbf{x} - \vecprod{\mathbf{x},\mathbf{u}} \cdot \mathbf{u} \mathbin{/} \|\mathbf{u} \|^2 }{\| \mathbf{x} - \vecprod{\mathbf{x},\mathbf{u}} \cdot \mathbf{u} \mathbin{/} \|\mathbf{u} \|^2 \|} \cdot R \cdot \cos \gamma + h \cdot \mathbf{u}.
	\label{eqn:k}
	\end{equation}
	Finally, because $\mathbf{x}_{t-1}$ has been moved to the origin, we need to move $\mathbf{k}$ back by adding $\mathbf{x}_{t-1}$. 
	
	We remark that although the above derivation is illustrated in $\mathbb{R}^3$, it can be directly applied to higher dimensions. The reason is Theorem~\ref{theorm:1}, which essentially reduces any dimension space to $\mathbb{R}^2$: to find the optimal $\bfk$, we only need to focus on the plane $V$ spanned by $\bfu$ and $\bfx-\bfx_{t-1}$.
	
	\subsection{Generalized Tangent Attack}
	\begin{figure}[t]
		\centering
		\def\r{2.0}
		\tikzset{
			partial ellipse/.style args={#1:#2:#3}{
				insert path={+ (#1:#3) arc (#1:#2:#3)}
			}
		}
		\begin{minipage}[t]{.49\textwidth}
			\centering
			
			\begin{tikzpicture}
			\def\lr{9.0 }
			\def\sr{2.0 }
			\begin{axis}[
			hide axis,
			domain = -4:4,
			zmax   = 12,
			colormap/cool,
			ymin=-5,ymax=5,xmax=5,xmin=-5,zmin=-3,view={180}{25},
			]
			\draw[thick,canvas is xz plane at y = 0] (0,0) [partial ellipse=0:180:\sr and \lr];
			\shade[
			canvas is xz plane at y = 0,ellipsoidfront,
			delta angle=-180,
			x radius=\sr,
			] (\sr,0)
			arc [y radius={\sr * sin(45)},start angle=0,delta angle=-180]	 
			(0,0) [partial ellipse=0:180:\sr and \lr];
			\draw[thick,dashed,canvas is xy plane at z = 0] (\sr,0) arc (0:-180:\sr);
			\draw[thick,canvas is xy plane at z = 0] (\sr,0) arc (0:180:\sr);
			\end{axis}
			\begin{axis}[
			hide axis,
			domain = -4:4,
			zmax   = 12,
			colormap/cool,
			ymin=-5,ymax=5,xmax=5,xmin=-5,zmin=-3,view={150}{25},
			]
			\def\lr{9.0 }
			\def\sr{3.0 }
			\coordinate (x) at  (3.2,0,-2.5);
			\coordinate (O) at (0,0,0);
			\coordinate (k_prime) at (1.4725863759467117, 0.0, 0); 
			\coordinate (k) at (1.4725863759467117, 0.0, 6.428295171061992);
			\draw[-stealth] (-4,0,0) -- (4,0,0) node[anchor=north east,thin] {$x$};
			\draw[-stealth] (0,0,-4) -- (0,0,12) node[anchor=south,thin] {$z$};
			
			\draw[red,thick] (O) -- (k_prime);
			\fill (O) circle[radius=2pt] node[anchor=north west] {$\mathbf{x}_{t-1}$};
			\draw[red,densely dashed] (k) -- node[right,text=red] {$h$} (k_prime);
			\zjbjcolor[4pt]{O}{k_prime}{k};  
			
			\draw[blue] (O)-- (-1.3,2.7,0);
			\node[text=blue] (S) at (-1.3,1.1,0) {$S$};
			\node[text=blue] (L) at (-0.5,0,4.0) {$L$};
			\draw[blue] (O) -- (0,0,\lr);
			\draw (k) -- (x);
			\draw[black!60!green] (x) -- (O);
			\addplot3 [samples=30,surf,fill=blue!20,opacity=0.3] {(x^2+y^2)/3.5};
			\draw[thin] (-3.5,0,-3) -- (-3.5,0,11) -- (3.5,0,11) -- (3.5,0,-3) -- cycle node[black] at (-3,0,10) {$V$};

			\fill (x) circle[radius=1.5pt] node[anchor=north west] {$\mathbf{x}$};
			
			\fill (k)circle[radius=1.5pt] node[anchor=south west] {$\mathbf{k}$};
			
			\fill (k_prime)circle[radius=1.5pt];
			
			\coordinate (w) at (0.7,0,-0.6);
			\coordinate (z) at (0,0,-1);
			\pic [draw=black!60!green,text=black!60!green, "$\theta$", angle eccentricity=1.8,angle radius=0.2cm] {angle = x--O--z};
			\node (H) at (-4.2,0,3.5) {$H$};
			\end{axis}
			\end{tikzpicture}
			\subcaption{3D view of Generalized Tangent Attack}
			\label{fig:3D_generalized_tangent_attack}
		\end{minipage}
		\begin{minipage}[t]{.49\textwidth}
			\centering
			\begin{tikzpicture}
			\newcommand{\boundellipse}[3]
			{(#1) ellipse [x radius=#2,y radius=#3]
			}
			\newcommand\zjbjblue[4][7pt]{%
				\draw[color=blue] let \p1=(#2),\p2=(#4),\p0=(#3) in
				(#3)++({atan2(\y1-\y0,\x1-\x0)}:#1)
				--++({atan2(\y2-\y0,\x2-\x0)}:#1)
				--++({atan2(\y1-\y0,\x1-\x0)}:-#1);
			}
			
			\draw[-stealth] (5,3) -- (2,3) node[anchor=east] {$\mathbf{v}$};
			\coordinate (x) at  (2,2);
			\coordinate (O) at  (4,3);
			\coordinate (x_projection) at (1,3);
			\draw[black!60!green](x) -- (O);
			\fill (x) circle[radius=1.5pt] node[anchor=north] {$\mathbf{x} (x_0, z_0)$};
			
			
			
			\draw[thick] (O) [partial ellipse=0:180:1 and 2];
			\draw[densely dotted,thick] (O) [partial ellipse=0:-180:1 and 2];
			

			\coordinate (H) at (3.1339745962155616,3.5);
			\draw[-stealth] (4,2) -- (4,5.5) node[anchor=south] {$\mathbf{u}$};
			\draw[decorate,decoration={brace,raise=0.8pt,amplitude=3pt,mirror},draw=blue] (O) -- (4,5);
			\draw[decorate,decoration={brace,raise=0.3pt,amplitude=3pt,mirror},draw=blue] (O) -- (3,3);
			\coordinate (tangent) at (3.31732051, 4.46143589);
			\coordinate (another_tangent) at (3.74150302, 1.06797587);
			\fill (tangent) circle[radius=1.5pt] node[anchor=south east] {$\mathbf{k} (x_k, z_k)$};
			\fill (another_tangent) circle[radius=1.5pt] node[anchor=north east] {$\mathbf{k}^\prime (x_{k^\prime}, z_{k^\prime})$};
			\draw(x) -- (tangent);
			
			\node[blue] at (4.3,4) {$L$};
			\node[blue] at (3.5,3.3) {$S$};
			\coordinate (B) at (4,2);
			\pic [draw=black!60!green,text=black!60!green, "$\theta$", angle eccentricity=1.5,angle radius=0.3cm] {angle = x--O--B};
			\fill (O) circle[radius=1.5pt];
			\end{tikzpicture}
			\subcaption{2D plane $V$ spanned by $\bfx$ and $\bfu$.}
			\label{fig:2D_plane}
		\end{minipage}
		\caption{Illustration of the derivation of Generalized Tangent Attack, which replaces the hemisphere $B$ with a semi-ellipsoid to increase the height of $\mathbf{k}$ to adapt to curved decision boundaries.}
		\label{fig:ellipsoid}
	\end{figure}
	When the local decision boundary is not flat enough, the boundary point obtained via the tangent line may not be the optimal, as shown in Fig.~\ref{fig:3D_generalized_tangent_attack}. A simple solution is to simply continue halving the radius $R$ of the hemisphere: as long as $R$ becomes small enough, the local flatness can always be obtained. However, too small a radius will reduce the convergence rate of our algorithm, because the distance between $\mathbf{x}_t$ and $\mathbf{x}_{t-1}$ is proportional to $R$. 
	Therefore, when the classification decision boundary is a curved surface, the attack algorithm should change the shape of hemisphere rather than simply reducing $R$. Based on this idea, we propose the Generalized Tangent Attack (G-TA).
	
	First, although the shape of the decision boundary can be very complex in a high-dimensional space, the important thing for our algorithm is only the situation in the two-dimensional plane spanned by $\mathbf{x}$ and $\mathbf{u}$.
	In particular, if the decision boundary is ``downward'' curved (as opposed to the example in Fig.~\ref{fig:3D_generalized_tangent_attack}), then searching along the tangent line is still a better approach than HSJA's solution. Thus, the only 
	``bad case'' we have to deal with is when the decision boundary is ``upward'' curved and has a large curvature, as shown in Fig.~\ref{fig:3D_generalized_tangent_attack}. 
	
	According to Theorem \ref{theorm:1}, we only need to focus on the plane $V$ spanned by $\mathbf{x}$ and $\mathbf{u}$, as shown in Fig.~\ref{fig:2D_plane}.
	Now, $\mathbf{u}$ and $\mathbf{v}:= (\mathbf{x} - \vecprod{\mathbf{x},\mathbf{u}} \cdot \mathbf{u} \mathbin{/} \|\mathbf{u} \|^2 )\mathbin{/} \|\mathbf{x} - \vecprod{\mathbf{x},\mathbf{u}} \cdot \mathbf{u} \mathbin{/} \|\mathbf{u} \|^2 \|$ form an orthogonal basis of the plane $V$, then $\mathbf{x}$ can be identified with coordinates $(x_0,z_0)$, \textit{i.e.,} $\mathbf{x}=x_0\mathbf{v}+z_0\mathbf{u}$.
	Let $\theta$ denote the angle between the vector $\mathbf{x}$ and the vector $-\mathbf{u}$, \textit{i.e.,} $\theta = \arccos \left(\frac{\vecprod{\mathbf{x},-\mathbf{u}}}{\|\mathbf{x}\| \cdot \| \mathbf{u} \|}\right)$. Then $(x_0, z_0) = (\| \bfx \| \cdot \sin \theta, -\| \bfx \| \cdot \cos \theta)$. 	
	Consider the projection of the ellipsoid on $V$ (which is an ellipse), we denote $L$ as its radius along the direction of $\bfu$, and $S$ as its radius along the direction of $\bfv$. Because the optimal tangent point $\mathbf{k}$ lies in the plane $V$, $\mathbf{k}$ can also be identified as $\bfk=x_k\bfv+z_k\bfu$, and we only need to solve for the unknown $(x_k,z_k)$.

	The ellipse is characterized by the equation $x^2 \mathbin{/} S^2 + z^2 \mathbin{/} L^2 = 1$, thus the tangent point $\mathbf{k}$ satisfies $x_k^2 \mathbin{/} S^2 + z_k^2 \mathbin{/} L^2 = 1$. Now we view $z$ as a function of $x$, and take the derivative w.r.t. $x$ at both sides of the equation to get the following formula:
	\begin{equation}
	\frac{2 x_k}{S^2} + \left.\frac{2 z_k}{L^2} \cdot \frac{d z}{d x} \right|_{x=x_k} = 0.
	\label{eqn:both_derivative}
	\end{equation}
	Thus, we have the slope of tangent line at $\mathbf{k}$ be $\left.\frac{d z}{d x} \right|_{x=x_k} = -\frac{x_k L^2}{z_k S^2}$.
	Therefore, the tangent line can be written as $z-z_k = - \frac{x_k L^2}{z_k S^2} (x-x_k)$.
	Since the tangent line passes through $\bfx$, we know $z_0-z_k = -\frac{x_k L^2}{z_k S^2} (x_0 - x_k)$.
	In summary, we can obtain the following system of equations:
	\begin{equation}
	\left\{ \begin{lgathered} 
	L^2 x_k^2 + S^2 z_k ^2  - x_k x_0 L^2 - z_0 z_k S^2 = 0, \\
	\frac{x^2_k}{S^2} + \frac{z^2_k}{L^2} = 1.
	\end{lgathered}\right.
	\label{eqn:eqn_system}
	\end{equation}
	In Eq. \eqref{eqn:eqn_system}, the known variables are $L$, $S$, $x_0$ and $z_0$, and the unknown variables that we need to solve for are $x_k$ and $z_k$. 
	In general, there are two solutions for Eq. \eqref{eqn:eqn_system}, \textit{i.e.,} $\mathbf{k}$ and $\mathbf{k}'$ depicted in Fig.~\ref{fig:2D_plane}.
	Apparently, the solution of the optimal tangent point should satisfy $z_k > 0$, so the one of two solutions in which $z_k > 0$ should be picked:
	\begin{equation}
	\left\{\begin{aligned}
	x_k &= \frac{S^2 \left(L^2 - z_0 \cdot \frac{L^2 S^2 z_0 + L^2 x_0 \sqrt{-L^2 S^2 +L^2 x_0^2 + S^2 z_0^2}}{L^2 x_0^2 + S^2 z_0^2}\right)}{L^2 \cdot x_0}, \\
	z_k &= \frac{L^2 S^2 z_0 + L^2 x_0 \sqrt{-L^2 S^2 +L^2 x_0^2 + S^2 z_0^2}}{L^2 x_0^2 + S^2 z_0^2}. 
	\end{aligned}\right.
	\label{eqn:ellipsoid_solution}
	\end{equation}
	Finally, the optimal tangent point $\mathbf{k}$ is obtained via $\mathbf{k}= \vert x_k \vert \cdot \frac{\mathbf{x} - \vecprod{\mathbf{x},\mathbf{u}} \cdot \mathbf{u} \mathbin{/} \|\mathbf{u} \|^2 }{\| \mathbf{x} - \vecprod{\mathbf{x},\mathbf{u}} \cdot \mathbf{u} \mathbin{/} \|\mathbf{u} \|^2 \|} +z_k \cdot \mathbf{u}$.
	In the implementation, the value of $L$ is determined in the same way as the radius $R$ in TA (the hemisphere version). So we fix $L=R$, and use a hyperparameter $r =L/S$ to control the value of $S$. Imagine that in the case of $\mathbb{R}^3$, the semi-ellipsoid becomes ``slender'' by setting $r>1$, thereby adapting to the decision boundary of curved surface while preserving a relatively large step size.
	
	
	\subsection{The Complete Algorithm}
	TA (the hemisphere version) and G-TA (the semi-ellipsoid version) can be combined into one algorithm process, which is shown in Algorithm \ref{alg:tangent_attack}. It first performs a binary search to map the initial sample $\tilde{\mathbf{x}}_0$ to the decision boundary.
	Note that the binary search step always maps any $\mathbf{x}_\text{adv}$ to the adversarial side of $H$ that satisfies $\phi(\mathbf{x}_\text{adv})=1$, hence the attack success rate is always 100\%.
	Then, it performs a \textit{for} loop of $T$ iterations to find the adversarial example that is close to $\mathbf{x}$. In the first iteration, we sample $B_0$ probes around the boundary sample to estimate the gradient, which is increased to $B_0\sqrt{t}$ at the $t$-th iteration. 
	This is because the error of gradient estimation in the later iterations has a greater impact on the attack performance, so using more samples can reduce the estimation error. Then, a \textit{while} loop is performed to determine a reasonable radius $R$ by repeatedly halving the radius until the tangent point $\mathbf{k}$ is in the adversarial region. Finally, Algorithm \ref{alg:tangent_attack} uses the binary search method to map $\mathbf{k}$ back to the classification decision boundary to end this iteration.
	\begin{algorithm}[!ht]
		\caption{Tangent Attack}
		\begin{algorithmic}
			\State {\bfseries{Input:}} benign image $\mathbf{x}$, attack success indicator function $\phi(\cdot)$ defined in Eq. \eqref{eqn:phi}, initial batch size $B_0$, iteration $T$, mode $m \in$ \{semi-ellipsoid, hemisphere\}, radius ratio $r$.
			\State Initialize $\tilde{\mathbf{x}}_0$ that satisfies $\phi(\tilde{\mathbf{x}}_0) = 1$;
			\State $\mathbf{x}_0 \gets \text{BinarySearch}(\tilde{\mathbf{x}}_{0},\mathbf{x},\phi)$; \Comment{boundary search}
			\State $d_0 = \| \mathbf{x}_0 - \mathbf{x} \|$;
			\For{$t$ {\bfseries{in}} $1,\dots,T$}
			\State Sample $B_t \gets B_0 \sqrt{t}$ random vectors to estimate the gradient $\mathbf{u}$; \label{line:B_t}
			\State Initialize $R \gets d_{t-1}/\sqrt{t}$; \Comment{the initial radius}
			\While{\textbf{true}} 
			\State Compute the optimal tangent point $\bfk$ based on Eq.~\eqref{eqn:k} \textbf{if} $m=\text{hemisphere}$ \textbf{else} Eq.~\eqref{eqn:ellipsoid_solution};
			\State $R \gets \frac{R}{2}$; \Comment{search the radius, and we set $L=R, S=\frac{L}{r}$ if $m=\text{semi-ellipsoid}$}
			\If{$\phi(\mathbf{k}) = 1$}
			\State \textbf{break}; 
			\EndIf
			\EndWhile
			\State $\mathbf{k} \gets \text{Clip}(\mathbf{k}, 0, 1)$;
			\State $\mathbf{x}_t \gets \text{BinarySearch}(\mathbf{k},\mathbf{x},\phi)$; \Comment{boundary search}
			\State $d_t = \| \mathbf{x}_t - \mathbf{x} \|$;
			\EndFor
		\end{algorithmic}
		\label{alg:tangent_attack}
	\end{algorithm}
	
	\section{Experiment}
	\subsection{Experimental Setting}
	\label{sec:expr_setting}
	\textbf{Datasets.} TA and G-TA are evaluated on two datasets, namely CIFAR-10 and ImageNet with the image resolutions of $32\times 32\times 3$ and $299\times 299\times 3$, respectively. We randomly select \nn{1000} correctly classified images from their validation sets for experiments. 
	
	\textbf{Method Setting.} The initial batch size $B_0$ is set to 100, which means the algorithm samples 100 probes for estimating a gradient at the first iteration. The threshold $\gamma$ that controls the termination of the binary search is set to 1.0 in the CIFAR-10 dataset and \nn{1000} in the ImageNet dataset. The radius ratio $r$ is set to 1.5 in the CIFAR-10 dataset and 1.1 in the ImageNet dataset. Besides, we also set $r$ to 1.5 when attacking defense models. In targeted attacks, the target class label is set to $y_{adv} = (y + 1) \mod C$, where $y$ is the true label, and $C$ is the number of classes.
	
	\textbf{Compared Methods.} 
	The advantage of our method is that it supports all types of attacks, including both untargeted and targeted attacks under $\ell_2$ norm and $\ell_\infty$ norm constraints.
	Therefore, for complete and fair comparisons, we select the compared methods that support both untargeted and targeted attacks with state-of-the-art performance, including Boundary Attack (BA) \cite{brendel2018decisionbased}, Sign-OPT \cite{cheng2019sign}, SVM-OPT \cite{cheng2019sign}, and HopSkipJumpAttack (HSJA) \cite{chen2019hopskipjumpattack}. HSJA is adopted as the baseline, whose hyperparameters are set to be the same with ours (\textit{e.g,} the same initial batch size $B_0$ and threshold $\gamma$).
	In addition, QEBA \cite{li2020qeba} is a HSJA-based method which has three variants: QEBA-I, QEBA-S and QEBA-F. We select QEBA-S in the additional experiment to verify whether the proposed method can improve attack performance of other HSJA-based method.
	For the consistency of experiments, we translate the implementations of Sign-OPT, SVM-OPT and HSJA from the official NumPy version into the PyTorch version by replacing each NumPy function with the corresponding PyTorch function. Thus, the two versions behave exactly the same.
	In the targeted attack, we randomly select an image from the target class of the validation set to be the initial sample of HSJA, BA, TA and G-TA. For fair comparison, we set the initial direction $\theta_0$ of Sign-OPT and SVM-OPT to the direction of a randomly selected image of the target class.
	The detailed settings are presented in the appendix.

	\textbf{Target Models.} In the CIFAR-10 dataset, we select four target models implemented using the PyTorch framework\footnote{Pre-trained weights: \url{https://github.com/machanic/SimulatorAttack}}: (1) a 272-layer PyramidNet+ShakeDrop network (PyramidNet-272) \cite{han2017deep,yamada2019shakedrop} trained using AutoAugment \cite{cubuk2019autoaugment} ; (2) a model obtained through a neural architecture search called GDAS \cite{dong2019searching}; (3) a wide residual network with 28 layers and 10 times width expansion (WRN-28) \cite{sergey2016WRN}; and (4) a wide residual network with 40 layers (WRN-40) \cite{sergey2016WRN}. In the ImageNet dataset, we select four target models from an off-the-shelf library containing pre-trained weights\footnote{Pre-trained weights: \url{https://github.com/Cadene/pretrained-models.pytorch}}: (1) Inception-v3 \cite{szegedy2016rethinking}, (2) Inception-v4~\cite{szegedy2017inceptionv4}, (3) ResNet-101 \cite{he2016deep}, and (4) SENet-154 \cite{hu2018squeeze}.
	
	\textbf{Evaluation Metric.} Following previous studies \cite{yan2021policydriven}, we report the mean $\ell_2$ distortions as $\frac{1}{\vert \mathbf{X} \vert}\sum_{\mathbf{x}\in \mathbf{X}}(\| \mathbf{x}_\text{adv}- \mathbf{x}\|)$ under different query budgets, where $\mathbf{X}$ is the test set. 
	\begin{table}[!t]
		\centering
		\caption{Mean $\ell_2$ distortions of different query budgets on the ImageNet dataset, where $r=1.1$.}
		\small
		\tabcolsep=0.1cm
		\setlength{\belowcaptionskip}{0.5pt}%
		\scalebox{0.77}{
			\begin{tabular}{cccccccc|cccccc}
				\toprule
				Target Model  & Method  & \multicolumn{6}{c|}{Targeted Attack} & \multicolumn{6}{c}{Untargeted Attack} \\
				& & @300&  @1K & @2K & @5K & @8K & @10K & @300&  @1K & @2K & @5K & @8K & @10K \\
				\midrule
				\multirow{6}{*}{Inception-v3}&  BA \cite{brendel2018decisionbased} & 111.798 & 108.044 & 106.283 & 102.715 & 86.931 & 78.326 & - & 107.558 & 102.309 & 95.776 & 78.668 & 60.296 \\
				&  Sign-OPT \cite{cheng2019sign} & 103.939 & 87.706 & 71.291 & 46.744 & 34.640 & 29.414 & 121.085 & 79.158 & 43.642 & 16.625 & 10.557 & 8.680\\
				& SVM-OPT \cite{cheng2019sign}&  \B 101.630 & 82.950 & 67.965 & 46.275 & 35.694 & 31.106 & 121.135 & 66.027 & 36.763 & 15.736 & 10.501 & 8.789\\
				& HSJA \cite{chen2019hopskipjumpattack}& 111.562 & 95.295 & 82.111 & 52.544 & 37.395 & 30.425 & 103.605 & 57.295 & 37.185 & 15.484 & 9.989 & 7.967 \\                    
				& TA &  103.781 & \B 80.327 & \B 66.708 & \B 42.121 & \B 30.846 & \B 25.566 &  \B 94.752 & 52.523 & 35.229 & 15.040 & 9.748 & 7.793\\
				& G-TA  &  103.724 & 81.089 & 67.168 & 42.434 & 31.011 & 25.587 & 94.972 & \B 52.278 & \B 34.734 & \B 14.850 & \B 9.673 & \B 7.757\\
				\Xcline{1-14}{0.1pt}
				\multirow{6}{*}{Inception-v4} & BA \cite{brendel2018decisionbased} & 110.343 & 106.616 & 104.586 & 100.321 & 84.058 & 75.507 & - & 116.075 & 111.474 & 104.451 & 86.572 & 66.283 \\
				& Sign-OPT \cite{cheng2019sign} &  101.620 & 85.731 & 69.719 & 46.416 & 34.957 & 30.004 & 132.991 & 86.431 & 48.292 & 18.678 & 11.567 & 9.262\\
				& SVM-OPT \cite{cheng2019sign}& \B 99.856 & 81.342 & 66.982 & 45.667 & 35.477 & 31.152 & 132.227 & 72.920 & 41.095 & 17.611 & 11.418 & 9.372 \\
				& HSJA \cite{chen2019hopskipjumpattack}&  109.670 & 93.916 & 80.937 & 52.358 & 37.773 & 30.958 & 110.727 & 63.731 & 42.290 & 17.936 & 11.367 & 8.911\\
				& TA & 101.666 & \B 78.683 & \B 65.304 & \B 41.629 & 30.993 & 25.958 & \B 101.207 & \B 58.616 & 40.314 & 17.639 & 11.304 & 8.907 \\
				& G-TA & 101.495 & 79.210 & 65.888 & 42.002 & \B 30.965 & \B 25.847 & 101.324 & 58.718 & \B 40.106 & \B 17.296 & \B 11.032 & \B 8.691 \\
				\Xcline{1-14}{0.1pt}
				\multirow{6}{*}{SENet-154} & BA \cite{brendel2018decisionbased} & 81.090 & 77.723 & 76.122 & 71.967 & 55.953 & 47.652 & - & 75.998 & 71.671 & 66.983 & 53.917 & 40.725 \\
				& Sign-OPT \cite{cheng2019sign} &  75.722 & 62.876 & 49.191 & 30.155 & 21.333 & 17.672  & 70.035 & 47.705 & 27.314 & 10.890 & 6.643 & 5.245\\
				& SVM-OPT \cite{cheng2019sign}& 75.680 & 58.932 & 47.073 & 30.348 & 22.553 & 19.312  & 69.854 & 40.291 & \B 23.692 & 10.494 & 6.666 & 5.409 \\
				& HSJA \cite{chen2019hopskipjumpattack}&  77.035 & 63.488 & 51.802 & 30.138 & 19.680 & 16.261 & 71.248 & 38.035 & 24.895 & 10.218 & 5.855 & 4.842\\
				& TA &  70.739 & 55.256 & \B 43.694 & \B 24.961 & \B 16.756 & \B 13.876 & \B 65.589 &  35.689 & 24.037 & 10.039 & 5.774 & 4.766\\
				& G-TA &  \B 70.591 & \B 55.224 & 44.047 & 25.041 & 16.854 & 14.047 & 65.846 &\B 35.601 &  23.730 & \B 9.902 & \B 5.720 & \B 4.738\\
				\Xcline{1-14}{0.1pt}
				\multirow{6}{*}{ResNet-101} & BA \cite{brendel2018decisionbased} & 81.565 & 77.903 & 76.366 & 72.392 & 58.746 & 51.679 & - & 64.007 & 60.389 & 56.544 & 44.175 & 31.371 \\
				& Sign-OPT \cite{cheng2019sign} & 76.732 & 63.939 & 51.231 & 32.439 & 23.160 & 19.248 & 56.244 & 38.282 & 21.985 & 10.048 & 7.050 & 6.050 \\
				& SVM-OPT \cite{cheng2019sign} &  77.031 & 61.417 & 49.842 & 32.806 & 24.553 & 20.964 & 55.894 & 32.638 & 19.409 & 9.830 & 7.185 & 6.281\\
				& HSJA \cite{chen2019hopskipjumpattack}& 76.121 & 63.091 & 52.301 & 31.018 & 20.472 & 16.911 & 56.264 & 27.443 & 17.717 & 7.649 & 4.723 & \B 4.019 \\
				& TA & \B 72.434 & \B 57.969 & \B 47.142 & \B 27.699 & \B 18.788 & \B 15.414 & 53.197 & 26.777 & 17.651 & 7.730 & 4.822 & 4.107 \\
				& G-TA &  72.459 & 58.320 & 47.297 & 27.905 & 19.045 & 15.633 & \B 53.142 & \B 26.597 & \B 17.345 & \B 7.568 & \B 4.712 & 4.021 \\
				\bottomrule
		\end{tabular}}
		\label{tab:ImageNet_normal_models_result}
	\end{table}
	\begin{table}[!ht]
		\centering
		\caption{Mean $\ell_2$ distortions with different query budgets on the CIFAR-10 dataset, where $r=1.5$.}
		\small
		\tabcolsep=0.1cm
		\setlength{\belowcaptionskip}{0.5pt}%
		\scalebox{0.8}{
			\begin{tabular}{cccccccc|cccccc}
				\toprule
				Target Model  & Method & \multicolumn{6}{c|}{Targeted Attack} & \multicolumn{6}{c}{Untargeted Attack} \\
				& & @300 & @1K & @2K & @5K & @8K & @10K &@300 & @1K & @2K & @5K & @8K & @10K \\
				\midrule
				\multirow{6}{*}{PyramidNet-272} & BA \cite{brendel2018decisionbased}& 8.651  & 8.073 & 8.013 & 6.387 & 4.189 & 3.333 & - & 5.636 & 4.725 & 4.414 & 2.750 & 1.696  \\
				& Sign-OPT \cite{cheng2019sign}& 8.279 & 6.331 & 4.250 & 1.718 & 0.960 & 0.718 &  4.387 & 2.334 & 1.178 & 0.403 & 0.267 & 0.226  \\
				& SVM-OPT \cite{cheng2019sign}& 9.207  & 6.801 & 4.530 & 2.010 & 1.207 & 0.947 & 4.481 & 2.318 & 1.093 & 0.414 & 0.276 & 0.236  \\
				& HSJA \cite{chen2019hopskipjumpattack}&  7.917  & 4.329 & 2.523 & \B 0.793 & \B 0.489 & \B 0.397 &  4.505 & 1.279 & 0.713 & 0.333 & 0.255 & 0.227  \\
				& TA  &  7.943  & \B 4.267 & 2.488 & 0.809 & 0.503 & 0.406 &  \B 4.256  & 1.275 & 0.710 & \B 0.329 & 0.253 & 0.226  \\
				& G-TA  &  \B 7.816  & 4.277 & \B 2.469 &  0.803 & 0.505 & 0.412 &  4.432  & \B 1.270 & \B 0.702 & \B 0.329 & \B 0.252 & \B 0.225  \\
				\Xcline{1-14}{0.1pt}
				\multirow{6}{*}{GDAS} &  BA \cite{brendel2018decisionbased}& 8.487  & 7.885 & 7.821 & 6.034 & 3.632 & 2.703 &  - & 2.717 & 2.514 & 2.373 & 1.642 & 1.106  \\
				& Sign-OPT \cite{cheng2019sign}& 8.372 & 6.514 & 4.351 & 1.827 & 0.987 & 0.711 & 4.917  & 4.159 & 3.260 & 1.352 & 0.452 & 0.250  \\
				& SVM-OPT \cite{cheng2019sign}& 9.529  & 7.243 & 5.092 & 2.347 & 1.317 & 0.958 &4.909 & 3.950 & 2.736 & 1.082 & 0.371 & 0.234  \\
				& HSJA \cite{chen2019hopskipjumpattack}&  7.714 & 3.566 & 1.966 & 0.591 & 0.365 & 0.301 & 2.188  & 0.756 & 0.483 & 0.261 & 0.208 & 0.189  \\
				& TA  &  \B 7.674  & \B 3.529 & \B 1.946 & 0.585 & 0.366 & 0.302 & 2.190 & 0.774 & 0.485 & 0.257 & 0.206 & 0.187  \\
				& G-TA  &  7.697  & 3.558 & 1.959 & \B 0.583 & \B 0.361 & \B 0.298 & \B 2.161 & \B 0.745 & \B 0.476 & \B 0.255 & \B 0.204 & \B 0.185  \\
				\Xcline{1-14}{0.1pt}
				\multirow{6}{*}{WRN-28} & BA \cite{brendel2018decisionbased}& 8.688  &  8.046 & 7.984 & 5.786 & 2.486 & 1.555 &  - & 4.425 & 3.648 & 3.435 & 1.543 & 0.832  \\
				& Sign-OPT \cite{cheng2019sign}& 8.258 & 5.576 & 3.260 & 1.087 & 0.593 & 0.459 &  3.093 & 1.494 & 0.828 & 0.319 & \B 0.239 & \B 0.213  \\
				& SVM-OPT \cite{cheng2019sign}&  9.516 & 5.968 & 3.744 & 1.367 & 0.728 & 0.553 &  2.977  & 1.466 & 0.723 & 0.325 & 0.245 & 0.221  \\
				& HSJA \cite{chen2019hopskipjumpattack}& 6.810 & 2.603 & 1.326 & 0.518 & 0.389 & 0.347 & 3.052 & 0.797 & 0.508 & 0.299 & 0.250 & 0.232  \\
				& TA  &  6.802 & 2.556 & 1.311 & 0.519 & 0.394 & 0.353 & \B 2.974 & 0.785 & \B 0.496 & \B 0.293 & 0.249 & 0.233  \\
				& G-TA  &  \B 6.755 & \B 2.543 & \B 1.281 & \B 0.513 & \B 0.387 & \B 0.345 &  2.995 & \B 0.782 & 0.502 & 0.298 & 0.250 & 0.232  \\
				\Xcline{1-14}{0.1pt}
				\multirow{6}{*}{WRN-40} & BA \cite{brendel2018decisionbased}&  8.658 & 8.014 & 7.953 & 5.738 & 2.484 & 1.566 & -  &4.377 & 3.586 & 3.367 & 1.487 & 0.821  \\
				& Sign-OPT \cite{cheng2019sign}&  8.156 & 5.579 & 3.300 & 1.186 & 0.646 & 0.501 &  4.754  & 3.239 & 1.885 & 0.311 & \B 0.226 & \B0.201  \\
				& SVM-OPT \cite{cheng2019sign}&  9.339  & 6.061 & 3.840 & 1.445 & 0.800 & 0.605 &  4.457  & 2.756 & 0.739 & 0.310 & 0.229 & 0.206  \\
				& HSJA \cite{chen2019hopskipjumpattack}&  6.909 & 2.648 & 1.330 & 0.528 & 0.400 & \B 0.357 &2.992 & 0.777 & 0.498 & 0.290 & 0.242 & 0.225  \\
				& TA  & 6.944  & \B 2.579 & \B 1.295 & \B 0.523 & \B 0.398 & 0.358 &  \B 2.926  & \B 0.770 & \B 0.490 &  \B 0.288 & 0.243 & 0.227  \\
				& G-TA  & \B 6.783  &2.605 & 1.320 & 0.535 & 0.403 & 0.361 &  2.952  & 0.772 & 0.492 & \B 0.288 & 0.241 &  0.223  \\
				\bottomrule
		\end{tabular}}
		\label{tab:CIFAR-10_normal_models_result}
	\end{table}
	\begin{table}[!ht]
		\caption{Experimental results of the combined method of QEBA-S and TA.}
		\small
		\centering
		\scalebox{0.8}{
			\begin{tabular}{ccccccccHHHHHH}
				\toprule
				Target Model  & Method & \multicolumn{6}{c}{Targeted Attack} & \multicolumn{6}{H}{Untargeted Attack} \\
				& & @300 & @1K & @2K & @5K & @8K & @10K &@300 & @1K & @2K & @5K & @8K & @10K \\
				\midrule
				\multirow{2}{*}{Inception-v3} 
				& QEBA-S \cite{li2020qeba} &  \B 100.295 & 79.604 & 63.621 & 35.194 & 22.773 & 18.414 & 66.643 & 25.418 & 15.201 & 6.291 & 4.019 & 3.487\\
				& QEBA-S + TA &  104.490 & \B 75.622 & \B 59.836 & \B 33.112 & \B 22.329 & \B 17.799 & 71.862 & 26.007 & 15.495 & 6.317 & 4.220 & 3.436\\
				\Xcline{1-8}{0.1pt}
				\multirow{2}{*}{Inception-v4}
				& QEBA-S \cite{li2020qeba} &  \B 97.772 & 77.347 & 62.451 & 35.275 & 23.204 & 19.002 & 68.847 & 27.508 & 16.537 & 6.782 & 4.256 & 3.663\\
				& QEBA-S + TA &  101.845 & \B 73.838 & \B 58.554 &\B 33.288 & \B 23.160 & \B 18.736 & 74.298 & 27.954 & 16.866 & 6.782 & 4.468 & 3.633\\
				\Xcline{1-8}{0.1pt}
				\multirow{2}{*}{SENet-154}
				& QEBA-S \cite{li2020qeba} &  \B 72.831 & 55.367 & 42.674 & 21.988 & 13.888 & 11.210 & 48.884 & 18.766 & 11.148 & 4.502 & 2.839 & 2.456\\
				& QEBA-S + TA &  76.547 & \B 52.269 & \B 39.740 & \B 20.608 & \B 13.873 & \B 11.016 & 51.312 & 19.288 & 11.434 & 4.491 & 2.951 & 2.405\\
				\Xcline{1-8}{0.1pt}
				\multirow{2}{*}{ResNet-101}
				& QEBA-S \cite{li2020qeba} &  \B 75.567 & 57.929 & 44.983 & 23.209 & 14.402 & 11.467 & 47.370 & 13.153 & 7.264 & 3.398 & 2.368 & 2.129\\
				& QEBA-S + TA &  78.709 & \B 53.917 & \B 41.245 & \B 21.198 & \B 13.856 & \B 10.773 & 43.106 & 15.695 & 9.351 & 3.846 & 2.616 & 2.183\\
				\bottomrule
		\end{tabular}}
		\label{tab:QEBA_tangent_attack_targeted_attack}
	\end{table}
	
	\subsection{Comparisons with State-of-the-Art Methods}
	\textbf{Results of Attacks against Undefended Models.} Tables~\ref{tab:ImageNet_normal_models_result} and \ref{tab:CIFAR-10_normal_models_result} show the experimental results on the ImageNet and CIFAR-10 datasets. We derive two conclusions based on the results:
	
	(1) We found that the experiments of CIFAR-10 requires a larger radius ratio $r$ than that of ImageNet to achieve the satisfactory performance of G-TA. We speculate that the reason is that the target models of ImageNet have relatively flat decision boundaries. 
	
	(2) TA is more effective in targeted attacks, while the G-TA performs better in untargeted ones. 
	This is because the adversarial region of the target class is narrower and more scattered, making the local classification decision boundary smoother, so that Theorem \ref{theorm:1} holds and TA performs better.
	
	In addition, one unique benefit of our approach is that it can be used as a performance enhanced plug-in when combining it with other HSJA-based approaches (\textit{e.g.}, QEBA-S). Specifically, the method is to change the jump directions of boundary samples of QEBA-S to the directions of the optimal tangent points.
	We present that ``QEBA-S + TA'' can further improve the performance of QEBA-S, as shown in Table \ref{tab:QEBA_tangent_attack_targeted_attack}.
	
	\begin{figure}[t]
		\captionsetup[sub]{font=small}
		\centering 
		\begin{minipage}[t]{.24\textwidth}
			\includegraphics[width=\linewidth]{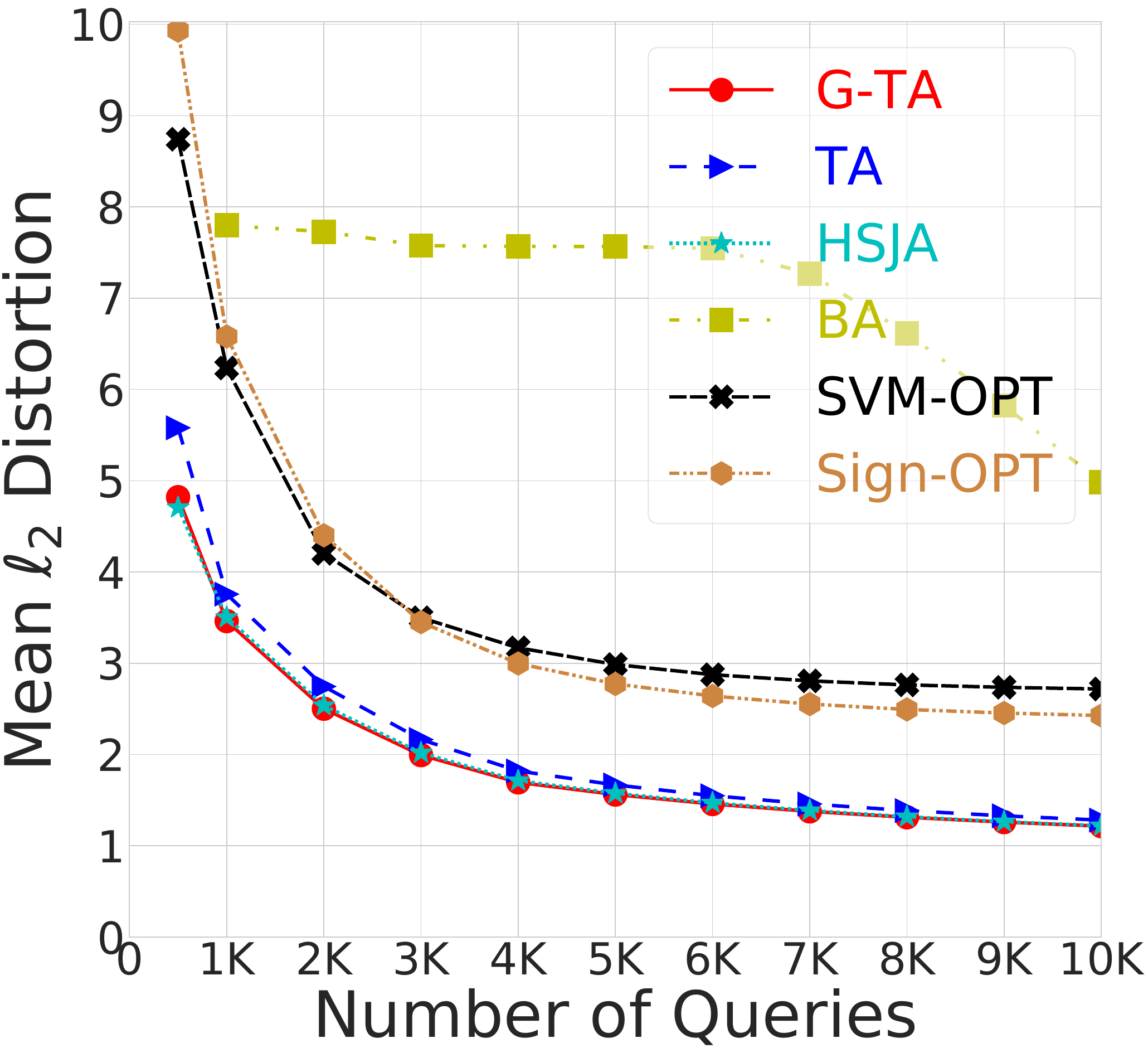}
			\subcaption{AT ($\epsilon = 8/255$)}
			\label{fig:CIFAR_untargeted_adv_train}
		\end{minipage}
		\begin{minipage}[t]{.24\textwidth}
			\includegraphics[width=\linewidth]{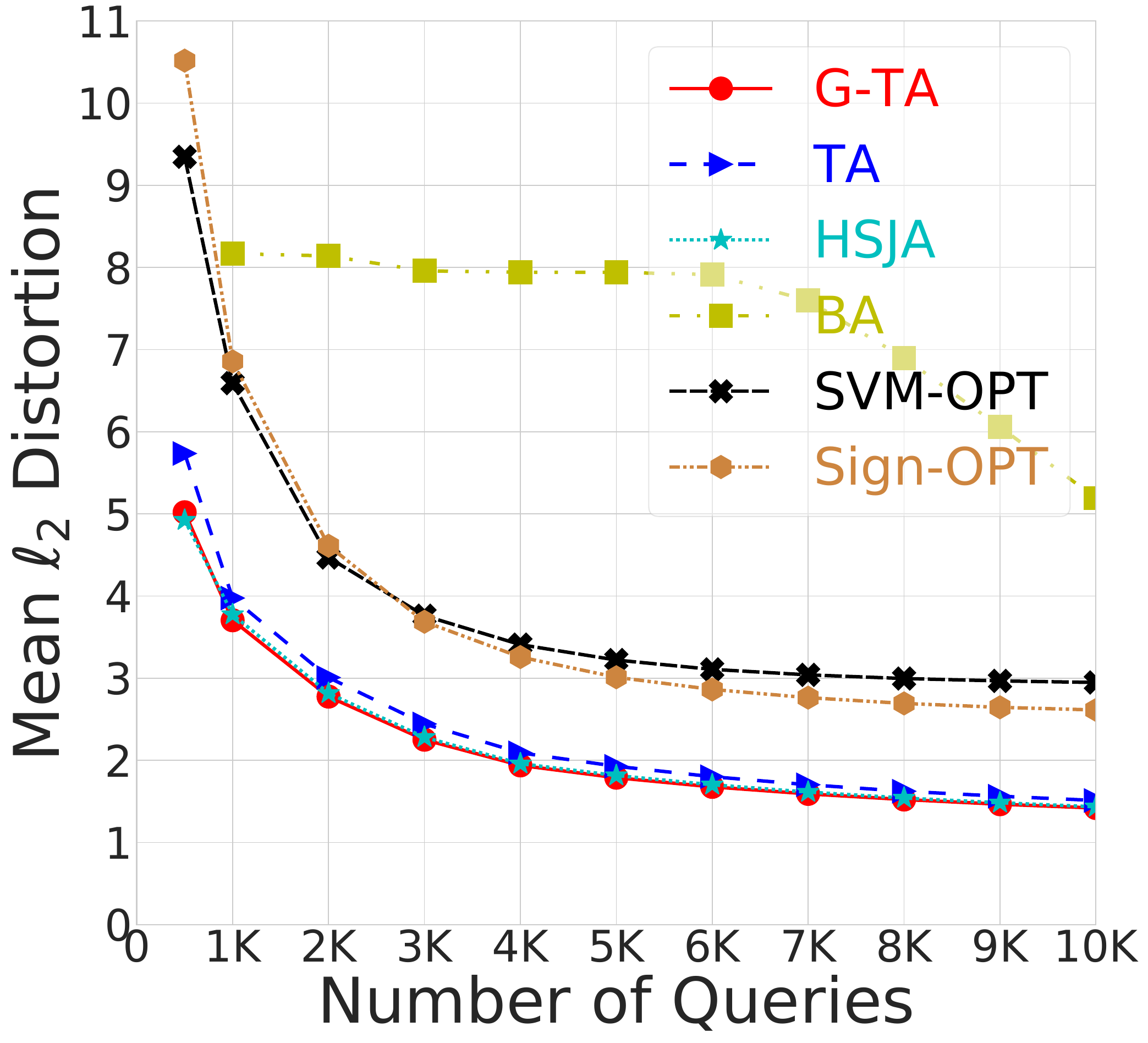}
			\subcaption{TRADES ($\epsilon = 8/255$)}
			\label{fig:CIFAR_untargeted_TRADES}
		\end{minipage}
		\begin{minipage}[t]{.24\textwidth}
			\includegraphics[width=\linewidth]{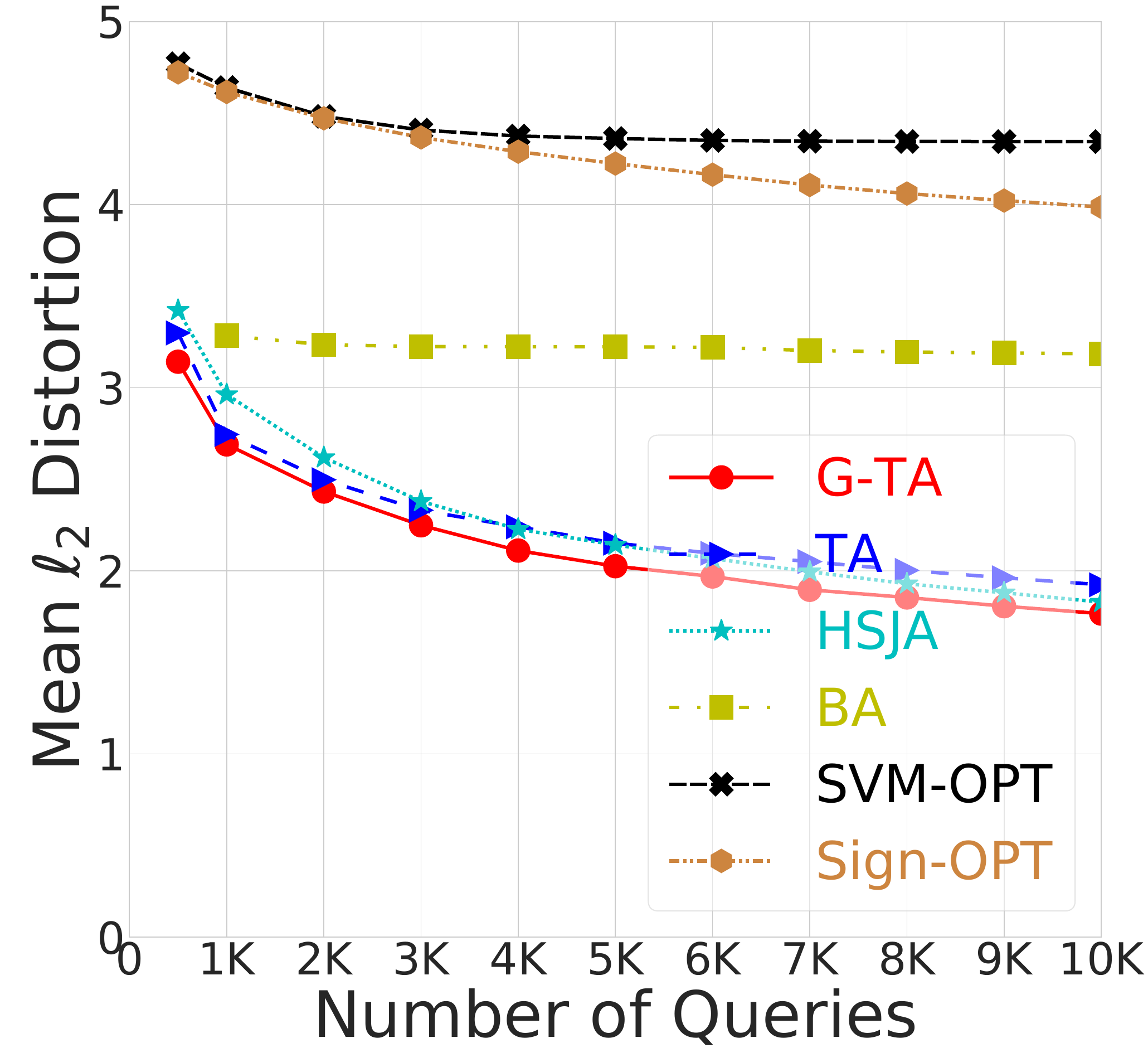}
			\subcaption{JPEG}
			\label{fig:CIFAR_untargeted_ComDefend}
		\end{minipage}
		\begin{minipage}[t]{.24\textwidth}
			\includegraphics[width=\linewidth]{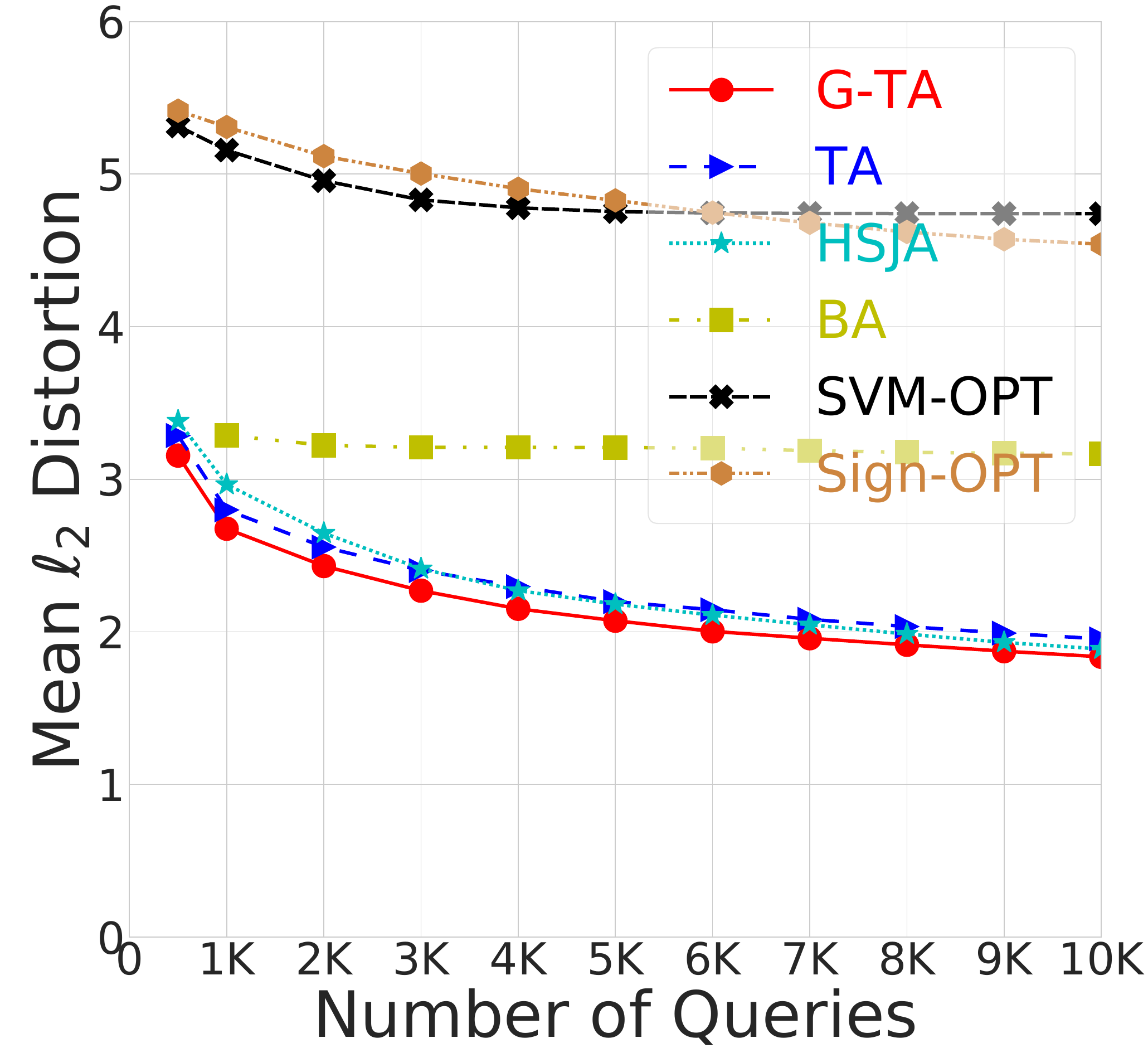}
			\subcaption{Feature Distillation}
			\label{fig:CIFAR_untargeted_FeatureDistillation}
		\end{minipage}
		\caption{Experimental results of the attacks against defense models with the backbone of ResNet-50.}
		\label{fig:ImageNet_defense_models}
		
	\end{figure}
	
	\textbf{Results of Attacks against Defense Models.} 
	We conduct the experiments of untargeted attacks on defense models. Fig. \ref{fig:ImageNet_defense_models} shows the experimental results on the CIFAR-10 dataset. 
	We select 4 defense models: (1) a model obtained through adversarial training, abbreviated as AT \cite{madry2018towards}; (2) an improved AT that optimizes a regularized surrogate loss, named as TRADES \cite{zhang2019theoretically}; (3) an image-transformation-based defense called JPEG \cite{guo2018countering}; and (4) a DNN-oriented compression defense called Feature Distillation \cite{liu2019feature}. All defense models adopt ResNet-50 as the backbone. Previous studies \cite{zhang2021geometryaware} have shown that AT and TRADES have the issue of robust overfitting, which leads to a significant increase in the curvature of the classification decision boundary. Figs. \ref{fig:CIFAR_untargeted_adv_train} and \ref{fig:CIFAR_untargeted_TRADES} show that G-TA outperforms TA when attacking AT and TRADES. This advantage is also demonstrated in attacking other defenses (\textit{e.g.,} Figs.~\ref{fig:CIFAR_untargeted_FeatureDistillation} and \ref{fig:CIFAR_untargeted_ComDefend}), proving the effectiveness of G-TA in attacking defense models.
	\subsection{Comprehensive Understanding of Tangent Attack}
	In the ablation studies, we conduct the experiments of the targeted attacks on the ImageNet dataset to understand our approach in depth, and the target model is ResNet-50. The results are shown in Fig. \ref{fig:ablation_study}.
	
	\textbf{Initialization.} Our algorithm starts with an image $\tilde{\mathbf{x}}_0$ selected from the target class, and we study three selection strategies: (1) a randomly selected image, (2) the image with the shortest distance to the original image, and (3) the image with the longest distance to the original image. Fig. \ref{fig:initial_sample_study} shows that the strategy of (2) achieves the best performance.
	
	\textbf{Radius Ratio.} 
	Fig. \ref{fig:radius_ratio_targeted_study} shows that the performance of G-TA is not sensitive to the radius ratio $r$.
	
	
	\textbf{Jump Direction.} Fig.~\ref{fig:update_direction_study} shows the effects of different jump directions. RandomHSJA is a variant of HSJA which adopts a random vector $\mathbf{r}$ that satisfies $\vecprod{\mathbf{r},\mathbf{u}} > 0$ as the jump direction. Fig.~\ref{fig:update_direction_study} verifies the benefit of jumping to the optimal tangent point.
	
	\textbf{Initial Batch Size.} In general, Fig.~\ref{fig:initial_batch_size_study} shows that a smaller $B_0$ performs better since it saves queries. But $B_0 = 5$ performs worse than $B_0=30$ because it uses too few samples for gradient estimation.

	\begin{figure}[!ht]
		\captionsetup[sub]{font=small}
		\setlength{\abovecaptionskip}{0pt}
		\setlength{\belowcaptionskip}{0pt}
		\centering 
		\begin{minipage}[t]{.24\textwidth}
			\includegraphics[width=\linewidth]{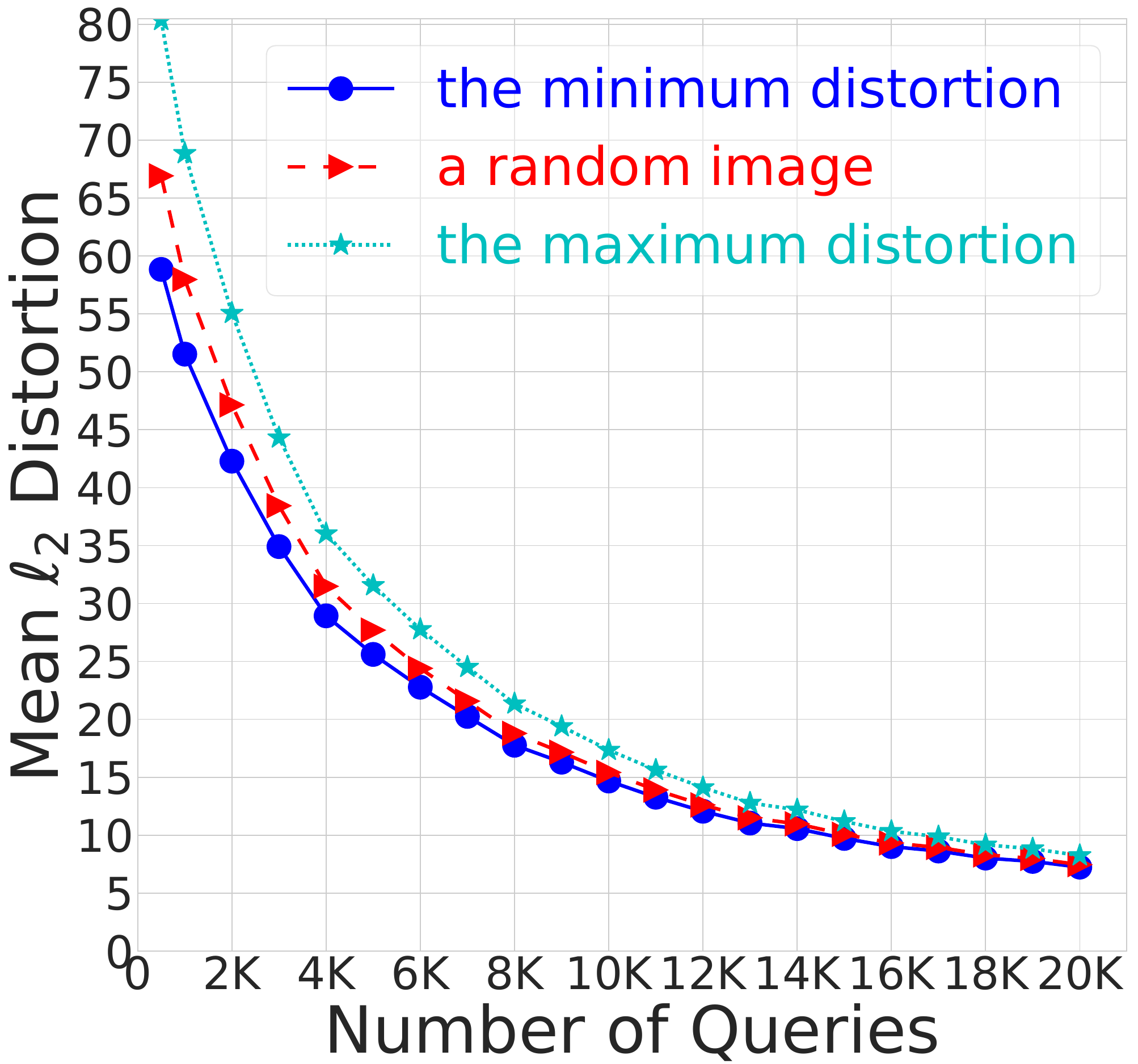}
			\subcaption{initial $\tilde{\mathbf{x}}_0$ selection}
			\label{fig:initial_sample_study}
		\end{minipage}
		\begin{minipage}[t]{.24\textwidth}
			\includegraphics[width=\linewidth]{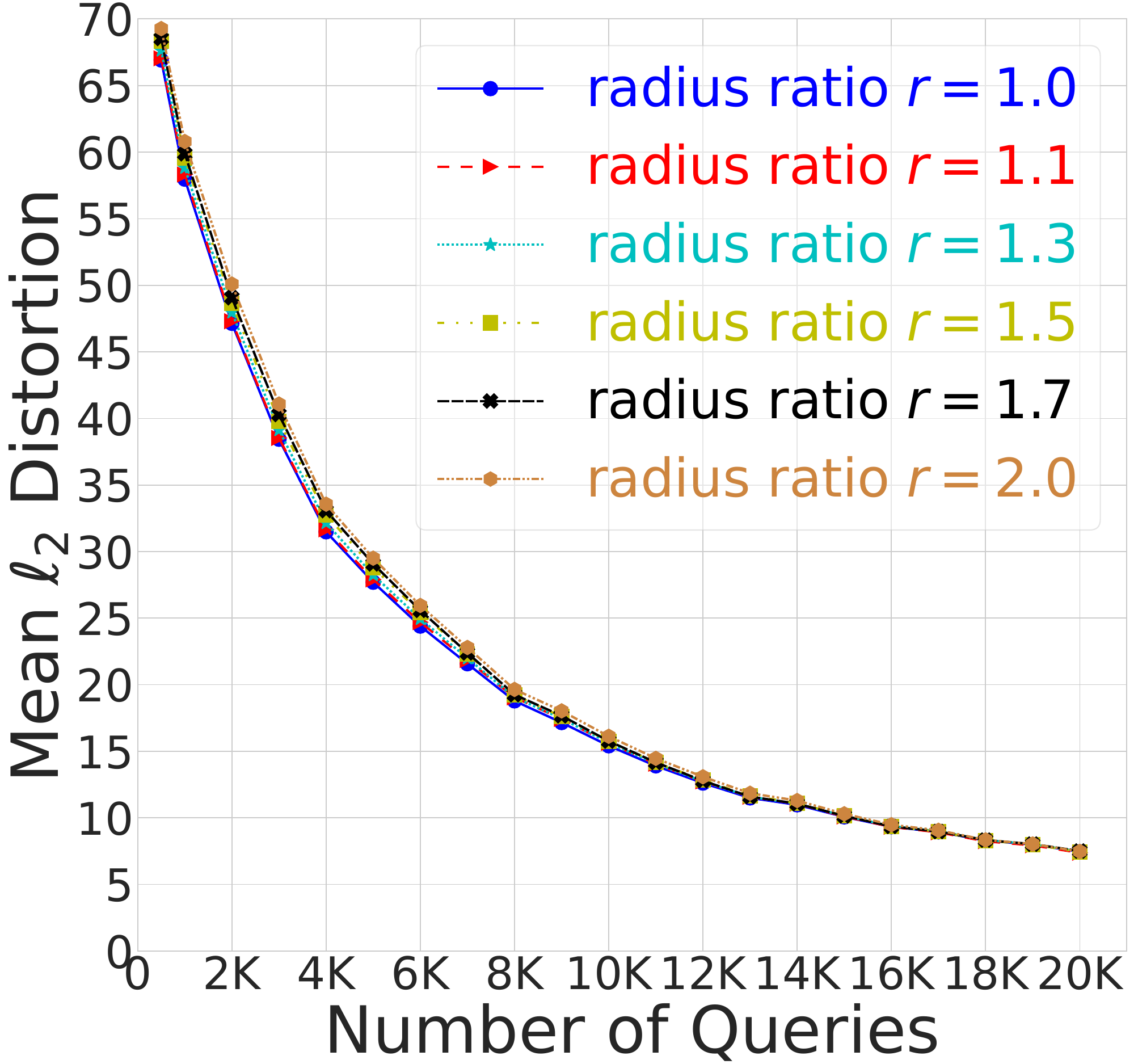}
			\subcaption{radius ratio}
			\label{fig:radius_ratio_targeted_study}
		\end{minipage}
		\begin{minipage}[t]{.24\textwidth}
			\includegraphics[width=\linewidth]{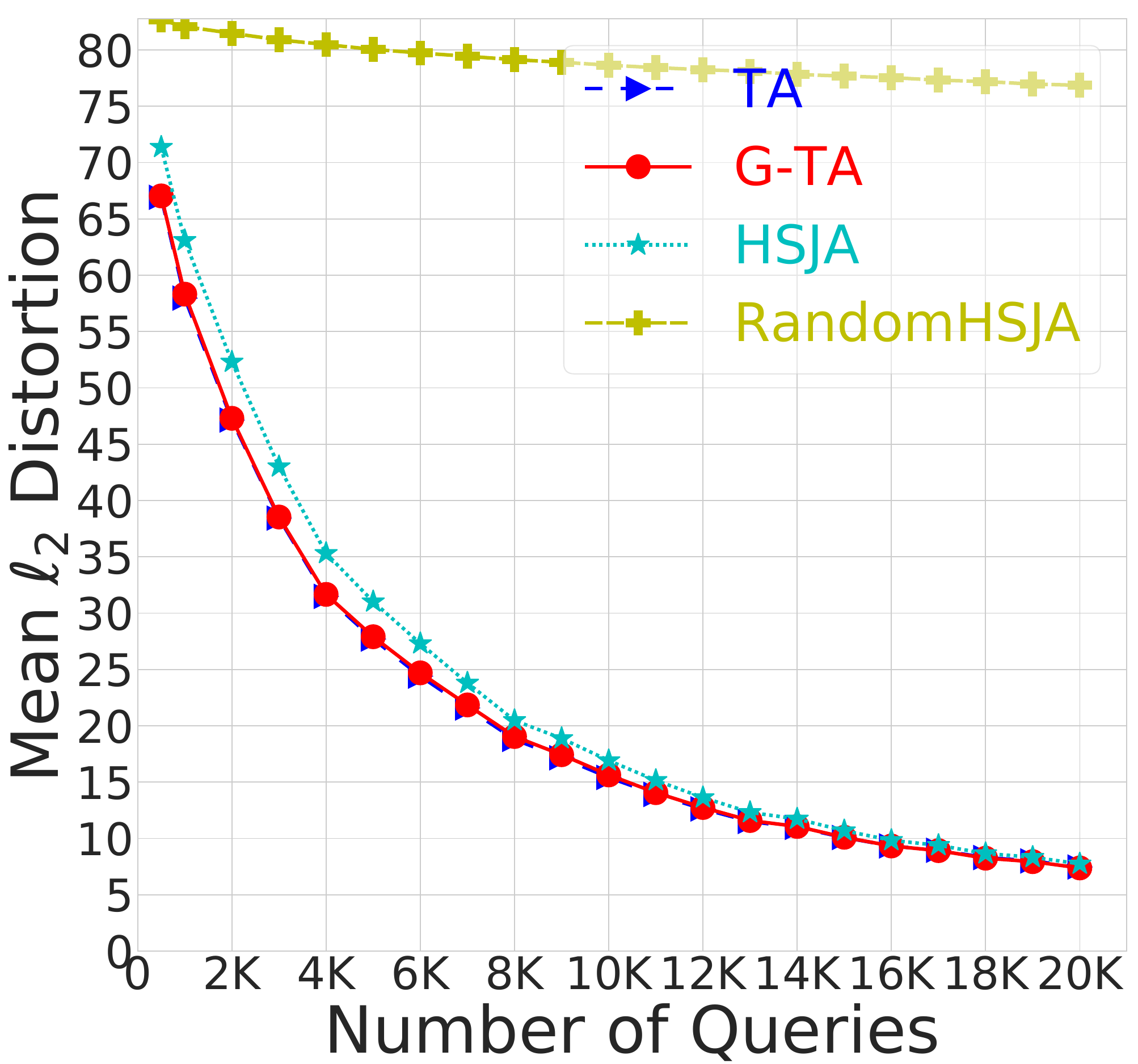}
			\subcaption{jump directions}
			\label{fig:update_direction_study}
		\end{minipage}
		\begin{minipage}[t]{.24\textwidth}
			\includegraphics[width=\linewidth]{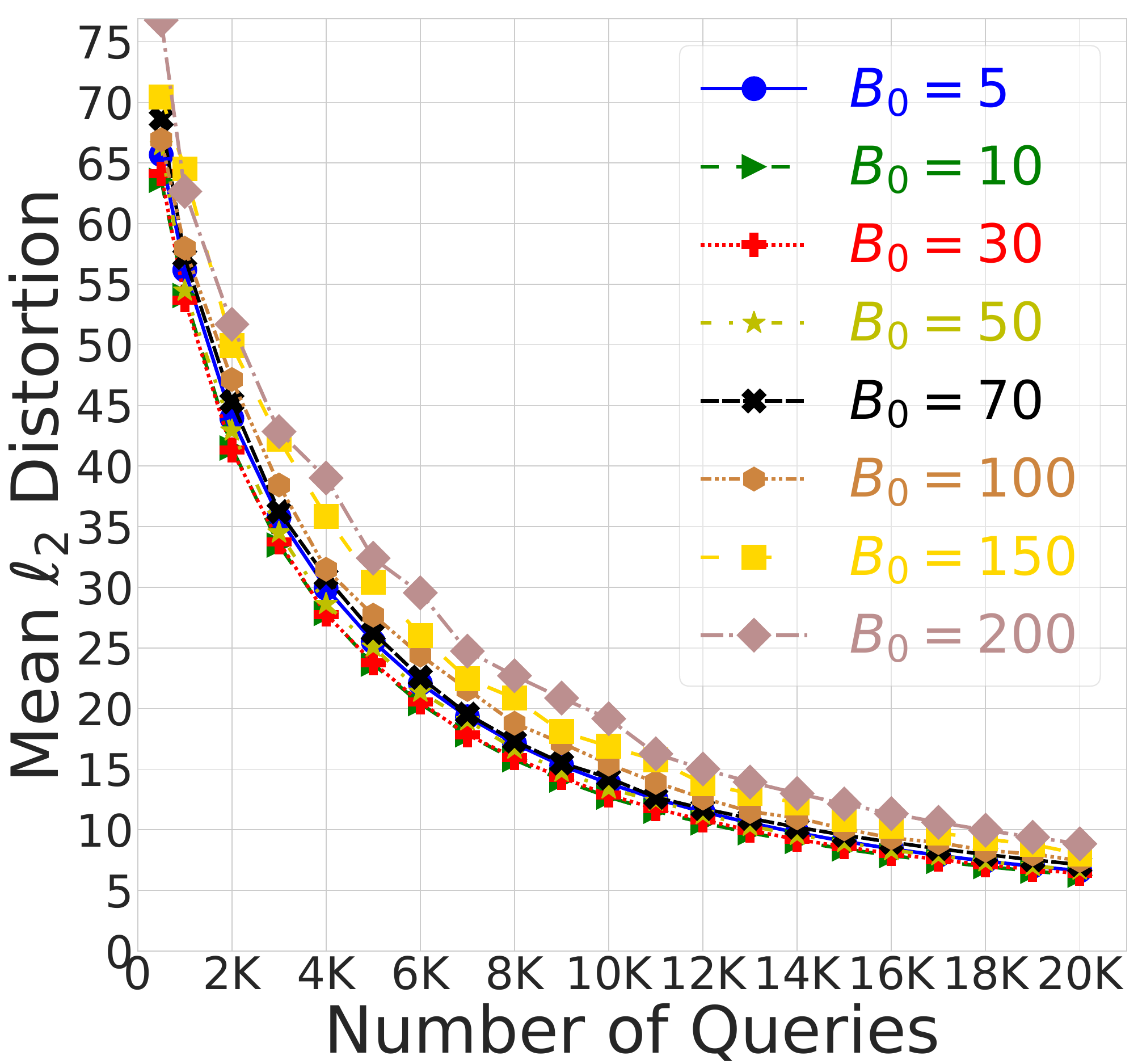}
			\subcaption{initial batch size $B_0$}
			\label{fig:initial_batch_size_study}
		\end{minipage}
		\caption{Experimental results of ablation studies.}
		\label{fig:ablation_study}
	\end{figure}
	\section{Conclusion}
	In this paper, we propose a new geometric-based method for query-efficient hard-label black-box attacks. Our method relies on the observation that the minimum $\ell_2$ distortion can be obtained by searching a boundary point along a tangent line of a virtual hemisphere.
	We offer a closed-form solution for computing the optimal tangent point and provide a formal proof of its correctness. We further propose a generalized method that replaces the hemisphere with a semi-ellipsoid to adapt to the target models with curved decision boundaries. Lastly, we evaluate our approach through extensive experiments and show its superior performance compared with baseline methods.

	\section*{Acknowledgments}
	This research is partially supported by the National Key R\&D Program of China (2019YFB1405703) and TC190A4DA/3, the National Natural Science Foundation of China (Grant Nos. 62021002, 61972221).
	Yisen Wang is partially supported by the National Natural Science Foundation of China under Grant 62006153, and Project 2020BD006 supported by PKU-Baidu Fund.

	\medskip
	
	{
		\small
		\bibliography{main}
		\bibliographystyle{plain}
	}

	\newpage
	\appendix
	\section*{\centering Appendix}


	\newcommand{\set}[1]{{\left\{#1\right\}}}

	\newcommand{\bfa}{{\mathbf{a}}}
	\newcommand{\bfb}{{\mathbf{b}}}

	\newcommand{\bfz}{{\mathbf{z}}}
	\newcommand{\bfy}{{\mathbf{y}}}

	\newcommand{\calS}{{\mathcal{S}}}
	\newcommand{\calC}{{\mathcal{C}}}
	\newcommand{\calP}{{\mathcal{P}}}
	\newcommand{\sfC}{\mathsf{C}}
	\newcommand{\sfF}{\mathsf{F}}
	
	\newcommand{\bbR}{{\mathbb{R}}}
	\newcommand{\bbZ}{{\mathbb{Z}}}
	
	\tikzset{
		reuse path/.code={\pgfsyssoftpath@setcurrentpath{#1}}
	}
	\tikzset{even odd clip/.code={\pgfseteorule},
		protect/.code={
			\clip[overlay,even odd clip,reuse path=#1]
			(-6383.99999pt,-6383.99999pt) rectangle (6383.99999pt,6383.99999pt);
	}}
	\tikzset{
		dot/.style={circle, fill, minimum size=#1, inner sep=0pt, outer sep=0pt},
		dot/.default = 4.5pt,
		hemispherebehind/.style={ball color=gray!20!white, fill=none, opacity=0.3},
		hemispherefront/.style={ball color=gray!65!white, fill=none, opacity=0.4},
		ellipsoidfront/.style={ball color=gray!50!white, fill=none, opacity=0.5},
		circlearc/.style={thick,color=gray!90},
		circlearchidden/.style={thick,dashed,color=gray!90},
		equator/.style = {thick, black},
		diameter/.style = {thick, black},
		axis/.style={thick, -stealth,black!60, every node/.style={text=black, at={([turn]1mm,0mm)}},
		},
	}
	
\section{Potential Negative Societal Impacts}
The adversarial attack is a major security concern in the real-world machine learning system, because the generated adversarial perturbation could be used for the malicious purpose. Our study relies only on the top-1 predicted label to craft the adversarial examples which is applicable to most real-world systems, making it more useful and practical. Although the experiments in this paper are about attacking the image classifier, this method can be used in other settings, such as the object detection, the recommender system, the facial recognition system, and the autonomous driving. In summary, this study could be used in harmful ways by malicious users.

In a broader perspective, the adversarial example is not restricted to malicious applications, and it can be used in the positive side, \textit{e.g.,} the generation of CAPTCHA and the privacy protection. In particular, the study of adversarial attacks can promote the defense techniques. In recent years, many proposed defenses are broken by the latest attacks, which stimulates the development of defenses. 

Our results also point to the potential defense techniques against hard-label attacks. For example, the defense can prohibit queries near the decision boundary, then the approximate gradient cannot be estimated, making Tangent Attack ineffective. Another possible defense is to add random perturbations to the input image to prevent effective gradient estimation, or predict random classification labels for samples near the classification decision boundary.

\section{Proof of Theorem~\ref{theorm:1}}
\subsection{Notations and Assumption}
Before we formally prove Theorem~\ref{theorm:1}, let us first define the notations that will be used in the proof. Let $\mathbf{x}$ denote the original image, and w.l.o.g. we assume the boundary sample $\mathbf{x}_{t-1} = \mathbf{0}$ be the origin of the coordinate axis. Let $B$ denote a $n$-dimensional ball centered at $\mathbf{x}_{t-1}$ with the radius of $R$, and its surface is denoted as $S :=\partial B$. Note that $B$ denotes a complete ball in this proof. However, $B$ denotes the hemisphere in the main text of the paper.
Theorem~\ref{theorm:1} assumes that the classification decision boundary of the target model is the hyperplane $H$, which is defined by its unit normal vector $\mathbf{u}$. Then, the hyperplane $H$ divides $\mathbb{R}^n$ into two half-spaces:
\begin{equation}
\begin{aligned}
H_{\geq 0} &=\{\mathbf{v}\in\mathbb{R}^n \mathbin{|} \vecprod{\mathbf{v}, \mathbf{u}}\geq 0\}, \\
H_{\leq 0} &=\{\mathbf{v}\in\mathbb{R}^n \mathbin{|} \vecprod{\mathbf{v}, \mathbf{u}}\leq 0\}.
\end{aligned}
\end{equation}
In the attack, $H_{\geq 0}$ mainly contains the adversarial region, and $H_{\leq 0}$ represents the non-adversarial region. 
In Fig. \ref{fig:proof_notations}, we visually represent the hyperplane $H$ and two half-spaces in $\mathbb{R}^3$.

\begin{figure}[b]
	\centering
	\def\r{2.5}
	\tdplotsetmaincoords{75}{145}
	\tikzset{zxplane/.style={canvas is zx plane at y=#1}}
	\tikzset{yxplane/.style={canvas is yx plane at z=#1}}
	\tikzset{yzplane/.style={canvas is yz plane at x=#1}}
	\makeatletter
	\tikzoption{canvas is plane}[]{\@setOxy#1}
	\def\@setOxy O(#1,#2,#3)x(#4,#5,#6)y(#7,#8,#9)%
	{\def\tikz@plane@origin{\pgfpointxyz{#1}{#2}{#3}}%
		\def\tikz@plane@x{\pgfpointxyz{#4}{#5}{#6}}%
		\def\tikz@plane@y{\pgfpointxyz{#7}{#8}{#9}}%
		\tikz@canvas@is@plane
	}
	\makeatother

	\begin{tikzpicture}[tdplot_main_coords,scale=0.8]
	
	\draw[-stealth] (-2.4*\r,0,0) -- (2.4*\r,0,0) node[anchor=north east,thin] {$x$};
	\draw[-stealth] (0,-2.4*\r,0) -- (0,2.4*\r,0) node[anchor=north west,thin] {$y$};
	\draw[-stealth] (0,0,-\r) -- (0,0,1.3*\r) node[anchor=south,thin] {$z$};
	\coordinate (O) at (0,0,0);
	\coordinate (x) at  (1.5*\r,0,-0.5*\r);
	\coordinate (x') at  (1.5*\r,0,0);
	\coordinate (x_k) at  (2.9588, 0.0000, 0.0000);

	\zjbj[4pt]{O}{x'}{x};  
	\draw[densely dashed] (x) -- (x');
	\node[black] (H+) at (-1.9*\r,2*\r,0.6) {$H_{\ge 0}$};
	\node[black] (H-) at (-1.9*\r,2*\r,-1.4) {$H_{\le 0}$};

	\draw[magenta] (x) -- (x_k);
	\draw[decorate,decoration={brace,raise=1pt,amplitude=3pt,mirror},draw=magenta] (x) --node[below right,magenta] {$f(\mathbf{k})$}  (x_k);
	
	\filldraw[fill=blue!20,opacity=0.6,draw,thin] (-2*\r,-2*\r,0) node[black] at (-1.7*\r,1.7*\r,0) {$H$} -- (-2*\r,3.3*\r,0) -- (2.5*\r,3.3*\r,0) -- (2.5*\r,-2*\r,0) -- cycle ;
	\draw[thin] (-2*\r,0,-\r) -- (-2*\r,0,1.3*\r) -- (2*\r,0,1.3*\r) -- (2*\r,0,-\r) -- cycle node[black] at (-1.7*\r,0,1.1*\r) {$V$};
	\draw[tdplot_screen_coords,thick] (-\r,0,0) arc (180:0:\r) node at (0,0,-0.85*\r) {$B$}; 
	\draw[tdplot_screen_coords,thick,dashed] (\r,0,0) arc (0:-180:\r); 
	
	\draw[thick,dashed,canvas is xy plane at z = 0] (\r,0) arc (0:-180:\r);
	
	\fill (x') circle[radius=1.5pt] node[anchor=south] {$\Pi_H(\mathbf{x})$};


	\coordinate (k*) at (2.1124, 0.0000, 1.3371);
	
	\coordinate (y_k) at  (2.9588, 0.0000, 0.0000);
	\coordinate (k'*) at (0.8876275643042051, 0.0, -2.3371173070873836);
	
	\foreach \t in {
		(1.4999999999999998, -1.9364916731037085, -0.49999999999999983),
		(1.558026898085369, -1.9277782006719293, -0.32591930574389155),
		(1.615644899411993, -1.9016473312367626, -0.15306530176402072),
		(1.672436339172634, -1.8581326888377487, 0.01730901751790337),
		(1.727966735976753, -1.7973067831845102, 0.1839002079302603),
		(1.7817782139717986, -1.7193052030714704, 0.3453346419153964),
		(1.833385194997704, -1.6243592944799572, 0.5001555849931143),
		(1.8822731951543525, -1.5128357619599095, 0.6468195854630585),
		(1.9279015388668483, -1.3852807406276293, 0.7837046166005444),
		(1.9697106746724962, -1.2424648168004933, 0.9091320240174889),
		(2.0071344917078835, -1.0854243747041374, 1.0214034751236498),
		(2.0396175650718544, -0.915493765494462, 1.1188526952155635),
		(2.0666366138240595, -0.7343224623692498, 1.1999098414721785),
		(2.0877247091794486, -0.5438719170901557, 1.263174127538347),
		(2.1024960591240482, -0.34638849201425864, 1.307488177372145),
		(2.1106687006546907, -0.14435158609143212, 1.3320061019640732),
		(2.11208232992713, 0.05960045226134553, 1.336246989781391),
		(2.1067089022694705, 0.262760497544315, 1.320126706808413),
		(2.0946545208887875, 0.46245000580091183, 1.2839635626663635),
		(2.0761523448712387, 0.6561133705341866, 1.2284570346137167),
		(2.051547519544043, 0.8413996296933663, 1.1546425586321296),
		(2.02127618120204, 1.01622410378527, 1.0638285436061214),
		(1.9858412021922451, 1.1788058629494256, 0.9575236065767356),
		(1.9457874435293288, 1.3276805157551475, 0.8373623305879869),
		(1.9016789319467287, 1.4616909236570337, 0.7050367958401867),
		(1.8540797337114319, 1.5799605760108109, 0.5622392011342965),
		(1.8035395458873662, 1.6818553566894257, 0.4106186376620991),
		(1.7505843264231888, 1.7669394311946198, 0.2517529792695673),
		(1.6957117368765653, 1.8349302876369933, 0.08713521062969687),
		(1.6393908108522561, 1.8856569196170088, -0.08182756744323066),
		(1.5820650725872825, 1.9190240328386825, -0.25380478223815256),
		(1.5241582704834218, 1.9349841807292634, -0.4275251885497332),		
		(1.4999999999999998, -1.9364916731037085, -0.5000000000000001), 
		(1.4419731019146305, -1.9277782006719295, -0.6740806942561084),
		(1.3843551005880068, -1.9016473312367628, -0.8469346982359791),
		(1.3275636608273653, -1.8581326888377492, -1.0173090175179036),
		(1.2720332640232461, -1.7973067831845102, -1.18390020793026),
		(1.2182217860282007, -1.7193052030714706, -1.3453346419153962),
		(1.1666148050022949, -1.6243592944799572, -1.5001555849931139),
		(1.1177268048456468, -1.5128357619599098, -1.6468195854630585),
		(1.0720984611331517, -1.3852807406276293, -1.7837046166005444),
		(1.0302893253275034, -1.242464816800493, -1.9091320240174887),
		(0.9928655082921165, -1.0854243747041377, -2.02140347512365),
		(0.9603824349281452, -0.9154937654944622, -2.1188526952155637),
		(0.9333633861759404, -0.7343224623692498, -2.199909841472178),
		(0.9122752908205507, -0.5438719170901557, -2.263174127538347),
		(0.8975039408759515, -0.34638849201425864, -2.3074881773721447),
		(0.8893312993453083, -0.14435158609143214, -2.332006101964073),
		(0.8879176700728694, 0.05960045226134554, -2.336246989781391),
		(0.893291097730529, 0.262760497544315, -2.320126706808413),
		(0.9053454791112121, 0.46245000580091183, -2.2839635626663632),
		(0.9238476551287609, 0.6561133705341867, -2.2284570346137165),
		(0.9484524804559566, 0.8413996296933665, -2.15464255863213),
		(0.9787238187979593, 1.01622410378527, -2.063828543606121),
		(1.0141587978077546, 1.1788058629494256, -1.9575236065767354),
		(1.0542125564706708, 1.327680515755148, -1.837362330587987),
		(1.0983210680532711, 1.4616909236570337, -1.7050367958401869),
		(1.1459202662885677, 1.579960576010811, -1.5622392011342963),
		(1.1964604541126334, 1.6818553566894254, -1.4106186376620986),
		(1.249415673576811, 1.7669394311946196, -1.2517529792695672),
		(1.304288263123434, 1.8349302876369933, -1.0871352106296968),
		(1.3606091891477434, 1.8856569196170088, -0.9181724325567693),
		(1.4179349274127173, 1.9190240328386825, -0.7461952177618473),
		(1.4758417295165773, 1.9349841807292631, -0.5724748114502667)		
	}
	{\fill[fill=red] \t circle[radius=0.7pt];
	}
	\foreach \t/\c in {
		(1.4999999999999998, -1.9364916731037085, -0.49999999999999983)/red!30!white,
		(1.558026898085369, -1.9277782006719293, -0.32591930574389155)/red!30!white,
		(1.615644899411993, -1.9016473312367626, -0.15306530176402072)/red!30!white,
		(1.672436339172634, -1.8581326888377487, 0.01730901751790337)/red!30!white,
		(1.727966735976753, -1.7973067831845102, 0.1839002079302603)/red!40!white,
		(1.7817782139717986, -1.7193052030714704, 0.3453346419153964)/red!40!white,
		(1.833385194997704, -1.6243592944799572, 0.5001555849931143)/red!40!white,
		(1.8822731951543525, -1.5128357619599095, 0.6468195854630585)/red!40!white,
		(1.9279015388668483, -1.3852807406276293, 0.7837046166005444)/red!40!white,
		(1.9697106746724962, -1.2424648168004933, 0.9091320240174889)/red!50!white,
		(2.0071344917078835, -1.0854243747041374, 1.0214034751236498)/red!50!white,
		(2.0396175650718544, -0.915493765494462, 1.1188526952155635)/red!50!white,
		(2.0666366138240595, -0.7343224623692498, 1.1999098414721785)/red!50!white,
		(2.0877247091794486, -0.5438719170901557, 1.263174127538347)/red!50!white,
		(2.1024960591240482, -0.34638849201425864, 1.307488177372145)/red!50!white,
		(2.1106687006546907, -0.14435158609143212, 1.3320061019640732)/red!60!white,
		(1.4999999999999998, -1.9364916731037085, -0.5000000000000001)/red!30!white, 
		(1.4419731019146305, -1.9277782006719295, -0.6740806942561084)/red!30!white,
		(1.3843551005880068, -1.9016473312367628, -0.8469346982359791)/red!30!white,
		(1.3275636608273653, -1.8581326888377492, -1.0173090175179036)/red!30!white,
		(1.2720332640232461, -1.7973067831845102, -1.18390020793026)/red!30!white,
		(1.2182217860282007, -1.7193052030714706, -1.3453346419153962)/red!30!white,
		(1.1666148050022949, -1.6243592944799572, -1.5001555849931139)/red!30!white,
		(1.1177268048456468, -1.5128357619599098, -1.6468195854630585)/red!30!white,
		(1.0720984611331517, -1.3852807406276293, -1.7837046166005444)/red!30!white,
		(1.0302893253275034, -1.242464816800493, -1.9091320240174887)/red!40!white,
		(0.9928655082921165, -1.0854243747041377, -2.02140347512365)/red!40!white,
		(0.9603824349281452, -0.9154937654944622, -2.1188526952155637)/red!40!white,
		(0.9333633861759404, -0.7343224623692498, -2.199909841472178)/red!40!white,
		(0.9122752908205507, -0.5438719170901557, -2.263174127538347)/red!40!white,
		(0.8975039408759515, -0.34638849201425864, -2.3074881773721447)/red!40!white,
		(0.8893312993453083, -0.14435158609143214, -2.332006101964073)/red!40!white,
		(2.11208232992713, 0.05960045226134553, 1.336246989781391)/red,
		(2.1067089022694705, 0.262760497544315, 1.320126706808413)/red,
		(2.0946545208887875, 0.46245000580091183, 1.2839635626663635)/red,
		(2.0761523448712387, 0.6561133705341866, 1.2284570346137167)/red,
		(2.051547519544043, 0.8413996296933663, 1.1546425586321296)/red,
		(2.02127618120204, 1.01622410378527, 1.0638285436061214)/red,
		(1.9858412021922451, 1.1788058629494256, 0.9575236065767356)/red,
		(1.9457874435293288, 1.3276805157551475, 0.8373623305879869)/red,
		(1.9016789319467287, 1.4616909236570337, 0.7050367958401867)/red,
		(1.8540797337114319, 1.5799605760108109, 0.5622392011342965)/red,
		(1.8035395458873662, 1.6818553566894257, 0.4106186376620991)/red,
		(1.7505843264231888, 1.7669394311946198, 0.2517529792695673)/red,
		(1.6957117368765653, 1.8349302876369933, 0.08713521062969687)/red,
		(1.6393908108522561, 1.8856569196170088, -0.08182756744323066)/red,
		(1.5820650725872825, 1.9190240328386825, -0.25380478223815256)/red,
		(1.5241582704834218, 1.9349841807292634, -0.4275251885497332)/red,	
		(0.8879176700728694, 0.05960045226134554, -2.336246989781391)/red,
		(0.893291097730529, 0.262760497544315, -2.320126706808413)/red,
		(0.9053454791112121, 0.46245000580091183, -2.2839635626663632)/red,
		(0.9238476551287609, 0.6561133705341867, -2.2284570346137165)/red,
		(0.9484524804559566, 0.8413996296933665, -2.15464255863213)/red,
		(0.9787238187979593, 1.01622410378527, -2.063828543606121)/red,
		(1.0141587978077546, 1.1788058629494256, -1.9575236065767354)/red,
		(1.0542125564706708, 1.327680515755148, -1.837362330587987)/red,
		(1.0983210680532711, 1.4616909236570337, -1.7050367958401869)/red,
		(1.1459202662885677, 1.579960576010811, -1.5622392011342963)/red,
		(1.1964604541126334, 1.6818553566894254, -1.4106186376620986)/red,
		(1.249415673576811, 1.7669394311946196, -1.2517529792695672)/red,
		(1.304288263123434, 1.8349302876369933, -1.0871352106296968)/red,
		(1.3606091891477434, 1.8856569196170088, -0.9181724325567693)/red,
		(1.4179349274127173, 1.9190240328386825, -0.7461952177618473)/red,
		(1.4758417295165773, 1.9349841807292631, -0.5724748114502667)/red}
	{\draw[\c,semithick] (O) -- \t;
	}
	
	\draw[thick,canvas is xy plane at z = 0] (\r,0) arc (0:180:\r);
	\coordinate (cone_edge) at  ({1.5*\r-0.633 * cos(30)},  {0.633 * sin(30)}, 0); 

	\coordinate (k)  at (-1.7677669529663689,1.7677669529663689,0);
	\draw[blue] (O) -- (k*);
	\draw[decorate,decoration={brace,raise=1pt,amplitude=4pt,mirror},draw=blue] (O) -- node[above right,blue] {$R$}  (k*);
	
	\draw[blue] (O) --  (x);
	\draw[decorate,decoration={brace,raise=1pt,amplitude=4pt,mirror},draw=blue] (x) --  node[below right,blue] {$\|\mathbf{x} \|$} (O);
	
	\shade[
	tdplot_screen_coords,hemispherefront,
	delta angle=-180,
	x radius=\r,
	] (0,0)
	circle [radius=\r];
	
	\node[red] (C) at (-0.4*\r,0,-0.2*\r) {Cone $C$};

	\draw[black] (k*) -- (x_k);
	\fill (x_k) circle[radius=1.5pt] node[anchor=north west] {$\mathbf{y}_{\mathbf{k}^*}$};
	\fill (x) circle[radius=1.5pt] node[anchor=north west] {$\mathbf{x}$};
	\fill[red] (k*) circle[radius=1.5pt] node[anchor=south west,red] {$\mathbf{k}^*$};
	\fill (O) circle[radius=2pt] node[anchor=south west] {$\mathbf{x}_{t-1}$};
	\draw[black,-stealth] (-1.6*\r,1.6*\r,0) -- (-1.6*\r,1.6*\r,1.0) node[anchor=south west] {$\mathbf{u}$}; 
	
	\end{tikzpicture}
	\caption{Illustration of the entities defined in the proof, where $C$ is a convex cone whose boundary intersects with the circle formed by all the tangent points from $\mathbf{x}$ to the ball $B$.}
	\label{fig:proof_notations}
\end{figure}
Suppose $\bfx\in H_{\leq 0}\setminus B$ is a fixed point outside $B$ such that $\vecprod{\bfx, \bfu}< 0$. 
Now, let us define the cosine function $\cos(\bfa,\bfb):=\frac{\vecprod{\bfa,\bfb}}{\|\bfa\|\|\bfb\|}$ to represent the cosine of the angle between two vectors, then we can define the convex cone $C$ with $\mathbf{x}_{t-1}$ as its vertex, as shown below:
\begin{equation}
C:=\set{\mathbf{v}\in\mathbb{R}^n \mathbin{|} \cos(\mathbf{v}, \mathbf{x}) \geq \frac{R}{\|\mathbf{x}\|}}.
\end{equation}
Fig. \ref{fig:proof_notations} demonstrates the convex cone $C$ in $\mathbb{R}^3$.
For $\bfv\in S\cap C$ that satisfies $\cos(\mathbf{v}, \mathbf{x})=R \mathbin{/} \|\mathbf{x}\|$,
the equation $\|\mathbf{v}-\mathbf{x}\|^2=\|\mathbf{x}\|^2-\|\mathbf{v}\|^2$ holds,
\textit{i.e.,} $\mathbf{v}$ is the tangent point of the tangent line from $\mathbf{x}$ to the surface of $B$.

To make the feasible region of the optimization problem (3) in Theorem~\ref{theorm:1} nonempty,
we need to make an assumption about the positional relationship between $\mathbf{x}$ and the ball $B$.
Let $\Pi_H:\mathbb{R}^n\mapsto H$ denote the orthogonal projection from $\mathbb{R}^n$ onto the hyperplane $H$, we make the following assumption:
\begin{assumption}\label{asspt:x-position}
	$\Pi_H(\bfx)\in C$
\end{assumption}
Note that Assumption \ref{asspt:x-position} is not really an ``assumption'': it essentially means that there is a tangent point on $S\cap H_{\ge 0}$, which is in the adversarial region.
Assumption \ref{asspt:x-position} means the feasible region of the optimization problem \eqref{eqn:objective} is a nonempty set.
By repeatedly reducing the radius $R$, the algorithm guarantees that the optimal tangent point is in the adversarial region, thereby making Assumption~\ref{asspt:x-position} always hold.
In addition, according to Assumption \ref{asspt:x-position}, $\|\Pi_H(\mathbf{x})\|\ge R$ holds.

In Theorem~\ref{theorm:1}, $\mathbf{k}$ is an arbitrary point on the surface of the hemisphere $B \cap H_{\geq 0}$, so this proof mainly focuses on points in this region.
In the following text, the hemisphere is denoted as $B':= B\cap H_{\geq 0}$, and its surface is denoted as $S':=S\cap H_{\geq 0}$ for brevity.
Now, let us pick up any $\mathbf{k}\in S'$\footnote{Note that $\mathbf{k}$ defined here may not be a tangent point on the ball.}, 
and then the intersection point between the hyperplane $H$ and the line passing through $\mathbf{x}$ and $\mathbf{k}$ is denoted as $\mathbf{y}_{\mathbf{k}}$.
Then, $(\mathbf{y}_{\mathbf{k}},\lambda)$ is the unique solution of the following equation system:
\begin{equation}
\label{eqn:y_k}
\begin{aligned}
& \mathbf{y}_{\mathbf{k}}=\lambda \mathbf{k} + (1-\lambda)\mathbf{x}, \\
& \vecprod{\mathbf{y}_{\mathbf{k}},\mathbf{u}} = 0,\\
& 0\leq\lambda\leq 1.
\end{aligned}
\end{equation}

Because the position of $\mathbf{k}$ determines the distance between $\mathbf{y}_{\mathbf{k}}$ and $\mathbf{x}$, 
we can define the function $f(\mathbf{k}):=\|\mathbf{y}_{\mathbf{k}}-\mathbf{x}\|$ to represent the distance between $\mathbf{x}$ and $\mathbf{y}_{\mathbf{k}}$.

\subsection{Proof}

\begin{figure}[t]
	\centering
	\def\r{2.5}
	\tdplotsetmaincoords{75}{145}
	\tikzset{zxplane/.style={canvas is zx plane at y=#1}}
	\tikzset{yxplane/.style={canvas is yx plane at z=#1}}
	\tikzset{yzplane/.style={canvas is yz plane at x=#1}}
	\begin{tikzpicture}[tdplot_main_coords,scale=1]
	\begin{scope}[thin]
	\draw[-stealth] (-2.4*\r,0,0) -- (2.4*\r,0,0) node[anchor=north east,thin] {$x$};
	\draw[-stealth] (0,-2.4*\r,0) -- (0,2.4*\r,0) node[anchor=north west,thin] {$y$};
	\draw[-stealth] (0,0,-\r) -- (0,0,1.3*\r) node[anchor=south,thin] {$z$};
	\coordinate (O) at (0,0,0);
	\coordinate (x) at  (1.5*\r,0,-0.5*\r);
	
	\coordinate (y_k) at  (2.7335, 0.6944, 0.0000);
	\draw[magenta] (x) --node[above left] {$f(\mathbf{k})$}  (y_k);
	\draw[decorate,decoration={brace,raise=1pt,amplitude=4pt,mirror},draw=magenta] (y_k) -- (x);
	\filldraw[fill=blue!20,opacity=0.6,draw,thin] (-2*\r,-2*\r,0) -- (-2*\r,2*\r,0) -- (2*\r,2*\r,0) -- (2*\r,-2*\r,0) -- cycle node[black] at (-1.7*\r,1.7*\r,0) {$H$};
	\draw[thin] (-2*\r,0,-\r) -- (-2*\r,0,1.3*\r) -- (2*\r,0,1.3*\r) -- (2*\r,0,-\r) -- cycle node[black] at (-1.7*\r,0,1.1*\r) {$V$};
	\draw[tdplot_screen_coords,thick] (-\r,0,0) arc (180:0:\r) node at (0,0,-0.85*\r) {$B$}; 
	
	\draw[thick,dashed,canvas is xy plane at z = 0] (\r,0) arc (0:-180:\r);
	\draw[thick,canvas is xy plane at z = 0] (\r,0) arc (0:180:\r);
	
	\end{scope}

	\coordinate  (k) at (1.920286436967152, 0.5*\r, 1.0);
	\coordinate  (k') at (1.920286436967152, 0, 1.0);
	\coordinate  (k'') at (1.920286436967152, -0.5*\r, 1.0);
	\coordinate (k*) at (2.1124, 0.0000, 1.3371);
	
	\fill[blue] (k') circle[radius=1.5pt] node[anchor=south west,blue] {$\mathbf{k}'$};
	\fill[blue] (k'') circle[radius=1.5pt] node[anchor=south west,blue] {$\mathbf{k}''$};
	\draw[blue] (O) -- node[above] {$R$}  (k);
	\draw[decorate,decoration={brace,raise=1pt,amplitude=2pt,mirror},draw=blue] (O) --(k);
	\fill (x) circle[radius=1.5pt] node[anchor=north west] {$\mathbf{x}$};
	\fill (y_k) circle[radius=1.5pt] node[anchor=north west] {$\mathbf{y}_{\mathbf{k}}$};
	\draw[blue,densely dotted] (k) -- (k'');
	\shade[
	tdplot_screen_coords,hemispherefront,
	delta angle=-180,
	x radius=\r,
	] (0,0)
	circle [radius=\r];
	
	\fill (O) circle[radius=2pt] node[anchor=south west] {$\mathbf{x}_{t-1}$};
	\fill[red] (k*) circle[radius=1.5pt] node[anchor=south west,red] {$\mathbf{k}^*$};
	\fill (k) circle[radius=1.5pt] node[anchor=south west] {$\mathbf{k}$};
	\draw[black] (k) -- (y_k);
	\draw[black,-stealth] (-1.6*\r,1.6*\r,0) -- (-1.6*\r,1.6*\r,1.0) node[anchor=south west] {$\mathbf{u}$}; 
	\end{tikzpicture}
	\caption{Illustration of the points used in proving Lemma \ref{lemma}, where $\mathbf{k}''$ is the mirror point of $\mathbf{k}$ with respect to the plane $V$, and $\mathbf{k}'$ is the projection of $\mathbf{k}$ onto the plane $V$.}
	\label{fig:3d_notations}
\end{figure}

To prove Theorem~\ref{theorm:1}, we turn to prove the following lemma, which is equivalent to Theorem~\ref{theorm:1}. 
\begin{lemma}
	\label{lemma}
	Let $S', f$ be defined as above, then minimizing $f$ over the feasible region $S'$ is equivalent to finding the point $\mathbf{k}$ from the set $S'\cap C$ that is farthest away from $H$, \textit{i.e.,}
	\begin{equation}
	\label{eqn:lemma}
	\argmin_{\mathbf{k}\in S'}f(\mathbf{k})=\argmax_{\mathbf{k}\in (S'\cap C)}\vecprod{\mathbf{k}, \mathbf{u}}.
	\end{equation}
	In addition, we can replace $S'$ with $B'$ in the above equation, and the optimal solution of $f(\mathbf{k})$ does not change. In other words, when the feasible region is $B'$, the optimal solution can be always obtained at the surface of $B'$. Thus, the following equation holds:
	\begin{equation}
	\label{eqn:chain_eq}
	\argmin_{\mathbf{k}\in S'}f(\mathbf{k})=\argmin_{\mathbf{k}\in B'}f(\mathbf{k})=\argmax_{\mathbf{k}\in (S'\cap C)}\vecprod{\mathbf{k}, \mathbf{u}}=\argmax_{\mathbf{k}\in (B'\cap C)}\vecprod{\mathbf{k}, \mathbf{u}}.
	\end{equation}
\end{lemma}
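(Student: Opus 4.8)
The plan is to convert both optimizations into a single statement about a point on the hyperplane $H$. First I would record the elementary closed form for $f$. Since $\mathbf{x}\in H_{\le 0}$ and $\mathbf{k}\in H_{\ge 0}$, the crossing point $\mathbf{y}_{\mathbf{k}}$ defined by Eq.~\eqref{eqn:y_k} lies between them, and writing $d:=-\vecprod{\mathbf{x},\mathbf{u}}>0$ and $h(\mathbf{k}):=\vecprod{\mathbf{k},\mathbf{u}}\ge 0$, similar triangles give $\lambda=d/(d+h(\mathbf{k}))$ and hence $f(\mathbf{k})=d\,\|\mathbf{k}-\mathbf{x}\|/(d+h(\mathbf{k}))$. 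Next, because $\mathbf{y}_{\mathbf{k}}\in H$ and $\Pi_H(\mathbf{x})$ is the foot of $\mathbf{x}$ on $H$, Pythagoras gives $f(\mathbf{k})^2=d^2+\|\mathbf{y}_{\mathbf{k}}-\Pi_H(\mathbf{x})\|^2$. Thus minimizing $f$ over $S'$ is \emph{equivalent} to finding the $\mathbf{k}\in S'$ whose induced crossing $\mathbf{y}_{\mathbf{k}}$ is closest, within $H$, to the foot $\Pi_H(\mathbf{x})$.

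The heart of the argument is to identify this closest crossing geometrically. A point $\mathbf{y}\in H$ is of the form $\mathbf{y}_{\mathbf{k}}$ for some $\mathbf{k}\in S'$ exactly when the ray from $\mathbf{x}$ through $\mathbf{y}$, continued into $H_{\ge 0}$, meets the sphere $S$; equivalently $\mathbf{y}$ lies in the shadow cast on $H$ by the ball $B$ from the viewpoint $\mathbf{x}$, whose boundary is traced by the tangent lines from $\mathbf{x}$ to $B$ and therefore corresponds precisely to tangent points $\mathbf{k}\in S\cap\partial C$. Assumption~\ref{asspt:x-position}, which guarantees $\|\Pi_H(\mathbf{x})\|\ge R$, says exactly that the vertical ray through $\mathbf{x}$ does not enter $B$, so the foot $\Pi_H(\mathbf{x})$ lies outside this shadow; consequently the achievable crossing closest to $\Pi_H(\mathbf{x})$ must sit on the shadow boundary, i.e. it is $\mathbf{y}_{\mathbf{k}}$ for a \emph{tangent} point $\mathbf{k}$. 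To single out the correct one among the whole circle of tangent points, I would use the reflection symmetry across the plane $V=\mathrm{span}(\mathbf{x},\mathbf{u})$: it fixes $\mathbf{x}$, $\mathbf{u}$, $\Pi_H(\mathbf{x})$ and preserves $S'$, $C$, $f$ and $h$, so the minimizer may be taken in $V$, where the picture collapses to the two-dimensional tangent-line construction of Fig.~\ref{fig:3D_tangent_attack} and pins down the single upper tangent point $\mathbf{k}^{*}\in S'\cap\partial C$. The same symmetry, together with the fact that $C$ opens toward the direction $\mathbf{x}\in H_{\le 0}$, shows that $\vecprod{\mathbf{k},\mathbf{u}}$ is maximized over the cap $S'\cap C$ at this same $\mathbf{k}^{*}$ on the upper rim $\partial C$; this is why the cone restriction is present on the max side, as it discards the north pole of $S'$ and forces the maximizer onto the tangent circle. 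This establishes the central equality $\argmin_{\mathbf{k}\in S'}f(\mathbf{k})=\argmax_{\mathbf{k}\in S'\cap C}\vecprod{\mathbf{k},\mathbf{u}}=\{\mathbf{k}^{*}\}$.

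It remains to replace $S'$ by $B'$ on both sides, which I expect to be routine. On the minimization side, $f(\mathbf{k})$ depends on $\mathbf{k}$ only through the line joining $\mathbf{x}$ and $\mathbf{k}$, so every interior point of $B'$ shares its $f$-value with the surface point on the same ray from $\mathbf{x}$, giving $\min_{B'}f=\min_{S'}f$. On the maximization side, $\vecprod{\cdot,\mathbf{u}}$ is linear with $h(\mathbf{k})\ge 0$, and scaling a point radially away from the origin $\mathbf{x}_{t-1}$ keeps it in the cone $C$ and the half-space $H_{\ge 0}$ while increasing $\vecprod{\mathbf{k},\mathbf{u}}$, so the maximum over $B'\cap C$ is attained on $S'\cap C$. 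Chaining the four identities then yields Eq.~\eqref{eqn:chain_eq}, and since $\mathbf{k}^{*}\in V$ we also recover the final clause of Theorem~\ref{theorm:1}. The main obstacle is the core step of the second paragraph: making the shadow/achievability characterization rigorous, in particular verifying that the foot lying outside the shadow (Assumption~\ref{asspt:x-position}) forces the nearest achievable crossing onto the tangent boundary rather than into the interior or onto the equator, and then using the reflection across $V$ to reduce the $n$-dimensional circle of tangent points to the single planar tangent point $\mathbf{k}^{*}$. The supporting facts (the ratio formula, the Pythagorean reduction, and the boundary arguments above) are elementary by comparison.
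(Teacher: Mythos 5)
Your proposal is correct in outline and takes a genuinely different route from the paper's appendix proof. The paper first proves the contraction inequality $f\left(\Pi_V(\mathbf{k})\right)\leq f(\mathbf{k})$ to collapse the problem onto the plane $V$, and then finishes with a planar trigonometric two-case analysis, writing $f(\mathbf{k}')=|\vecprod{\mathbf{x},\mathbf{u}}|\mathbin{/}\cos(\theta_1\mp\theta_2)$ and using the cone $C_0$ to show that the tangent point maximizes $\theta_2$; it then reruns the same planar reduction with $\vecprod{\mathbf{k},\mathbf{u}}$ in place of $f$. You instead work on the hyperplane $H$ itself: the identity $f(\mathbf{k})^2=d^2+\|\mathbf{y}_{\mathbf{k}}-\Pi_H(\mathbf{x})\|^2$ (with $d:=-\vecprod{\mathbf{x},\mathbf{u}}$) converts the minimization into a nearest-point problem between the foot $\Pi_H(\mathbf{x})$ and the set of achievable crossings, and your max-side argument (a linear function on a spherical cap whose pole lies outside the cap, plus radial scaling to pass from $B'$ to $S'$) is cleaner than the paper's repetition of the planar case analysis; in effect you rigorize the red-disk picture of Fig.~\ref{fig:geo_explain}, which the paper offers only as intuition. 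The step you flag as the main obstacle does close, with one correction: the achievable set is the shadow of the \emph{half-ball} $B'$, not of the full ball $B$ as your ``equivalently'' clause suggests (a ray that traverses $B$ only within $H_{\leq 0}$ yields no admissible $\mathbf{k}\in S'$); concretely it equals $H$ intersected with the convex cone $\{\mathbf{x}+t(\mathbf{b}-\mathbf{x}) \mid \mathbf{b}\in B',\, t\geq 0\}$, hence is closed and convex, so the nearest point to the foot --- which lies outside it precisely by Assumption~\ref{asspt:x-position}, equivalent to $\|\Pi_H(\mathbf{x})\|\geq R$ --- is unique and sits on the relative boundary, and uniqueness plus the reflection symmetry across $V$ forces it into $V$. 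There the shadow is the segment from the far equator point to the tangent crossing, which lies at distance at least $R$ from $\mathbf{x}_{t-1}$ on the foot's side; this rules out the equator and the interior, disposing of exactly the contingency the paper handles as its second case ($\cos(\theta_1+\theta_2)$). Tangency of the optimal line also gives uniqueness of the argmin over $B'$, since that line meets $B'$ only at $\mathbf{k}^*$, which tidies your ray-exit reduction from $B'$ to $S'$. What each approach buys: the paper's computation is elementary and fully self-contained, while yours is conceptually sharper --- it localizes all the geometry in $H$ and makes transparent why the cone constraint must appear on the max side (without it the maximizer over $S'$ would be the north pole $R\mathbf{u}$, i.e.\ HSJA's jump direction) --- but it requires the convexity bookkeeping above to be a complete proof rather than a plan.
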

\begin{proof}
	By simplifying the original problem to a two-dimensional plane, the proof of Lemma \ref{lemma} will be readily apparent.
	Let $V:=\mathrm{span}(\{\bfx,\bfu\})$ be the plane spanned by $\mathbf{x}$ and $\mathbf{u}$. 
	It is easy to observe $B$, $S$, $C$, $B'$, and $S'$ are symmetrical about the plane $V$.
	Next, we will show that for any $\mathbf{k}\in B'$, there must exist a point $\mathbf{k}^*\in S' \cap V$ such that $f(\mathbf{k}^*) \leq f(\mathbf{k})$. 
	To find the $\mathbf{k}^*$ that satisfies the condition, we introduce the notation $\Pi_V:\mathbb{R}^n\mapsto V$ to denote the projection from $\mathbb{R}^n$ to $V$.
	
	Now, take any $\mathbf{k}\in B'$, and use $\mathbf{k}''$ to denote the mirror point of $\bfk$ with respect to $V$, as shown in Fig.~\ref{fig:3d_notations}.
	The projection point $\Pi_V(\mathbf{k})$ is the midpoint of the line between $\mathbf{k}$ and $\mathbf{k}''$, 
	\textit{i.e.,} $\mathbf{k}''=2\Pi_V(\mathbf{k}) - \mathbf{k}$. Note that if $\mathbf{k} \in V$, then $\mathbf{k}$, $\mathbf{k}'$ and $\mathbf{k}''$ coincide.
	Since $B'$ is symmetrical about the plane $V$, we have $\mathbf{k}'' \in B'$. 
	Now since $B'$ is the intersection of two convex sets $B$ and $H_{\geq 0}$, we know that $B'$ is also a convex set.
	Notice that $\Pi_V(\mathbf{k}) = \frac{1}{2} \cdot (\mathbf{k} + \mathbf{k}'')$ is a convex combination of $\mathbf{k}$ and $\mathbf{k}''$, and $B'$ is a convex set, thus we conclude $\Pi_V(\mathbf{k}) \in B'$.
	
	Now, we will show that we can ignore any point outside of $V$, thus restricting the problem to the two-dimensional plane $V$. Formally, the following inequality holds for any $\mathbf{k}$:
	\begin{equation}
	\label{eqn:proof_last_goal}
	f\left(\Pi_V(\mathbf{k})\right)\leq f(\mathbf{k}).
	\end{equation}
	The above inequality is easy to prove.
	Because $\mathbf{x}\in V$, we have $\| \Pi_V(\mathbf{y}_{\mathbf{k}} - \mathbf{x}) \| = \| \Pi_V(\mathbf{y}_{\mathbf{k}}) - \mathbf{x} \|$. Therefore,
	\begin{equation}
	\label{eqn:substitute}
	f\left(\Pi_V(\mathbf{k})\right)= \| \Pi_V(\mathbf{y}_{\mathbf{k}}) - \mathbf{x} \| = \|\Pi_V(\mathbf{y}_{\mathbf{k}}-\mathbf{x})\| \leq \|\mathbf{y}_{\mathbf{k}}-\mathbf{x}\| =f(\mathbf{k}).
	\end{equation}
	Now, we can focus on the plane $V$ and find the optimal $\mathbf{k}^*$ on it such that $f(\mathbf{k}^*)\leq f\left(\Pi_V(\mathbf{k})\right)$.
	Let us define $C_0$ to denote the convex cone with the point $\mathbf{x}$ as the vertex, and its boundary is formed by all tangent lines from $\mathbf{x}$ to $B$:
	\begin{equation}
	C_0:=\set{\mathbf{v}\in\mathbb{R}^n \mathbin{|} \cos(\mathbf{v}-\mathbf{x}, -\mathbf{x})\geq\sqrt{1-\frac{R^2}{\|\mathbf{x}\|^2}}}.
	\label{eqn:C0}
	\end{equation}
	
	Let $\mathbf{k}' := \Pi_V(\mathbf{k})$ be the projection point of $\mathbf{k}$ onto the plane $V$ (see Fig. \ref{fig:2d}).
	Because $\mathbf{k}' \in B'$ and $\mathbf{k}'\in V$, we have $\mathbf{k}' \in \Pi_V(B')$.
	Now, we define $\mathbf{k}^*\in S'\cap C_0\cap V$ to be the tangent point from $\mathbf{x}$ to the semicircle $\Pi_V(B')$. We claim $\mathbf{k}^*$ is the optimal one that attains the minimum $f(\mathbf{k}')$ among all $\mathbf{k}'$ in $\Pi_V(B')$.
	We denote the angle between $-\mathbf{x}$ and $\mathbf{u}$ as $\theta_1$, \textit{i.e.,} $\theta_1 := \arccos\left(\cos(-\mathbf{x},\mathbf{u})\right)$.
	The angle between $-\mathbf{x}$ and $\mathbf{k}' - \mathbf{x}$ is denoted as $\theta_2$, \textit{i.e.,} $\theta_2 := \arccos\left(\cos(-\mathbf{x},\mathbf{k}'-\mathbf{x})\right)$.
	Based on the position of $\mathbf{k}'$ in $\Pi_V(B')$, there are two possible cases for the angle $\theta_2$, as shown in Fig. \ref{subfig:same_side} and Fig.~\ref{subfig:diff_side}, respectively. We discuss them separately below.
	\begin{figure}[t]
		\begin{minipage}[b]{.49\textwidth}
			\centering
			\begin{tikzpicture}[scale=0.8]
			\def\r{2.0}
			\newcommand\zjbjblue[4][7pt]{%
				\draw[color=blue] let \p1=(#2),\p2=(#4),\p0=(#3) in
				(#3)++({atan2(\y1-\y0,\x1-\x0)}:#1)
				--++({atan2(\y2-\y0,\x2-\x0)}:#1)
				--++({atan2(\y1-\y0,\x1-\x0)}:-#1);
			}
			
			\draw[very thick] (-1,3) -- (6,3) ;
			\node at (-1.3,2.9) {$H$};
			\coordinate (x) at  (1,2);
			\coordinate (x_projection) at (1,3);
			
			\fill (x_projection) circle[radius=1.5pt] node[anchor=south] {$\Pi_H(\mathbf{x})$};
			\draw[decorate,decoration={brace,raise=1pt,amplitude=3pt}] (x) -- node[left]{$\vert\vecprod{\mathbf{x}, \mathbf{u}}\vert$} (x_projection);
			\draw[thick, densely dashed] (x) -- node[left]{$\vert\vecprod{\mathbf{x}, \mathbf{u}}\vert$} (x_projection);
			\coordinate (O) at  (4,3);
			\draw[draw=black,densely dotted,thick] (O) circle[radius=\r]; 
			

			\coordinate (H) at (3.1339745962155616,3.5);
			
			
			
			\draw[-stealth,very thick] (0,3) -- (0,4) node[anchor=south] {$\mathbf{u}$};
			\coordinate (tangent) at (2.3101, 4.0697);
			\coordinate (tangent_other) at (3.2899, 1.1303);
			\fill (tangent) circle[radius=1.5pt] node
			[anchor=south,color=red] {$\mathbf{k}^*$};
			\fill (tangent_other) circle[radius=1.5pt];
			\coordinate (interset_tangent) at (1.6330, 3.0000);	
			\coordinate (y) at (1.88,3);
			
			\coordinate (k_prime) at (2.4,3.590909090909091);
			\draw[thick] (x) -- (k_prime);
			\fill (k_prime) circle[radius=1.5pt] node[anchor=north]{$\mathbf{k}'$};
			\node at (6.3,4.3) {$\Pi_V(B)$};
			\draw[thick] (k_prime) -- (O);
			\draw[thick] (x) -- (O);
			\draw[decorate,decoration={brace,raise=1pt,amplitude=4pt,mirror},draw=blue] (O) -- (tangent);
			\draw[blue,densely dashed,thick] (O) -- node[above right] {$R$}  (tangent);
			\draw[blue,densely dashed,thick] (O) -- (tangent_other);
			\draw[red,densely dashed,thick] (x) -- (tangent_other);
			\draw[red,densely dashed,thick] (x) -- (tangent);
			\zjbjblue[4pt]{O}{tangent}{x};
			\zjbjblue[4pt]{O}{tangent_other}{x};
			\pic [draw=black!50!green,text=black!50!green, "$\theta_1$", angle eccentricity=1.3,angle radius=0.45cm] {angle = O--x--x_projection};
			\pic [draw=red,text=red, "$\theta_2$", angle eccentricity=1.3,angle radius=0.65cm] {angle = O--x--k_prime};
			\draw[fill=yellow,nearly transparent]  (x) -- (5,8.319212273872225) -- (6.6,-0.12687016900301318) -- cycle;  
			\node[rectangle,rounded corners,text=black,
			fill=yellow!20,
			node font={\bfseries}]
			(C) at (0,1.3) {Cone $\Pi_V(C_0)$};
			\fill[blue] (y) circle[radius=1.5pt] node[anchor=north west]{$\mathbf{y}_{\mathbf{k}'}$};
			\fill (O) circle[radius=1.5pt] node[anchor=north west] {$\mathbf{x}_{t-1}$};
			\fill (x) circle[radius=1.5pt] node[anchor=east] {$\mathbf{x}$};
			\end{tikzpicture}
			\subcaption{$\mathbf{y}_{\mathbf{k}'}$ and $\Pi_H(\mathbf{x})$ are on the same side of $\mathbf{x}_{t-1}$.}
			\label{subfig:same_side}
		\end{minipage}
		\begin{minipage}[b]{.49\textwidth}
			\centering
			\begin{tikzpicture}[scale=0.8]
			\def\r{2.0}
			\newcommand\zjbjblue[4][7pt]{%
				\draw[color=blue] let \p1=(#2),\p2=(#4),\p0=(#3) in
				(#3)++({atan2(\y1-\y0,\x1-\x0)}:#1)
				--++({atan2(\y2-\y0,\x2-\x0)}:#1)
				--++({atan2(\y1-\y0,\x1-\x0)}:-#1);
			}
			
			\draw[very thick] (-1,3) -- (7,3) ;
			\node at (-1.3,2.9) {$H$};
			\coordinate (x) at  (1,2);
			\coordinate (x_projection) at (1,3);
			
			\fill (x_projection) circle[radius=1.5pt] node[anchor=south] {$\Pi_H(\mathbf{x})$};
			\draw[decorate,decoration={brace,raise=1pt,amplitude=3pt}] (x) -- node[left]{$\vert\vecprod{\mathbf{x}, \mathbf{u}}\vert$} (x_projection);
			\draw[thick, densely dashed] (x) -- node[left]{$\vert\vecprod{\mathbf{x}, \mathbf{u}}\vert$} (x_projection);
			\coordinate (O) at  (4,3);
			\draw[draw=black,densely dotted,thick] (O) circle[radius=\r]; 
			

			\coordinate (H) at (3.1339745962155616,3.5);
			
			
			
			\draw[-stealth,very thick] (0,3) -- (0,4) node[anchor=south] {$\mathbf{u}$};
			\coordinate (tangent) at (2.3101, 4.0697);
			\coordinate (tangent_other) at (3.2899, 1.1303);
			\fill (tangent) circle[radius=1.5pt] node
			[anchor=south,color=red] {$\mathbf{k}^*$};
			\fill (tangent_other) circle[radius=1.5pt];
			\coordinate (interset_tangent) at (1.6330, 3.0000);	
			\coordinate (y) at (4.815384615384614,3);
			
			\coordinate (k_prime) at (5.96,3.3);
			\draw[thick] (x) -- (k_prime);
			\fill (k_prime) circle[radius=1.5pt] node[anchor=south west]{$\mathbf{k}'$};
			\node at (6.3,4.3) {$\Pi_V(B)$};
			\draw[thick] (k_prime) -- (O);
			\draw[thick] (x) -- (O);
			\draw[decorate,decoration={brace,raise=1pt,amplitude=4pt,mirror},draw=blue] (O) -- (tangent);
			\draw[blue,densely dashed,thick] (O) -- node[above right] {$R$}  (tangent);
			\draw[blue,densely dashed,thick] (O) -- (tangent_other);
			\draw[red,densely dashed,thick] (x) -- (tangent_other);
			\draw[red,densely dashed,thick] (x) -- (tangent);
			\zjbjblue[4pt]{O}{tangent}{x};
			\zjbjblue[4pt]{O}{tangent_other}{x};
			\pic [draw=black!50!green,text=black!50!green, "$\theta_1$", angle eccentricity=1.3,angle radius=0.5cm] {angle = O--x--x_projection};
			\pic [draw=red,text=red, "$\theta_2$", angle eccentricity=1.2,angle radius=1.2cm] {angle = k_prime--x--O};
			\draw[fill=yellow,nearly transparent]  (x) -- (5,8.319212273872225) -- (6.6,-0.12687016900301318) -- cycle;  
			\node[rectangle,rounded corners,text=black,
			fill=yellow!20,
			node font={\bfseries}]
			(C) at (0,1.3) {Cone $\Pi_V(C_0)$};
			\fill (x) circle[radius=1.5pt] node[anchor=east] {$\mathbf{x}$};
			\fill (O) circle[radius=1.5pt] node[anchor=south west]
			{$\mathbf{x}_{t-1}$};
			\fill[blue] (y) circle[radius=1.5pt] node[anchor=north west]{$\mathbf{y}_{\mathbf{k}'}$};
			
			\end{tikzpicture}
			\subcaption{$\mathbf{y}_{\mathbf{k}'}$ and $\Pi_H(\mathbf{x})$ are on different sides of $\mathbf{x}_{t-1}$.}
			\label{subfig:diff_side}
		\end{minipage}
		\caption{Illustration of the problem reduced to the plane $V$.}
		\label{fig:2d}
	\end{figure}

	In the first case (Fig. \ref{subfig:same_side}), $\mathbf{y}_{\mathbf{k}'}$ and $\Pi_H(\mathbf{x})$ are on the same side of $\mathbf{x}_{t-1}$.
	By Assumption \ref{asspt:x-position}, we know that $\|\Pi_H(\mathbf{x})\| \ge R$, so $\cos(-\mathbf{x},\mathbf{u})=\sqrt{1-\|\Pi_H(\mathbf{x})\|^2 \mathbin{/} \|\mathbf{x}\|^2}\leq\sqrt{1-R^2\mathbin{/}\|\mathbf{x}\|^2}$.
	According to the definition of the convex cone $C_0$, $\mathbf{u}$ is outside $C_0$. 
	Notice that $\mathbf{x} \in C_0$ and $\mathbf{k}' \in \Pi_V(B')$, hence $\mathbf{k}'-\mathbf{x}$ is in the convex cone $\Pi_V(C_0)$.
	Therefore, based on the positions of the two vectors $\mathbf{u}$ and $\mathbf{k}'-\mathbf{x}$ with respect to the cone $\Pi_V(C_0)$, we conclude that $\theta_1 \ge \theta_2$.
	In such case, the distance function is $f(\mathbf{k'})=\|\mathbf{y}_{\mathbf{k'}}-\mathbf{x}\|=|\vecprod{\mathbf{x},\mathbf{u}}| \mathbin{/} \cos(\theta_1-\theta_2)$, as shown in Fig. \ref{subfig:same_side}. Because both $\mathbf{x}$ and $\mathbf{u}$ are fixed, the value of $\theta_1$ is fixed. Therefore, the only way to minimize $f(\mathbf{k}')$ is to maximize $\theta_2$.
	Among all possible choices of $\mathbf{k}'$ in $\Pi_V(B')$, the $\mathbf{k}'$ that maximizes the angle $\theta_2$ appears on the boundary of $\Pi_V(C_0) \cap H_{\ge 0}$.
	The only point that satisfies this condition is the tangent point $\mathbf{k}^*$.
	Therefore, in the first case, $\argmin_{\mathbf{k}\in B'}f(\mathbf{k})=\mathbf{k}^*$.
	
	In the second case (Fig. \ref{subfig:diff_side}), $\mathbf{y}_{\mathbf{k}'}$ and $\Pi_H(\mathbf{x})$ are on different sides of $\mathbf{x}_{t-1}$. 
	In this case, $\theta_2 \ge 0$. In particular, when $\theta_2 = 0$, $\mathbf{y}_{\mathbf{k}'}$ and $\mathbf{x}_{t-1}$ coincide.
	According to Assumption \ref{asspt:x-position}, $\theta_1 > 0$.
	The distance function can be defined as $f(\mathbf{k}') = \| \mathbf{y}_{\mathbf{k}'} - \mathbf{x} \| = |\vecprod{\mathbf{x},\mathbf{u}}| \mathbin{/}  \cos(\theta_1 + \theta_2)$ in this case.
	Because $\theta_1 > 0$ and $\theta_2 \ge 0$, the following inequality holds:
	\begin{equation}
	f(\mathbf{k}') = \frac{|\vecprod{\mathbf{x},\mathbf{u}}|}{\cos(\theta_1 + \theta_2)} \geq \frac{|\vecprod{\mathbf{x},\mathbf{u}}|}{\cos(\theta_1)} \geq \frac{|\vecprod{\mathbf{x},\mathbf{u}}|}{\cos(\theta_1 - \theta_2)}.
	\end{equation}
	According to the above inequality, the distance obtained from the second case is greater than or equal to the distance in the first case, and the distances in both cases are equal only if $\theta_2 = 0$.
	Therefore, we can still conclude that $f(\mathbf{k}') \ge f(\mathbf{k}^*)$, \textit{i.e.,} $\argmin_{\mathbf{k}\in B'}f(\mathbf{k})=\mathbf{k}^*$.

	Finally, we need to prove $\argmax_{\mathbf{k}\in (B'\cap C)}\vecprod{\mathbf{k},\mathbf{u}}=\mathbf{k}^*$, so that Eq. \eqref{eqn:chain_eq} holds.
	The overall proof process is similar to the above proof, except that all $f(\mathbf{k})$ in the above proof need to be replaced by $\vecprod{\mathbf{k},\mathbf{u}}$. Correspondingly, Eq. \eqref{eqn:proof_last_goal} needs to be changed to the following formula:
	\begin{equation}
	\label{eqn:substitute_last_goal}
	\vecprod{\mathbf{k}^*,\mathbf{u}} \ge  \vecprod{\Pi_V(\mathbf{k}),\mathbf{u}} = \vecprod{\mathbf{k},\mathbf{u}}.
	\end{equation}
	Firstly, let us prove the equality part of Eq. \eqref{eqn:substitute_last_goal}: when projecting any $\mathbf{k} \in (B'\cap C)$ onto the plane $V$, the value of $\vecprod{\mathbf{k}, \mathbf{u}}$ does not change. Thus, we have $\vecprod{\Pi_V(\mathbf{k}),\mathbf{u}} = \vecprod{\mathbf{k},\mathbf{u}}$.
	Secondly, we prove the inequality part of Eq. \eqref{eqn:substitute_last_goal}: $\vecprod{\mathbf{k}^*,\mathbf{u}} \ge \vecprod{\Pi_V(\mathbf{k}),\mathbf{u}}$.
	Now the problem is reduced to the plane $V$ again. Because $\Pi_V(\mathbf{k}) \in (B'\cap C \cap V)$,
	only the first case mentioned above can happen (Fig.~\ref{subfig:same_side}).  
	By a similar argument, we conclude that $\argmax_{\mathbf{k}\in (B'\cap C)}\vecprod{\mathbf{k},\mathbf{u}}=\mathbf{k}^*$ holds, which proves Lemma \ref{lemma}. Consequently, Theorem~\ref{theorm:1} holds.	
\end{proof}

\section{Experimental Settings}
In this section, we provide the hyperparameter settings of the compared methods, \textit{i.e.,} HSJA \cite{chen2019hopskipjumpattack}, BA~\cite{brendel2018decisionbased}, Sign-OPT \cite{cheng2019sign}, and SVM-OPT \cite{cheng2019sign}.

\begin{table}[h]
	\small
	\tabcolsep=0.1cm
	\setlength{\abovecaptionskip}{0pt}%
	\setlength{\belowcaptionskip}{0pt}%
	\caption{The hyperparameters of HSJA.}
	\label{tab:HSJA}
	\begin{center}
		\begin{tabular}{c|p{8cm}|c}
			\toprule
			Dataset & \multicolumn{1}{c|}{Hyperparameter} & Value \\
			\midrule
			\multirow{5}{*}{CIFAR-10} & $\gamma$, threshold of the binary search & 1.0\\
			& $B_0$, the initial batch size for gradient estimation & 100 \\
			& $B_\text{max}$, the maximum batch size for gradient estimation  & \nn{10000} \\
			& the search method for step size  & geometric progression \\
			& number of iterations & 64 \\
			\Xcline{1-3}{0.1pt}
			\multirow{5}{*}{ImageNet} & $\gamma$, threshold of the binary search & \nn{1000.0}\\
			& $B_0$, the initial batch size for gradient estimation  & 100 \\
			& $B_\text{max}$, the maximum batch size for gradient estimation  & \nn{10000} \\
			& the search method for step size & geometric progression \\
			& number of iterations & 64 \\
			\bottomrule
		\end{tabular}
	\end{center}
\end{table}

\begin{table}[h]
	\small
	\tabcolsep=0.1cm
	\setlength{\abovecaptionskip}{0pt}%
	\setlength{\belowcaptionskip}{0pt}%
	\caption{The hyperparameters of BA.}
	\label{tab:BA}
	\begin{center}
		\begin{tabular}{p{10cm}|c}
			\toprule
			\multicolumn{1}{c|}{Hyperparameter} & Value \\
			\midrule
			maximum number of trials per iteration & 25 \\
			number of iterations & \nn{1200} \\
			spherical step size & 0.01 \\
			source step size & 0.01 \\
			step size adaptation multiplier & 1.5 \\
			disable automatic batch size tuning & False \\
			generate candidates and random numbers without using multithreading  & False \\
			
			\bottomrule
		\end{tabular}
	\end{center}
\end{table}

\begin{table}[!h]
	\small
	\tabcolsep=0.1cm
	\setlength{\abovecaptionskip}{0pt}%
	\setlength{\belowcaptionskip}{0pt}%
	\caption{The hyperparameters of Sign-OPT.}
	\label{tab:SignOPT}
	\begin{center}
		\begin{tabular}{p{12cm}|c}
			\toprule
			\multicolumn{1}{c|}{Hyperparameter} & Value \\
			\midrule
			$k$, number of queries for estimating an approximate gradient & 200 \\
			$\alpha$, the update step size of the direction $\theta$ & 0.2 \\
			$\beta$, used for the gradient estimation of $\theta$ and determining the stopping threshold of binary search & 0.001 \\
			the number of iterations & \nn{1000} \\
			the binary search's stopping threshold of the CIFAR-10 dataset & $\frac{\beta}{500}$ \\
			the binary search's stopping threshold of the ImageNet dataset & $1 \times 10^{-4}$ \\
			\bottomrule
		\end{tabular}
	\end{center}
\end{table}
\begin{table}[!h]
	\small
	\tabcolsep=0.1cm
	\setlength{\abovecaptionskip}{0pt}%
	\setlength{\belowcaptionskip}{0pt}%
	\caption{The hyperparameters of SVM-OPT.}
	\label{tab:SVMOPT}
	\begin{center}
		\begin{tabular}{p{12cm}|c}
			\toprule
			\multicolumn{1}{c|}{Hyperparameter} & Value \\
			\midrule
			$k$, number of queries for estimating gradients & 100 \\
			$\alpha$, the step size of the gradient descent of $\theta$ & 0.2 \\
			$\beta$, used for the gradient estimation of $\theta$ and determining the stopping threshold of binary search & 0.001 \\
			the number of iterations & \nn{1000} \\
			the binary search's stopping threshold of the CIFAR-10 dataset & $\frac{\beta}{500}$ \\
			the binary search's stopping threshold of the ImageNet dataset & $1 \times 10^{-4}$
			\\
			\bottomrule
		\end{tabular}
	\end{center}
\end{table}

\textbf{Experimental Equipment.} The experiments of all compared methods are conducted by using PyTorch 1.7.1 framework on a NVIDIA 1080Ti GPU.

\textbf{HSJA.} Hyperparameters of HSJA \cite{chen2019hopskipjumpattack} are listed in Table \ref{tab:HSJA}. We translate the implementation code into the PyTorch version for the experiments. In the experiments of targeted attacks, we randomly select an image from the target class as the initial adversarial example. For fair comparison, we set the hyperparameters of TA and G-TA to be the same with HSJA, \textit{i.e.,} the same initial batch size $B_0$ and the same $\gamma$.

\textbf{BA.} Hyperparameters of BA \cite{brendel2018decisionbased} are listed in Table \ref{tab:BA}. In the experiments, we directly use the implementation of BA from Foolbox 2.0 \cite{rauber2017foolbox,rauber2017foolboxnative}, and adopt a randomly selected image from the target class as the initialization in the targeted attack.

\textbf{Sign-OPT and SVM-OPT.} Hyperparameters of Sign-OPT \cite{cheng2019sign} and SVM-OPT \cite{cheng2019sign} are listed in Tables~\ref{tab:SignOPT} and \ref{tab:SVMOPT}. We translate the implementation code into the PyTorch version for the experiments. In the experiments of targeted attacks, we set the initial direction $\theta_0$ of Sign-OPT and SVM-OPT to the direction of a randomly selected image of the target class.

\section{Experimental Results}

\subsection{Limitation of Tangent Attack}

The proposed approach supports all types of attacks, including both untargeted and targeted attacks under the both $\ell_2$ and $\ell_\infty$ norm constraints. This is the strength of the proposed approach.
However, in the $\ell_\infty$ norm attack, TA and G-TA obtain the similar performance to the baseline method HSJA. Because under the definition of the $\ell_\infty$ norm distance: $D_{\ell_\infty}(x,y):=\max_i(|x_i - y_i|), i \in \{1,\ldots, d\}$ ($d$ is the image dimension), the intersection of the tangent line and the decision boundary may not be the one with the shortest $\ell_\infty$ norm distance to the original image.
Therefore, searching the boundary sample along the tangent line cannot always outperform HSJA in the $\ell_\infty$ norm attack.



Tables \ref{tab:Linf_ImageNet_normal_models_result} and \ref{tab:Linf_CIFAR_normal_models_result} demonstrate the experimental results of attacking against undefended models on the CIFAR-10 and ImageNet datasets.

\begin{table}[h]
	\centering
	\small
	\tabcolsep=0.1cm
	\setlength{\belowcaptionskip}{0.5pt}%
	\caption{Mean $\ell_\infty$ distortions of different query budgets on the ImageNet dataset, where the radius ratio $r$ is set to 1.1 in G-TA. BA is not applicable to the $\ell_\infty$ norm attack, hence it is not listed.}
	\scalebox{0.8}{
		\begin{tabular}{cccccccc|cccccc}
			\toprule
			Target Model  & Method  & \multicolumn{6}{c|}{Targeted Attack} & \multicolumn{6}{c}{Untargeted Attack} \\
			& & @300&  @1K & @2K & @5K & @8K & @10K & @300&  @1K & @2K & @5K & @8K & @10K \\
			\midrule
			\multirow{5}{*}{Inception-v3}& Sign-OPT \cite{cheng2019sign} & 0.557 & 0.519 & 0.481 & 0.421 & 0.390 & 0.375 & 1.078 & 0.792 & 0.548 & 0.328 & 0.262 & 0.239\\
			& SVM-OPT \cite{cheng2019sign} &  0.558 & 0.512 & 0.476 & 0.423 & 0.397 & 0.385 & 1.079 & 0.763 & 0.526 & 0.336 & 0.280 & 0.260\\
			& HSJA \cite{chen2019hopskipjumpattack} &  0.370 & 0.330 & \B 0.289 & \B 0.211 & \B 0.169 & \B 0.147 & 0.305 & 0.236 & 0.174 & \B 0.093 & 0.069 & \B 0.059 \\                    
			& TA&   0.370 &0.330 & 0.291 & 0.216 & 0.172 & 0.149 & \B 0.304 & \B 0.234 & \B 0.173 & \B 0.093 & \B 0.068 & \B 0.059\\
			& G-TA &  \B 0.364 & \B 0.326 &\B 0.289 & 0.220 & 0.179 & 0.159 & \B 0.304 & 0.238 & 0.174 & \B 0.093 &\B 0.068 & \B 0.059\\
			\Xcline{1-14}{0.1pt}
			\multirow{5}{*}{Inception-v4}  & Sign-OPT \cite{cheng2019sign} &  0.545 & 0.504 & 0.464 & 0.402 & 0.370 & 0.355 & 1.176 & 0.867 & 0.603 & 0.369 & 0.296 & 0.270\\
			& SVM-OPT \cite{cheng2019sign} & 0.547 & 0.498 & 0.460 & 0.406 & 0.379 & 0.367 & 1.181 & 0.842 & 0.588 & 0.381 & 0.319 & 0.296 \\
			& HSJA \cite{chen2019hopskipjumpattack} &  0.357 & \B 0.324 & \B 0.287 & \B 0.215 & \B 0.175 & \B 0.152 & \B 0.336 & \B 0.257 & \B 0.185 & \B 0.091 & \B 0.060 & \B 0.048\\
			& TA & \B 0.354 & 0.328 & 0.294 & 0.221 & 0.182 & 0.161 & 0.337 & 0.264 & 0.196 & 0.103 & 0.073 & 0.062 \\
			& G-TA & \B 0.354 & \B 0.324 & 0.290 & 0.220 & 0.182 & 0.162 & 0.337 & 0.264 & 0.196 & 0.104 & 0.073 & 0.061 \\
			\Xcline{1-14}{0.1pt}
			\multirow{5}{*}{SENet-154} & Sign-OPT \cite{cheng2019sign} &  0.537 & 0.491 & 0.439 & 0.357 & 0.316 & 0.298  & 0.806 & 0.631 & 0.462 & 0.299 & 0.247 & 0.227\\
			& SVM-OPT \cite{cheng2019sign} & 0.538 & 0.480 & 0.429 & 0.357 & 0.322 & 0.307 & 0.807 & 0.608 & 0.444 & 0.306 & 0.262 & 0.246 \\
			& HSJA \cite{chen2019hopskipjumpattack} &  0.347 & \B 0.288 & \B 0.249 & \B 0.176 & \B 0.139 & \B 0.119 & \B 0.253 &  \B 0.195 &  \B 0.141 & 0.071 & 0.048 & 0.039\\
			& TA &  0.346 & 0.289 & 0.251 & 0.181 & 0.142 & 0.123 & \B 0.253 & 0.196 &  \B 0.141 & 0.071 & \B 0.047 & \B 0.038\\
			& G-TA & \B 0.344 & \B 0.288 &  0.252 &  0.179 &  0.142 &  0.123 & \B 0.253 & 0.196 & \B 0.141 & \B 0.070 & \B 0.047 & \B 0.038\\
			\Xcline{1-14}{0.1pt}
			\multirow{5}{*}{ResNet-101} & Sign-OPT \cite{cheng2019sign} &  0.549 & 0.501 & 0.450 & 0.370 & 0.329 & 0.309 & 0.645 & 0.515 & 0.385 & 0.251 & 0.206 & 0.190 \\
			& SVM-OPT \cite{cheng2019sign} &  0.550 & 0.492 & 0.444 & 0.371 & 0.335 & 0.319 & 0.642 & 0.494 & 0.370 & 0.258 & 0.220 & 0.206\\
			& HSJA \cite{chen2019hopskipjumpattack}& 0.340 & 0.283 & 0.247 & \B 0.179 & \B 0.143 & \B 0.125 & 0.197 & \B 0.140 & \B 0.098 & \B 0.049 & \B 0.034 & \B 0.028 \\
			& TA &  0.340 & 0.282 & \B 0.246 & 0.180 & \B 0.143 & \B 0.125 & \B 0.196 & 0.145 & 0.109 & 0.064 & 0.050 & 0.044 \\
			& G-TA & \B 0.337 & \B 0.280 &  \B 0.246 &0.182 & 0.150 &0.132 & \B  0.196 & 0.147 & 0.110 & 0.064 & 0.050 & 0.044 \\
			\bottomrule
	\end{tabular}	}
	\label{tab:Linf_ImageNet_normal_models_result}
\end{table}
\begin{table}[h]
	\centering
	\small
	\tabcolsep=0.1cm
	\setlength{\belowcaptionskip}{0.5pt}%
	\caption{Mean $\ell_\infty$ distortions of different query budgets on the CIFAR-10 dataset, where the radius ratio $r$ is set to $1.5$ in G-TA. BA is not applicable to the $\ell_\infty$ norm attack, and thus it is not listed.}
	\scalebox{0.8}{
		\begin{tabular}{cccccccc|cccccc}
			\toprule
			Target Model  & Method  & \multicolumn{6}{c|}{Targeted Attack} & \multicolumn{6}{c}{Untargeted Attack} \\
			& & @300&  @1K & @2K & @5K & @8K & @10K & @300&  @1K & @2K & @5K & @8K & @10K \\
			\midrule
			\multirow{5}{*}{PyramidNet-272}& Sign-OPT \cite{cheng2019sign} & 0.395 & 0.318 & 0.237 & 0.134 & 0.096 & 0.082 &  0.284 & 0.189 & 0.115 & 0.059 & 0.047 & 0.043 \\
			& SVM-OPT \cite{cheng2019sign} & 0.390  & 0.299 & 0.226 & 0.134 & 0.099 & 0.087 & 0.286 & 0.173 & 0.104 & 0.059 & 0.049 & 0.046  \\
			& HSJA \cite{chen2019hopskipjumpattack}& \B 0.218  & 0.155 & \B 0.112 & \B 0.057 & \B 0.039 &  0.032 &  \B 0.133 & \B 0.056 & \B 0.034 & \B 0.016 & \B 0.012 & 0.011  \\
			& TA &  0.219  & 0.154 & \B 0.112 & \B 0.057 & \B 0.039 &  0.032 &  0.134  & 0.057 & 0.035 & 0.017 & 0.013 & 0.011  \\
			& G-TA &  \B 0.218  & \B 0.153 & 0.113 & \B 0.057 &\B 0.039 & \B 0.031 & \B 0.133  &\B 0.056 &\B 0.034 &\B 0.016 & \B 0.012 & \B 0.010  \\
			\Xcline{1-14}{0.1pt}
			\multirow{5}{*}{GDAS}  & Sign-OPT \cite{cheng2019sign} & 0.398 & 0.332 & 0.266 & 0.153 & 0.107 & 0.089 & 0.305  & 0.269 & 0.231 & 0.185 & 0.167 & 0.160 \\
			& SVM-OPT \cite{cheng2019sign} & 0.389  & 0.325 & 0.267 & 0.164 & 0.118 & 0.100 & 0.304 & 0.257 & 0.219 & 0.181 & 0.168 & 0.163  \\
			& HSJA \cite{chen2019hopskipjumpattack}&  \B 0.210 & \B 0.147 & \B 0.112 & \B 0.060 & \B 0.040 & \B 0.031 & \B 0.049  & \B 0.029 & \B 0.020 & \B 0.011 & \B 0.009 & \B 0.008  \\
			& TA &  0.214  & 0.151 & 0.115 & 0.062 & 0.041 & 0.032 & \B 0.049 & \B 0.029 &\B 0.020 & \B 0.011 & \B 0.009 & \B 0.008  \\
			& G-TA &  0.214  & 0.151 & 0.116 & 0.062 & 0.041 & 0.032 & \B 0.049 & \B 0.029 & \B 0.020 & \B 0.011 &\B 0.009 & \B 0.008  \\
			\Xcline{1-14}{0.1pt}
			\multirow{5}{*}{WRN-28} & Sign-OPT \cite{cheng2019sign} &  0.402 & 0.307 & 0.225 & 0.121 & 0.086 & 0.074 &  0.200 & 0.130 & 0.085 & 0.053 & 0.044 & 0.041   \\
			& SVM-OPT \cite{cheng2019sign} &  0.382 & 0.296 & 0.223 & 0.128 & 0.093 & 0.080 &  0.201  & 0.121 & 0.079 & 0.052 & 0.045 & 0.043  \\
			& HSJA \cite{chen2019hopskipjumpattack}& \B 0.185 & \B 0.106 &  0.070 & 0.032 & \B 0.021 & \B 0.018 & \B 0.090 & 0.031 & \B 0.020 & \B 0.012 & \B 0.010 & \B 0.009  \\
			& TA &  0.186 & 0.107 & 0.070 & \B 0.031 & \B 0.021 & \B 0.018 &  \B 0.090 & \B 0.030 & \B 0.020 & \B 0.012 & \B 0.010 & \B 0.009  \\
			& G-TA & \B 0.185 & \B 0.106 &\B 0.069 & 0.032 & 0.022 & \B 0.018 & \B 0.090 & \B 0.030 &\B 0.020 & \B 0.012 &\B 0.010 & \B 0.009  \\
			\Xcline{1-14}{0.1pt}
			\multirow{5}{*}{WRN-40} & Sign-OPT \cite{cheng2019sign} &  0.397 & 0.305 & 0.220 & 0.120 & 0.085 & 0.073 &  0.284  & 0.208 & 0.125 & 0.051 & 0.042 & 0.039\\
			& SVM-OPT \cite{cheng2019sign} &  0.381  & 0.293 & 0.220 & 0.126 & 0.092 & 0.079 &  0.273  & 0.190 & 0.120 & 0.057 & 0.045 & 0.041  \\
			& HSJA \cite{chen2019hopskipjumpattack}& \B 0.194 & 0.111 & \B 0.072 & \B 0.032 & \B 0.022 & \B 0.019 &0.084 & 0.030 & \B 0.020 & \B 0.012 & \B 0.010 &\B 0.009  \\
			& TA & 0.195  &0.112 & 0.073 & \B 0.032 & \B 0.022 & \B 0.019 & \B 0.082  & \B 0.029 & \B 0.020 & \B 0.012 & \B 0.010 & \B 0.009  \\
			& G-TA & \B 0.194 & \B 0.110 & \B 0.072 & \B 0.032 &\B 0.022 & \B 0.019 & \B 0.082 & \B 0.029 & \B 0.020 & \B 0.012 & \B 0.010 & \B 0.009  \\
			\bottomrule
	\end{tabular}}
	\label{tab:Linf_CIFAR_normal_models_result}
\end{table}
The results of Tables \ref{tab:Linf_ImageNet_normal_models_result} and \ref{tab:Linf_CIFAR_normal_models_result} show that HSJA,
TA and G-TA obtain the similar average $\ell_\infty$ distortions. Therefore, although the proposed approach is applicable to $\ell_\infty$ norm attack, the performance is similar to that of the baseline method HSJA. 

\subsection{Experimental Results of Attacks against Defense Models}

We also conduct experiments by using $\ell_\infty$ norm attacks to break five defense models, and the experimental results are shown in Table \ref{tab:linf_CIFAR10_defenseive_models}. 
The conclusion drawn from this table is the same as that in Tables \ref{tab:Linf_ImageNet_normal_models_result} and \ref{tab:Linf_CIFAR_normal_models_result}: TA and G-TA obtain the similar performance with HSJA in $\ell_\infty$ norm attacks.
\begin{table}[h]
	\centering
	\small
	\tabcolsep=0.1cm
	\setlength{\belowcaptionskip}{0.5pt}%
	\caption{The experimental results of performing $\ell_\infty$ norm attacks against the defense models on the CIFAR-10 dataset, where the radius ratio $r$ is set to $1.5$ in G-TA.}
	\scalebox{0.8}{
		\begin{tabular}{cccccccc}
			\toprule
			Target Model  & Method  & \multicolumn{6}{c}{Untargeted Attack} \\
			& & @300&  @1K & @2K & @5K & @8K & @10K \\
			\midrule
			\multirow{5}{*}{AT \cite{madry2018towards}}& Sign-OPT \cite{cheng2019sign} & 0.731 &  0.519 &  0.395 &  0.288 &  0.255 & 0.243 \\
			& SVM-OPT \cite{cheng2019sign} & 0.719 &  0.498 &  0.382 &  0.287 &  0.261 & 0.251 \\
			&  HSJA \cite{chen2019hopskipjumpattack}& \B 0.181 & \B 0.145 &  \B 0.121 &  \B 0.090 &  0.080 & \B 0.075 \\
			&  TA &  0.184 &  0.147 & \B 0.121 & \B 0.090 & \B 0.079 & \B 0.075 \\
			&  G-TA &  \B 0.181 &  \B 0.145 & \B 0.121 &  \B 0.090 &  0.080 & \B 0.075 \\
			\Xcline{1-8}{0.1pt}
			\multirow{5}{*}{TRADES \cite{zhang2019theoretically}}  & Sign-OPT \cite{cheng2019sign}& 0.748 &  0.562 &  0.419 &  0.304 &  0.269 & 0.257  \\
			& SVM-OPT \cite{cheng2019sign}& 0.743 &  0.534 &  0.409 &  0.308 &  0.281 & 0.271 \\
			&  HSJA \cite{chen2019hopskipjumpattack}& \B 0.194 & \B 0.162 &  \B 0.137 & \B 0.106 & \B 0.095 & \B 0.090 \\
			&  TA &  0.195 &  0.163 &  0.138 &  0.107 & \B 0.095 & \B 0.090  \\
			&  G-TA  &  \B 0.194 &  0.163 &  0.138 &  0.107 & \B 0.095 & \B 0.090 \\
			\Xcline{1-8}{0.1pt}
			\multirow{5}{*}{JPEG \cite{guo2018countering}}& Sign-OPT \cite{cheng2019sign}&0.301 &  0.292 &  0.281 &  0.262 &  0.250 & 0.245  \\
			& SVM-OPT \cite{cheng2019sign}& 0.301 &  0.288 &  0.275 &  0.256 &  0.249 & 0.246 \\
			& HSJA \cite{chen2019hopskipjumpattack}&  0.094 & \B 0.086 & \B 0.078 & \B 0.066 &  \B 0.061 & \B 0.058 \\
			&  TA &  \B 0.093 &  0.087 &  0.080 &  0.067 & \B 0.061 & \B 0.058 \\
			&  G-TA &  0.097 &  0.091 &  0.081 &  0.068 &  0.062 & 0.059 \\
			\Xcline{1-8}{0.1pt}
			\multirow{5}{*}{Feature Distillation \cite{liu2019feature}} &  Sign-OPT \cite{cheng2019sign}&  0.344 &  0.330 &  0.317 &  0.290 &  0.273 & 0.266  \\
			& SVM-OPT \cite{cheng2019sign}& 0.354 &  0.338 &  0.323 &  0.297 &  0.284 & 0.279 \\
			& HSJA \cite{chen2019hopskipjumpattack}&  0.090 &  0.087 &  0.080 &   0.069 &  0.064 & 0.061 \\
			&  TA & \B 0.089 & \B 0.086 &  \B 0.079 &  0.070 &  0.063 & 0.060 \\
			&  G-TA &  0.090 & \B 0.086 &  \B 0.079 &  \B 0.067 &\B  0.062 & \B 0.059 \\
			\Xcline{1-8}{0.1pt}
			\multirow{5}{*}{Feature Scatter \cite{zhang2019defense}} & Sign-OPT \cite{cheng2019sign} & 0.561 &  0.380 &  0.246 &  0.135 &  0.110 & 0.101 \\
			& SVM-OPT \cite{cheng2019sign}& 0.550 &  0.344 &  0.222 &  0.137 &  0.116 & 0.110 \\
			&  HSJA \cite{chen2019hopskipjumpattack}& \B 0.202 & \B 0.137 & \B 0.104 &  \B 0.062 & \B 0.048 & \B 0.042 \\
			&  TA & \B 0.202 &  \B 0.137 & \B 0.104 & \B 0.062 & \B 0.048 & \B 0.042 \\
			&  G-TA &  0.205 &  0.139 &  0.105 & \B 0.062 & \B 0.048 & \B 0.042 \\
			\bottomrule
	\end{tabular}}
	\label{tab:linf_CIFAR10_defenseive_models}
\end{table}

Next, we conduct experiments by using $\ell_2$ norm attack to break different defense models on the CIFAR-10 and ImageNet datasets.
In the CIFAR-10 dataset, we select six types of defense models:
\begin{itemize}
	\item Adversarial Training (AT) \cite{madry2018towards}: the most effective defense method, which uses adversarial examples as the training data to obtain the robust classifier.
	\item TRADES \cite{zhang2019theoretically}: an improved AT that optimizes a regularized surrogate loss.
	\item JPEG \cite{guo2018countering}: a standard image compression algorithm based on the discrete cosine transform, which can remove the adversarial perturbations, thereby providing some degree of defense.
	\item Feature Distillation \cite{liu2019feature}: a defense method based on the improved JPEG image compression. Its defense mechanism is divided into two steps. 
	Firstly, it filters out adversarial perturbations by using a semi-analytical method. Secondly,
	it restores the classification accuracy of benign images by using a DNN-oriented quantization process.
	\item Feature Scatter \cite{zhang2019defense}: a feature scattering-based AT method, which is an unsupervised approach for generating adversarial examples during the training.
	\item ComDefend \cite{jia2019comdefend}: a defense model that consists of a compression CNN and a reconstruction CNN to transform the adversarial image into its clean version to defend against attacks.
\end{itemize}

In the ImageNet dataset, we directly use the publicly available AT models for experiments, all of which use the ResNet-50 networks as their backbones.
The pre-trained weights can be downloaded from  \url{https://github.com/MadryLab/robustness}.
In the experiments, we set the radius ratio $r$ of G-TA to 1.5, and the experimental results are shown in Fig. \ref{fig:attack_adv_train_ImageNet}.
In untargeted attacks (Figs. \ref{fig:ImageNet_untargeted_adv_train_L2}, \ref{fig:ImageNet_untargeted_adv_train_Linf}, \ref{fig:ImageNet_untargeted_adv_train_Linf_8}), the G-TA (the semi-ellipsoid version) outperforms the TA (the hemisphere version), and the baseline method HSJA outperforms TA and G-TA. 
We conjecture that it is because the classification decision boundaries of the AT models on the ImageNet dataset are extremely curved in untargeted attacks, resulting in the better performance of HSJA.
In targeted attacks (Figs. \ref{fig:ImageNet_targeted_adv_train_L2}, \ref{fig:ImageNet_targeted_adv_train_Linf}, \ref{fig:ImageNet_targeted_adv_train_Linf_8}), both TA and G-TA outperform HSJA in the attacks of different AT models. These results indicate that TA and G-TA are more suitable for the targeted attack. 
Another interesting finding is that SVM-OPT performs better in untargeted attacks while Sign-OPT performs better in targeted attacks.
We will explore the reasons for these results in the future work. 

\begin{figure}[t]
	\captionsetup[sub]{font=small}
	\setlength{\abovecaptionskip}{0pt}
	\setlength{\belowcaptionskip}{0pt}
	\centering 
	\begin{minipage}[b]{.3\textwidth}
		\includegraphics[width=\linewidth]{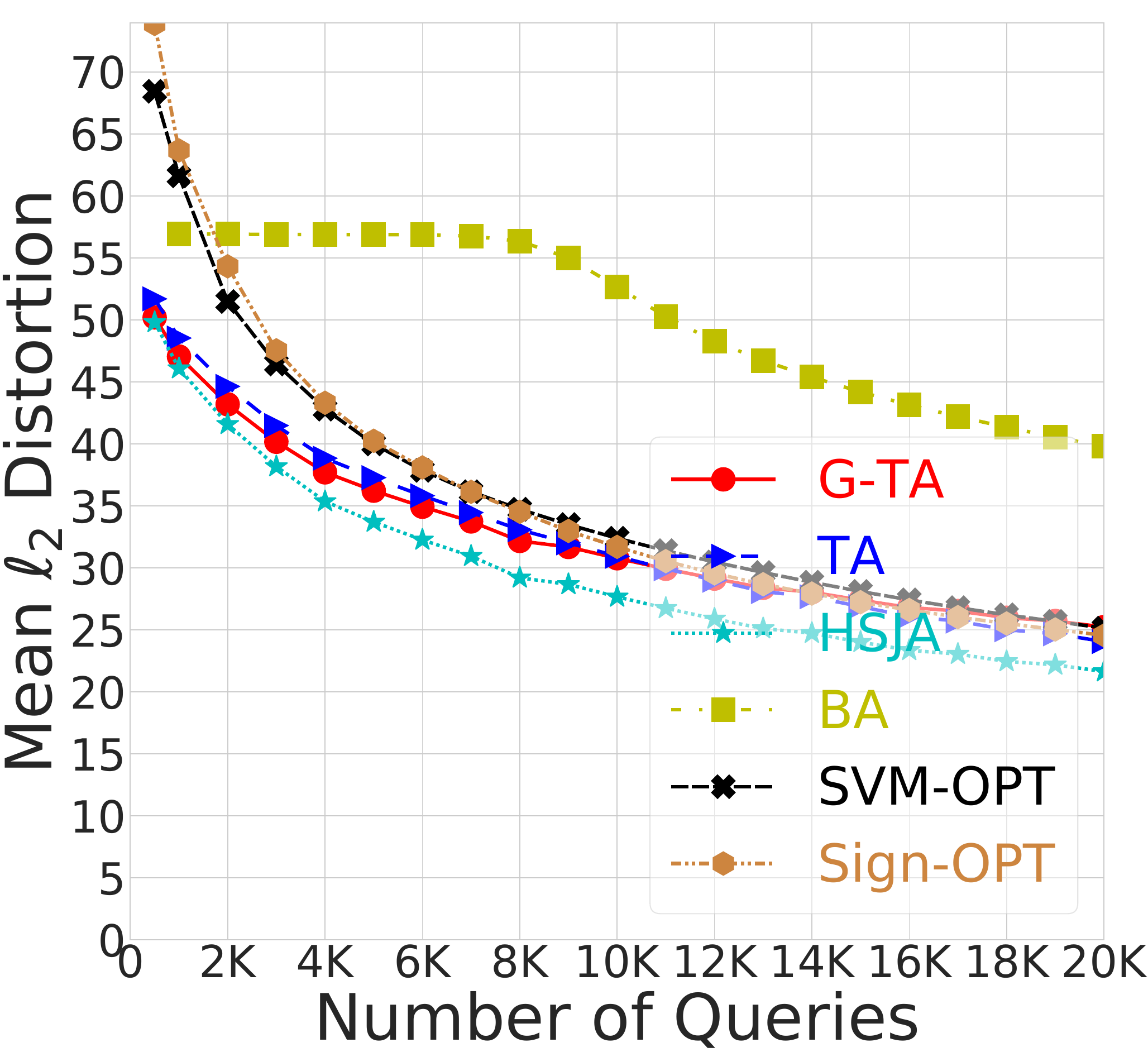}
		\subcaption{AT ($\ell_2$ norm $\epsilon = 3.0$)}
		\label{fig:ImageNet_untargeted_adv_train_L2}
	\end{minipage}
	\begin{minipage}[b]{.3\textwidth}
		\includegraphics[width=\linewidth]{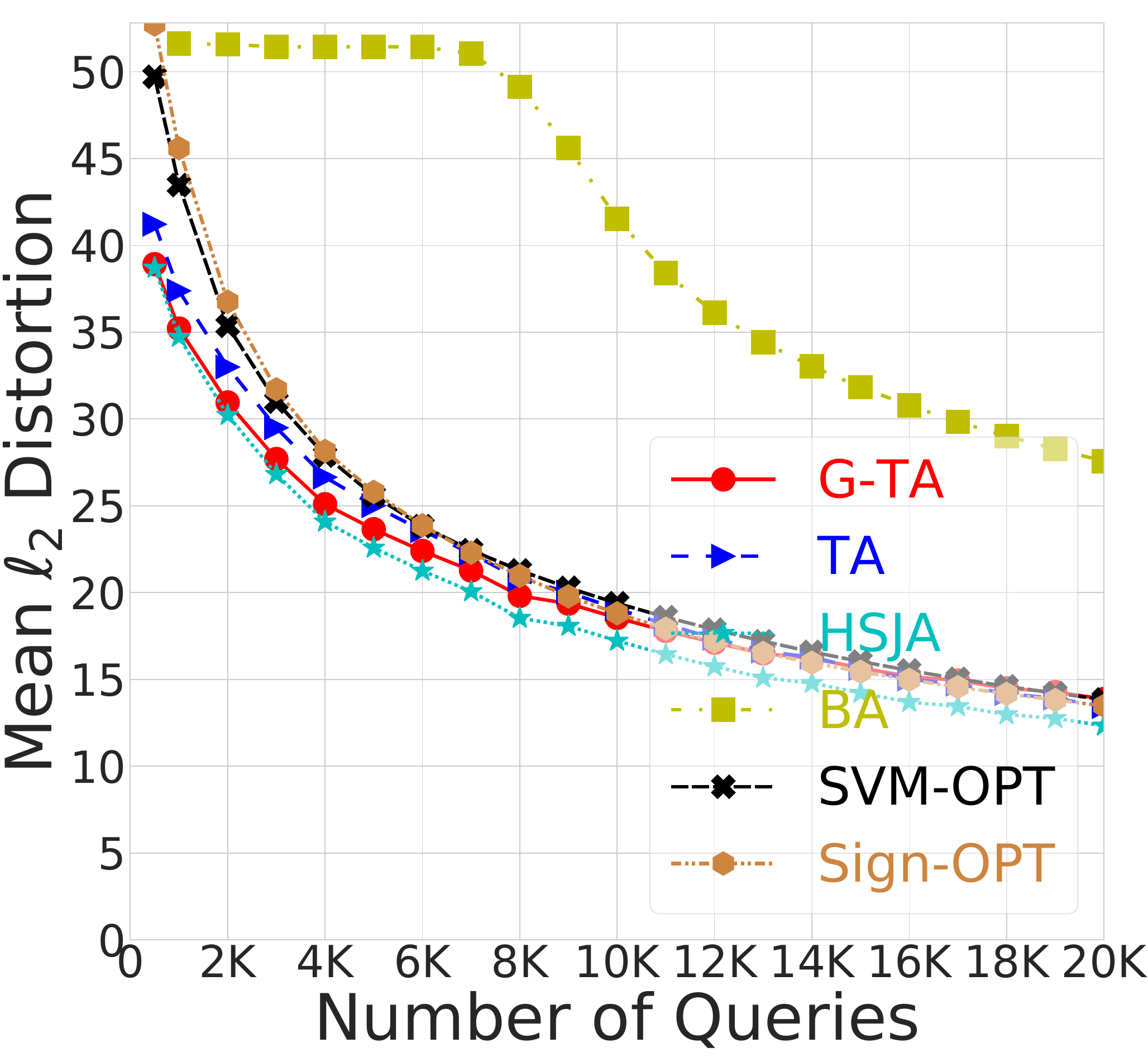}
		\subcaption{AT ($\ell_\infty$ norm $\epsilon = 4/255$)}
		\label{fig:ImageNet_untargeted_adv_train_Linf}
	\end{minipage}
	\begin{minipage}[b]{.3\textwidth}
		\includegraphics[width=\linewidth]{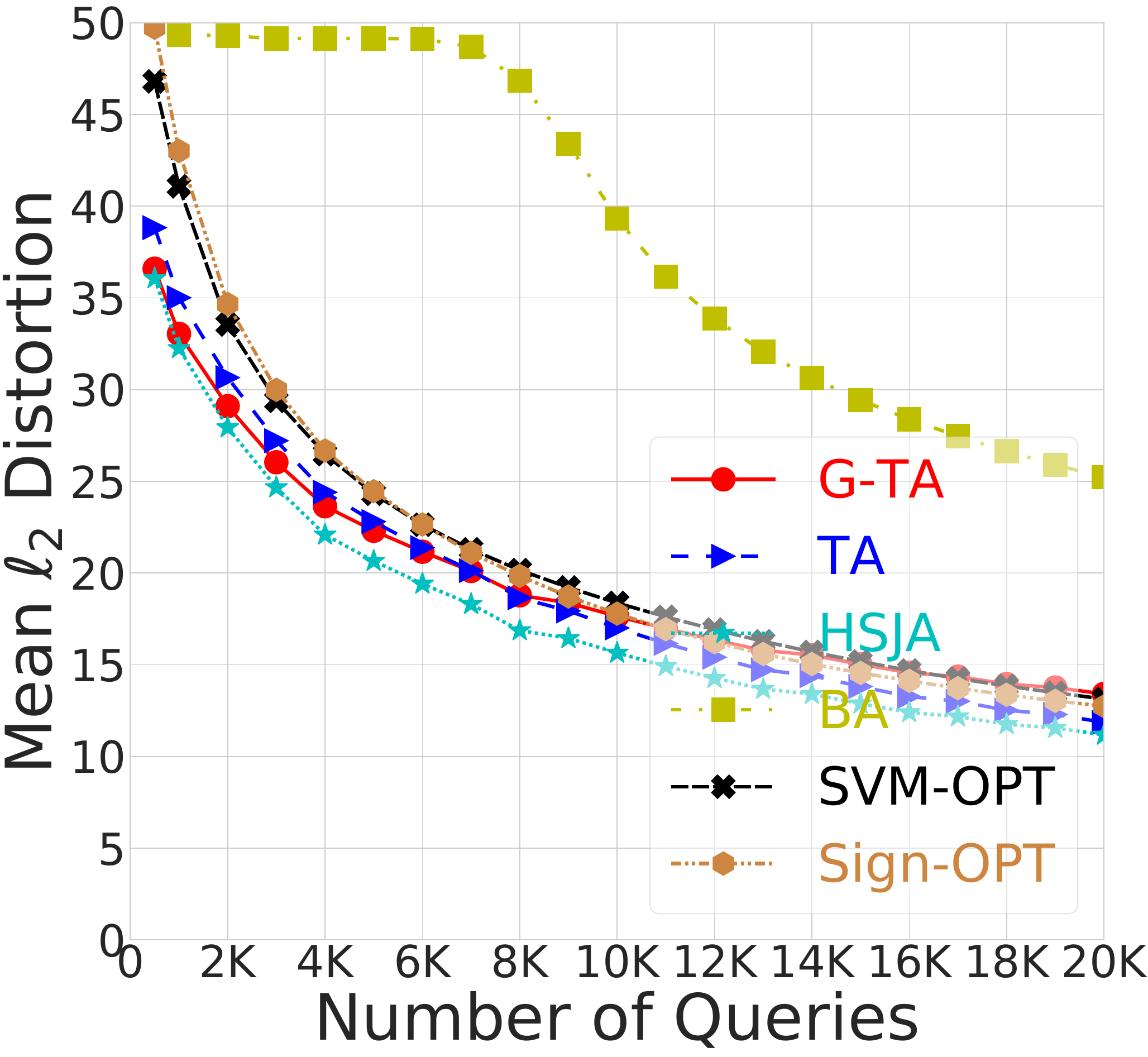}
		\subcaption{AT ($\ell_\infty$ norm $\epsilon = 8/255$)}
		\label{fig:ImageNet_untargeted_adv_train_Linf_8}
	\end{minipage}
	\begin{minipage}[b]{.3\textwidth}
		\includegraphics[width=\linewidth]{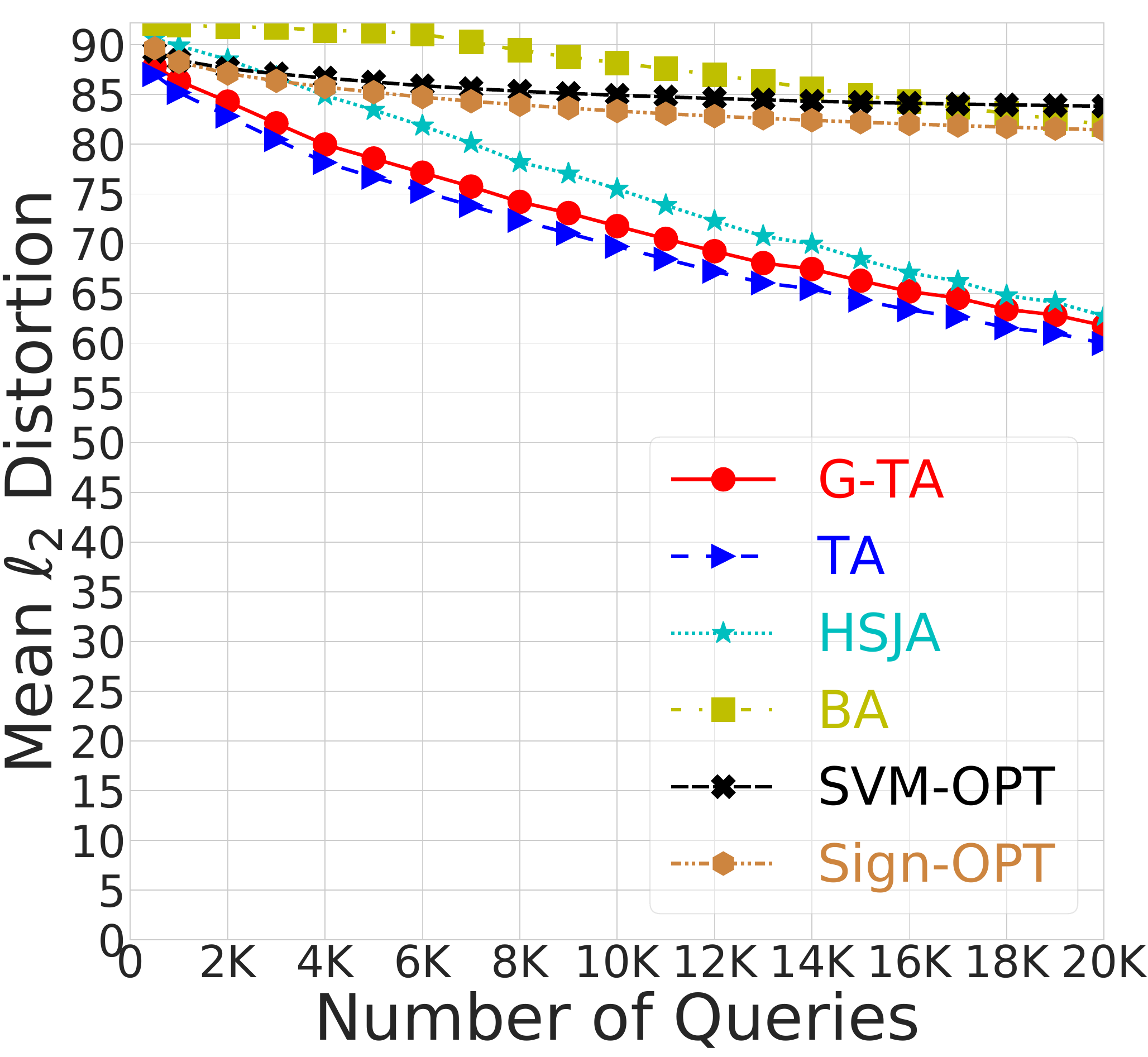}
		\subcaption{AT ($\ell_2$ norm $\epsilon = 3.0$)}
		\label{fig:ImageNet_targeted_adv_train_L2}
	\end{minipage}
	\begin{minipage}[b]{.3\textwidth}
		\includegraphics[width=\linewidth]{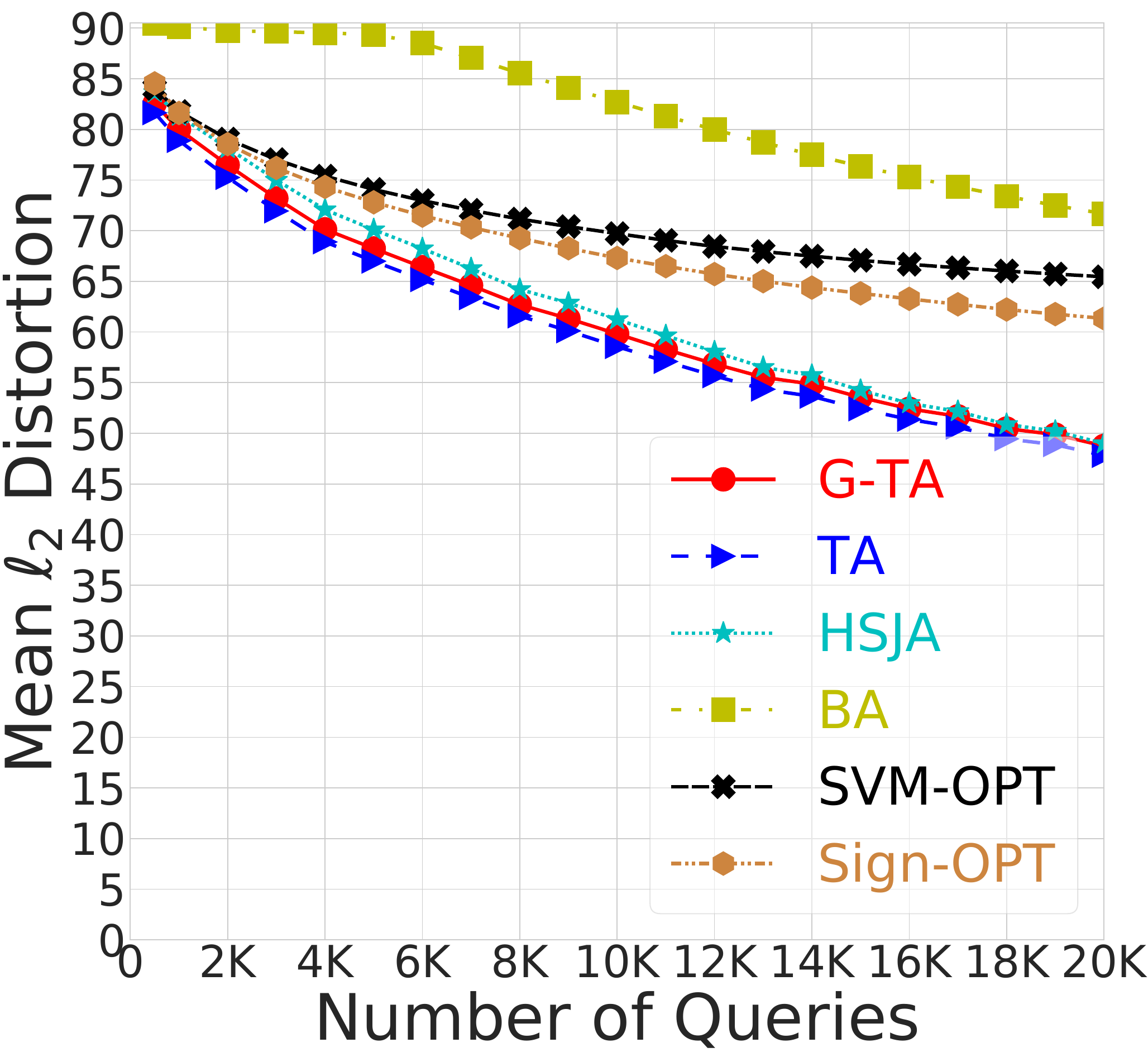}
		\subcaption{AT ($\ell_\infty$ norm $\epsilon = 4/255$)}
		\label{fig:ImageNet_targeted_adv_train_Linf}
	\end{minipage}
	\begin{minipage}[b]{.3\textwidth}
		\includegraphics[width=\linewidth]{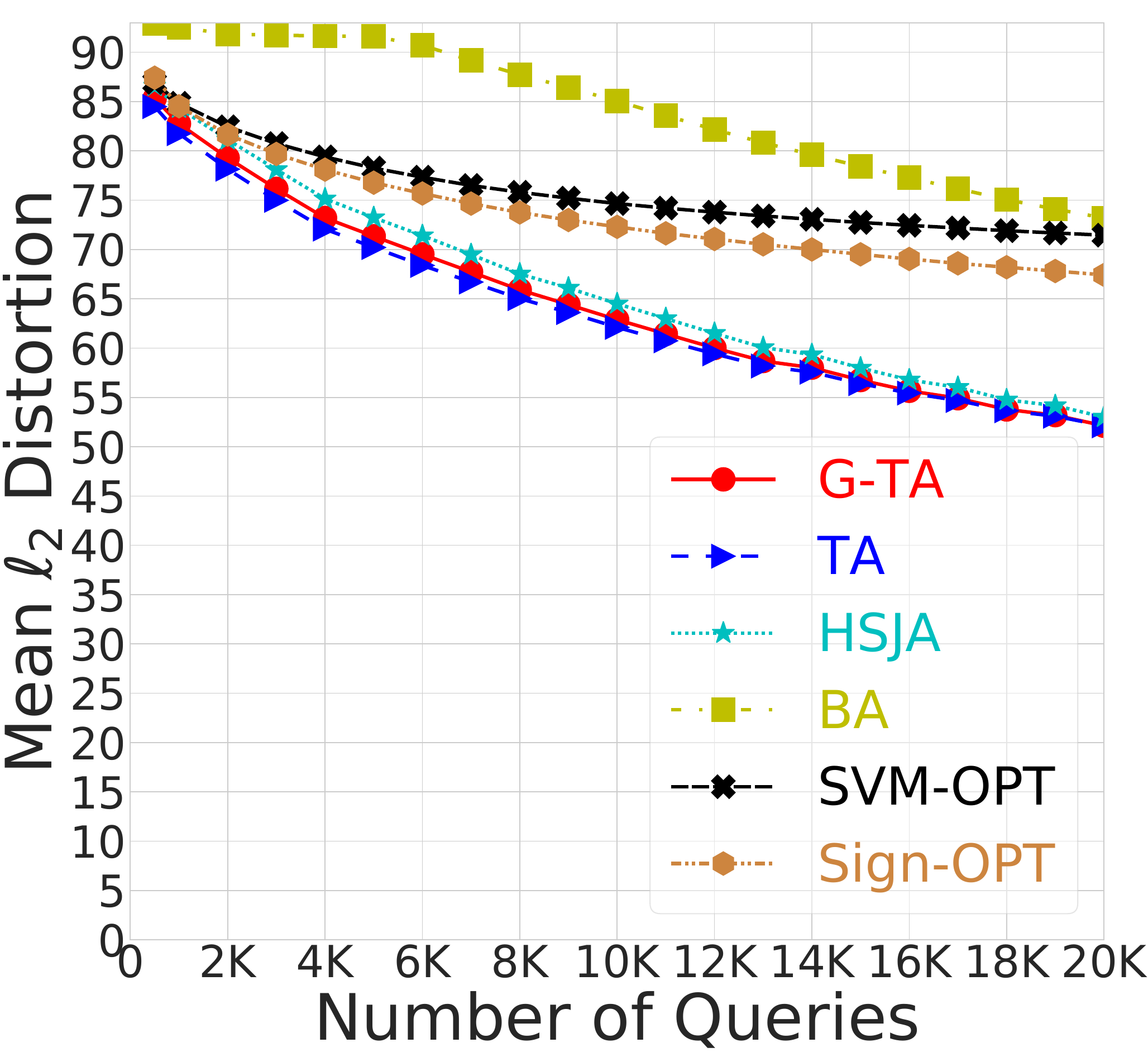}
		\subcaption{AT ($\ell_\infty$ norm $\epsilon = 8/255$)}
		\label{fig:ImageNet_targeted_adv_train_Linf_8}
	\end{minipage}
	\caption{Experimental results of $\ell_2$ norm attacks against adversarial trained ResNet-50 networks on the ImageNet dataset, where the first row (Figs. \ref{fig:ImageNet_untargeted_adv_train_L2}, \ref{fig:ImageNet_untargeted_adv_train_Linf}, \ref{fig:ImageNet_untargeted_adv_train_Linf_8}) shows the results of untargeted attacks, and the second row (Figs. \ref{fig:ImageNet_targeted_adv_train_L2}, \ref{fig:ImageNet_targeted_adv_train_Linf}, \ref{fig:ImageNet_targeted_adv_train_Linf_8}) shows the results of targeted attacks.}
	\label{fig:attack_adv_train_ImageNet}
\end{figure}
\begin{figure}[!t]
	\captionsetup[sub]{font=small}
	\setlength{\abovecaptionskip}{0pt}
	\setlength{\belowcaptionskip}{0pt}
	\centering 
	\begin{minipage}[b]{.3\textwidth}
		\includegraphics[width=\linewidth]{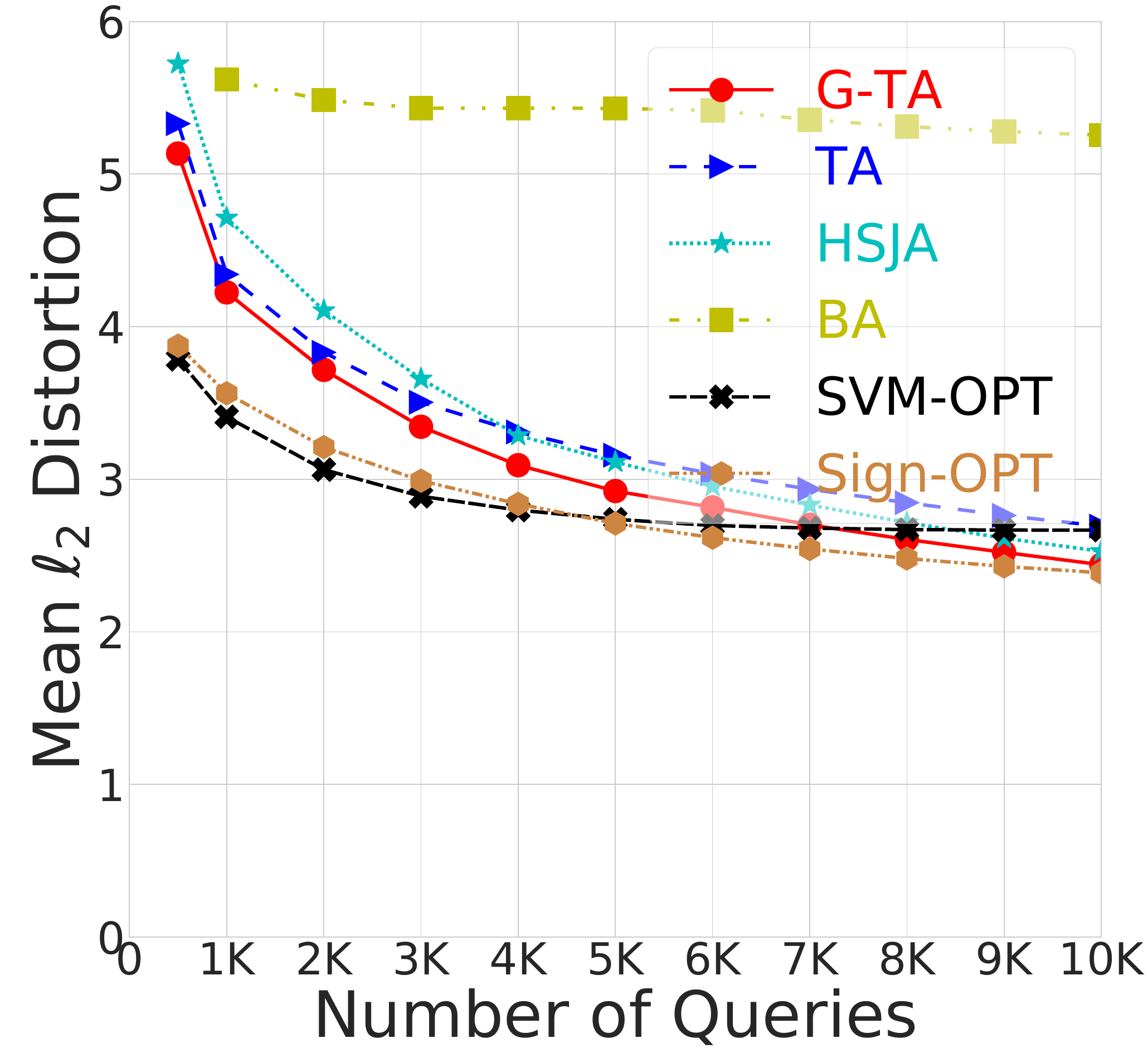}
		\subcaption{ComDefend}
		\label{fig:CIFAR10_untargeted_com_defend}
	\end{minipage}
	\begin{minipage}[b]{.3\textwidth}
		\includegraphics[width=\linewidth]{figures/defense_models/CIFAR-10_resnet-50_adv_train_l2_untargeted_attack.pdf}
		\subcaption{AT (PGD-10 $\epsilon = 8/255$)}
		\label{fig:CIFAR10_untargeted_adv_train}
	\end{minipage}
	\begin{minipage}[b]{.3\textwidth}
		\includegraphics[width=\linewidth]{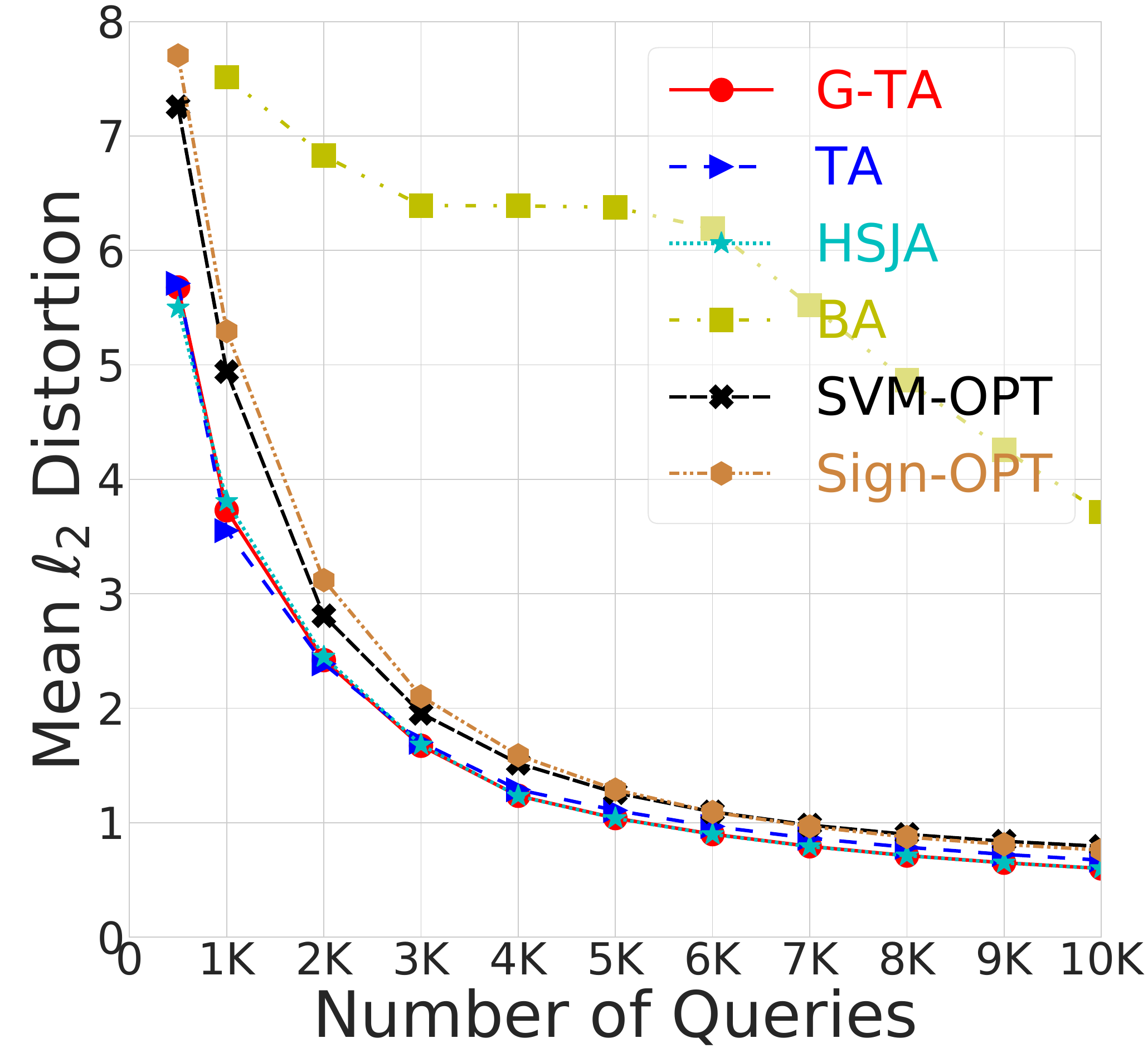}
		\subcaption{Feature Scatter}
		\label{fig:CIFAR10_untargeted_feature_scatter}
	\end{minipage}
	\begin{minipage}[b]{.3\textwidth}
		\includegraphics[width=\linewidth]{figures/defense_models/CIFAR-10_resnet-50_TRADES_l2_untargeted_attack.pdf}
		\subcaption{TRADES ($\epsilon = 8/255$)}
		\label{fig:CIFAR10_untargeted_TRADES}
	\end{minipage}
	\begin{minipage}[b]{.3\textwidth}
		\includegraphics[width=\linewidth]{figures/defense_models/CIFAR-10_resnet-50_jpeg_l2_untargeted_attack.pdf}
		\subcaption{JPEG}
		\label{fig:CIFAR10_untargeted_jpeg}
	\end{minipage}
	\begin{minipage}[b]{.3\textwidth}
		\includegraphics[width=\linewidth]{figures/defense_models/CIFAR-10_resnet-50_feature_distillation_l2_untargeted_attack.pdf}
		\subcaption{Feature Distillation}
		\label{fig:CIFAR10_untargeted_FeatureDistillation}
	\end{minipage}
	\caption{Experimental results of the $\ell_2$ norm untargeted attacks against defense models on the CIFAR-10 dataset, where all defense models adopt the backbone of ResNet-50 network.}
	\label{fig:untargeted_attack_on_defense_models_CIFAR10}
\end{figure}
\begin{figure}[!h]
	\captionsetup[sub]{font=small}
	\setlength{\abovecaptionskip}{0pt}
	\setlength{\belowcaptionskip}{0pt}
	\centering 
	\begin{minipage}[b]{.3\textwidth}
		\includegraphics[width=\linewidth]{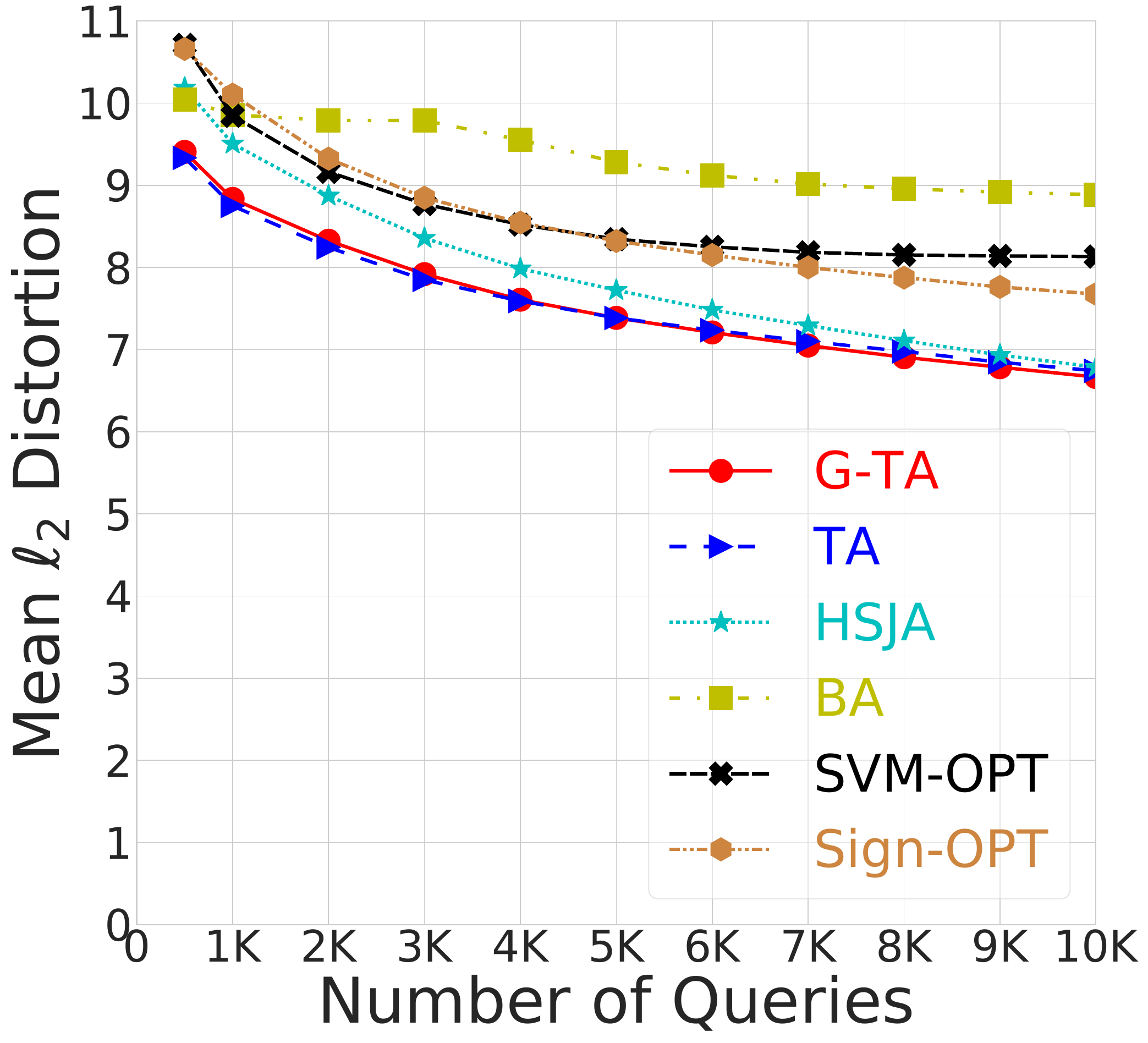}
		\subcaption{ComDefend}
	\end{minipage}
	\begin{minipage}[b]{.3\textwidth}
		\includegraphics[width=\linewidth]{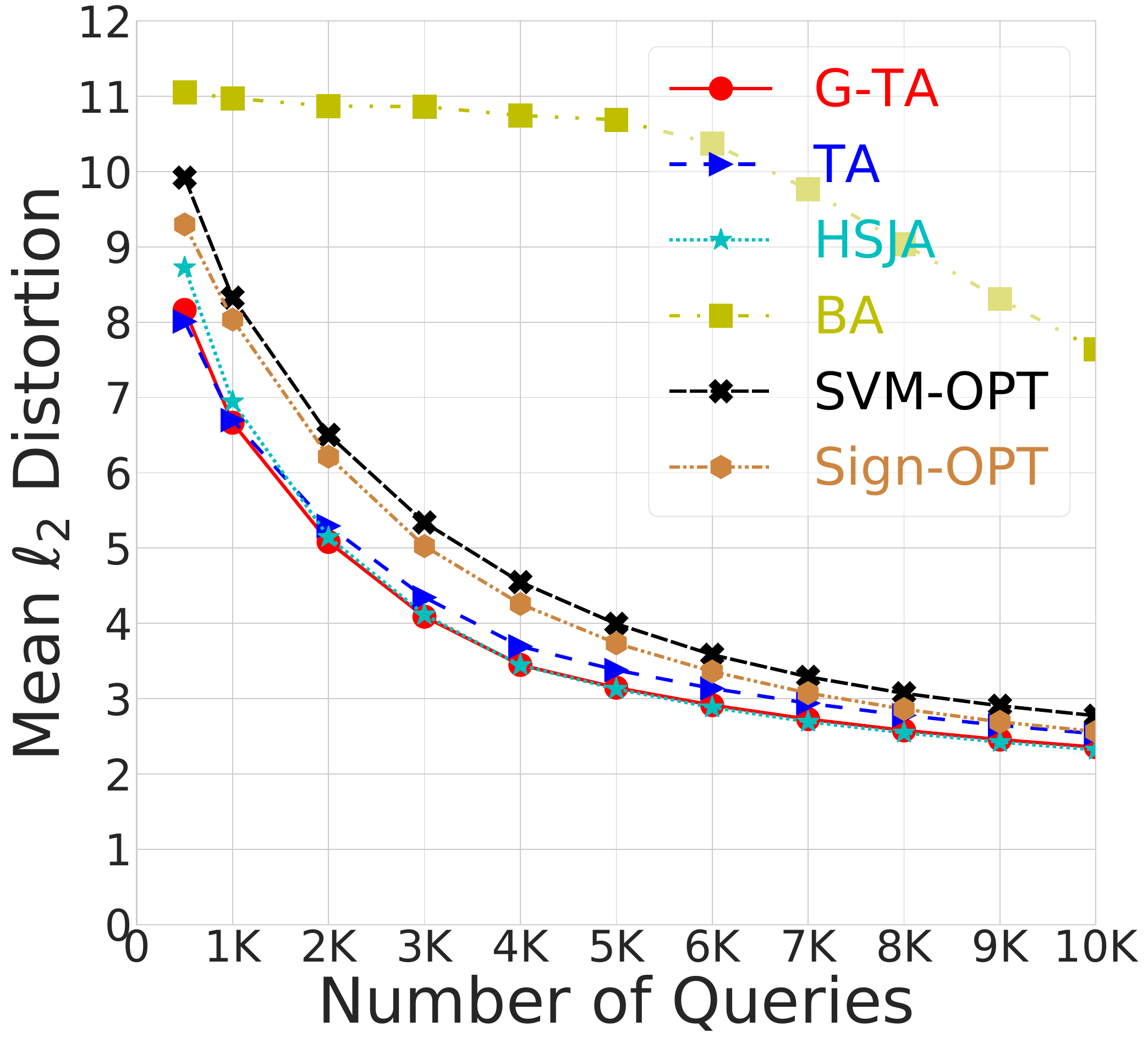}
		\subcaption{AT (PGD-10 $\epsilon = 8/255$)}
	\end{minipage}
	\begin{minipage}[b]{.3\textwidth}
		\includegraphics[width=\linewidth]{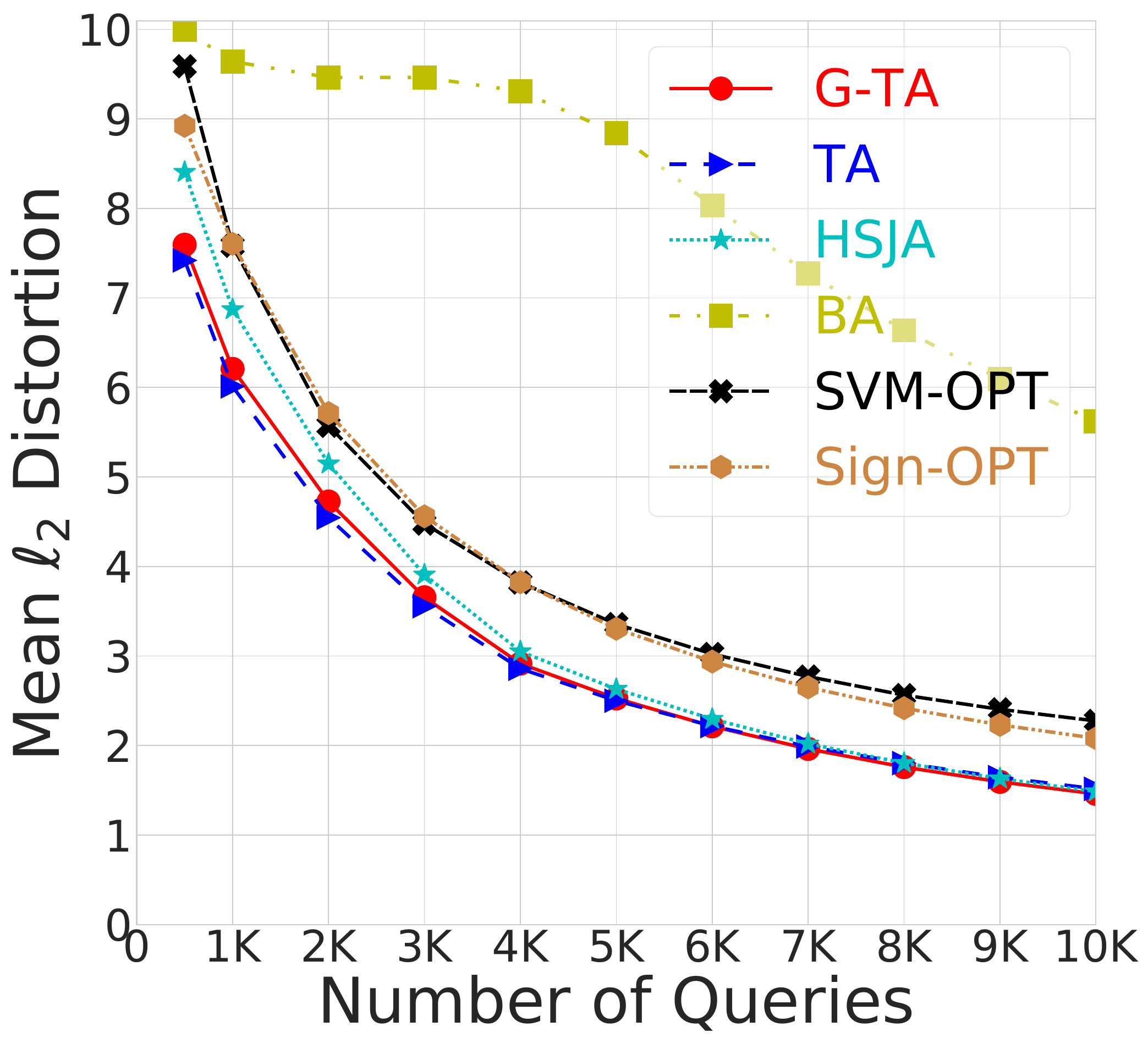}
		\subcaption{Feature Scatter}
		\label{fig:CIFAR10_targeted_feature_scatter}
	\end{minipage}
	\begin{minipage}[b]{.3\textwidth}
		\includegraphics[width=\linewidth]{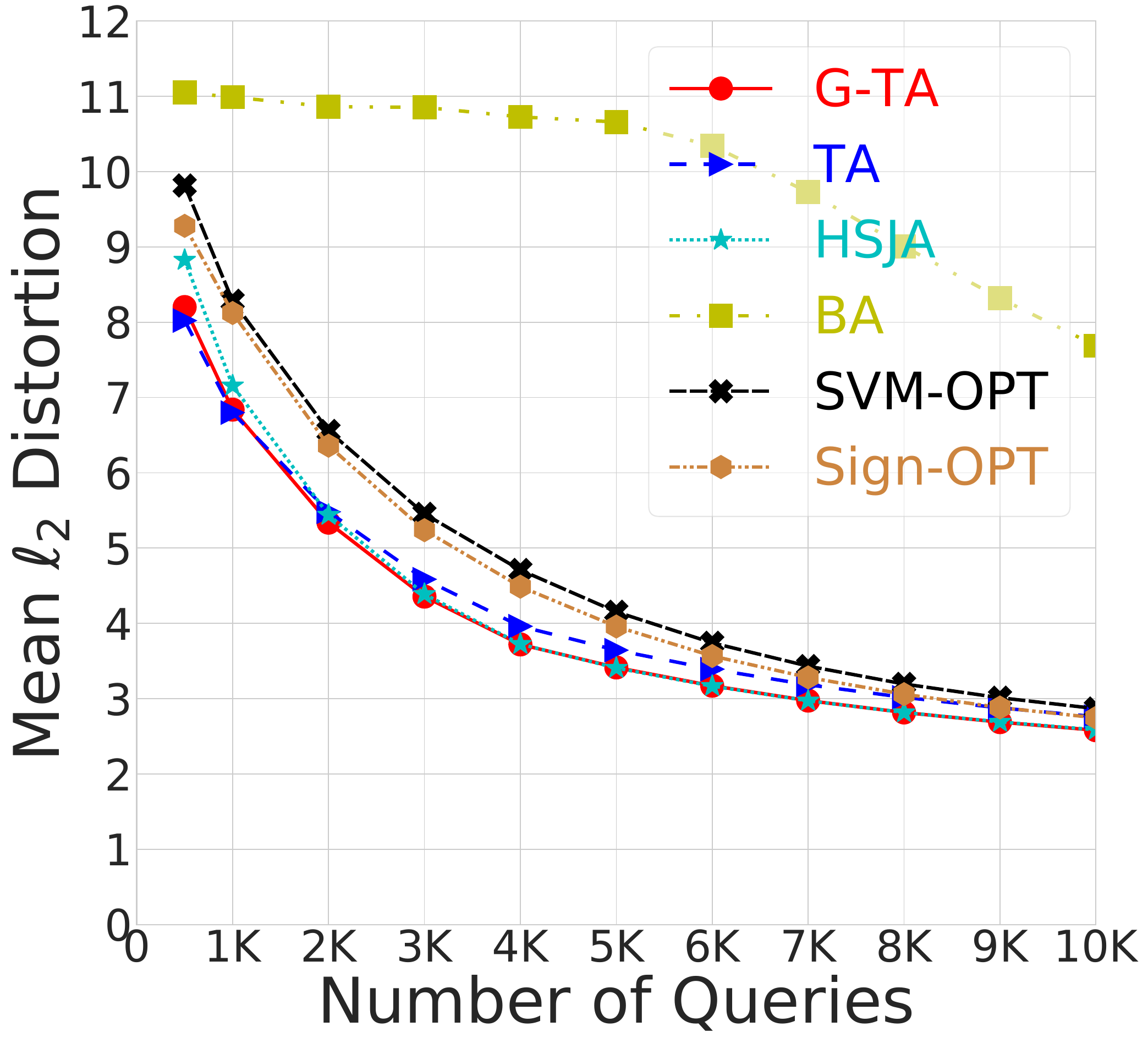}
		\subcaption{TRADES ($\epsilon = 8/255$)}
		\label{fig:CIFAR10_targeted_TRADES}
	\end{minipage}
	\begin{minipage}[b]{.3\textwidth}
		\includegraphics[width=\linewidth]{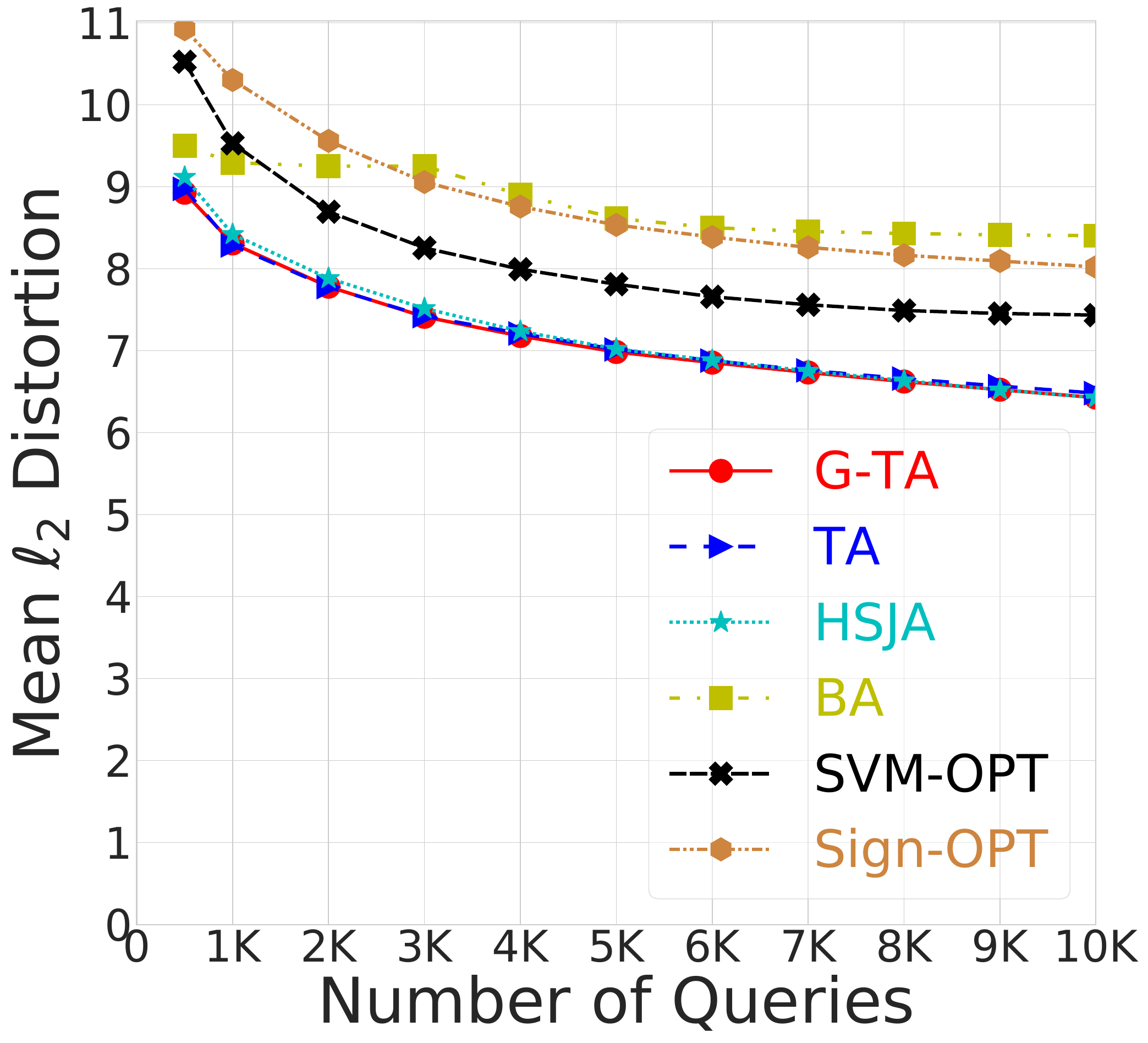}
		\subcaption{JPEG}
		\label{fig:CIFAR10_targeted_ComDefend}
	\end{minipage}
	\begin{minipage}[b]{.3\textwidth}
		\includegraphics[width=\linewidth]{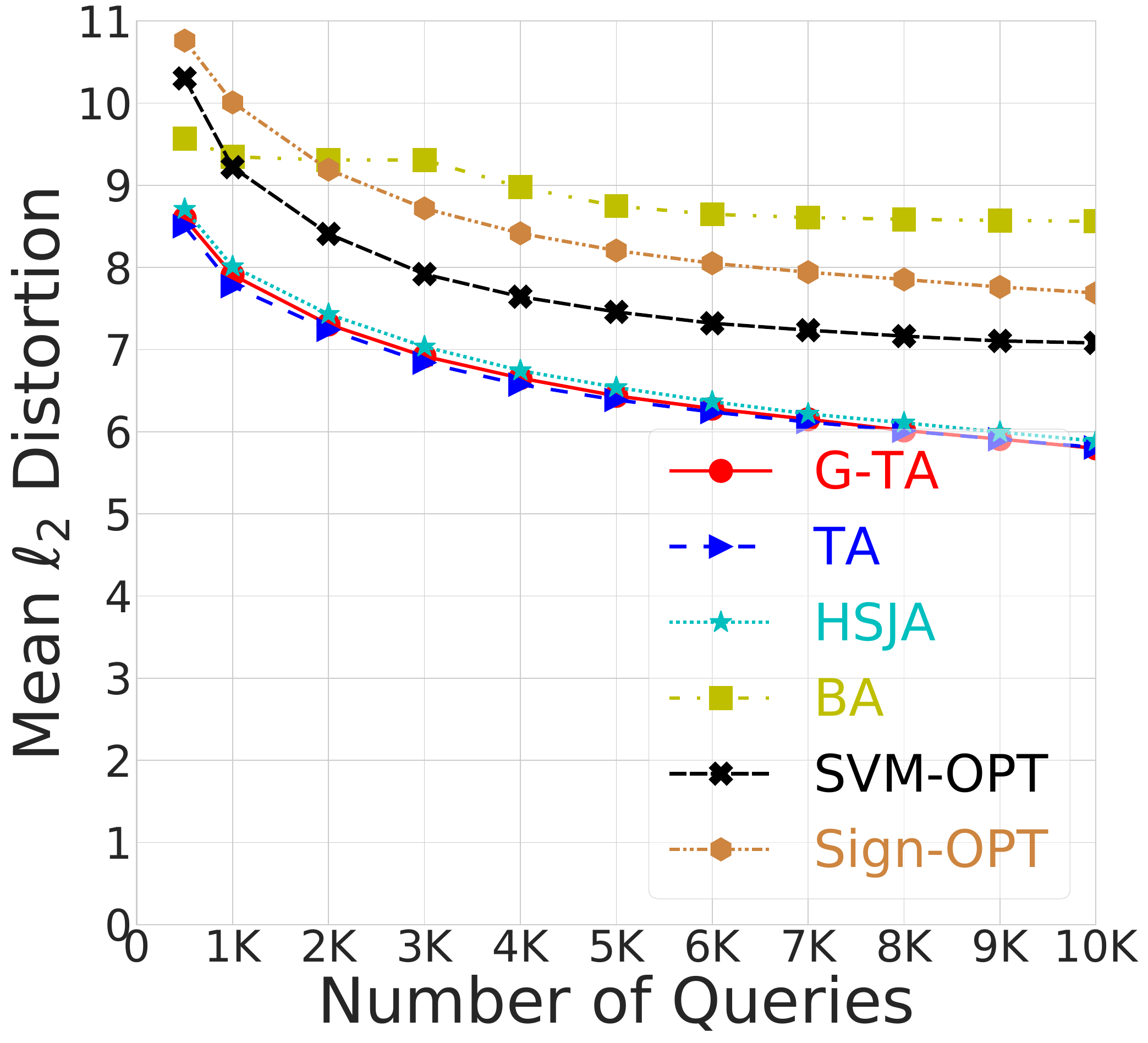}
		\subcaption{Feature Distillation}
		\label{fig:CIFAR10_targeted_FeatureDistillation}
	\end{minipage}
	\caption{Experimental results of the $\ell_2$ norm targeted attacks against defense models on the CIFAR-10 dataset, where all defense models adopt the backbone of ResNet-50 network.}
	\label{fig:targeted_attack_on_defense_models_CIFAR10}
\end{figure}
Figs. \ref{fig:untargeted_attack_on_defense_models_CIFAR10} and \ref{fig:targeted_attack_on_defense_models_CIFAR10} show the experimental results of untargeted and targeted attacks on the CIFAR-10 dataset, respectively. 
In the results of untargeted attacks (Fig. \ref{fig:untargeted_attack_on_defense_models_CIFAR10}), G-TA outperforms HSJA and TA in the attacks of ComDefend, JPEG and Feature Distillation. When the target models are AT, Feature Scatter and TRADES, the performance of G-TA is similar to that of the baseline attack method HSJA.

In addition, in the experimental results of targeted attacks (Fig. \ref{fig:targeted_attack_on_defense_models_CIFAR10}), the performance of G-TA is similar to that of TA when attacking different defense models.

\subsection{Distributions of Distortions across Different Adversarial Examples}
So far, all the experimental results only show the average $\ell_2$ distortion of \nn{1000} adversarial examples. 
To check the distortion of each adversarial example in more detail, we extract the $\ell_2$ distortions of 20 samples from HSJA, TA and G-TA.
These samples are selected from \nn{1000} images in the following way: from the 1st image to the \nn{1000}th image, we select one image for every 50 images.
Fig. \ref{fig:distribution_distortion} shows the distributions of $\ell_2$ distortions across 20 adversarial examples on the ImageNet dataset, where the 1st image's ``image number index'' is 0. The results indicate that the $\ell_2$ distortions obtained by TA and G-TA are uniformly better than that of the baseline method HSJA.
Thus, our approach can obtain better $\ell_2$ distortions on different adversarial examples, not just on specific samples.

\begin{figure}[h]
	\captionsetup[sub]{font=small}
	\setlength{\abovecaptionskip}{0pt}
	\setlength{\belowcaptionskip}{0pt}
	\centering 
	\begin{minipage}[b]{.3\textwidth}
		\includegraphics[width=\linewidth]{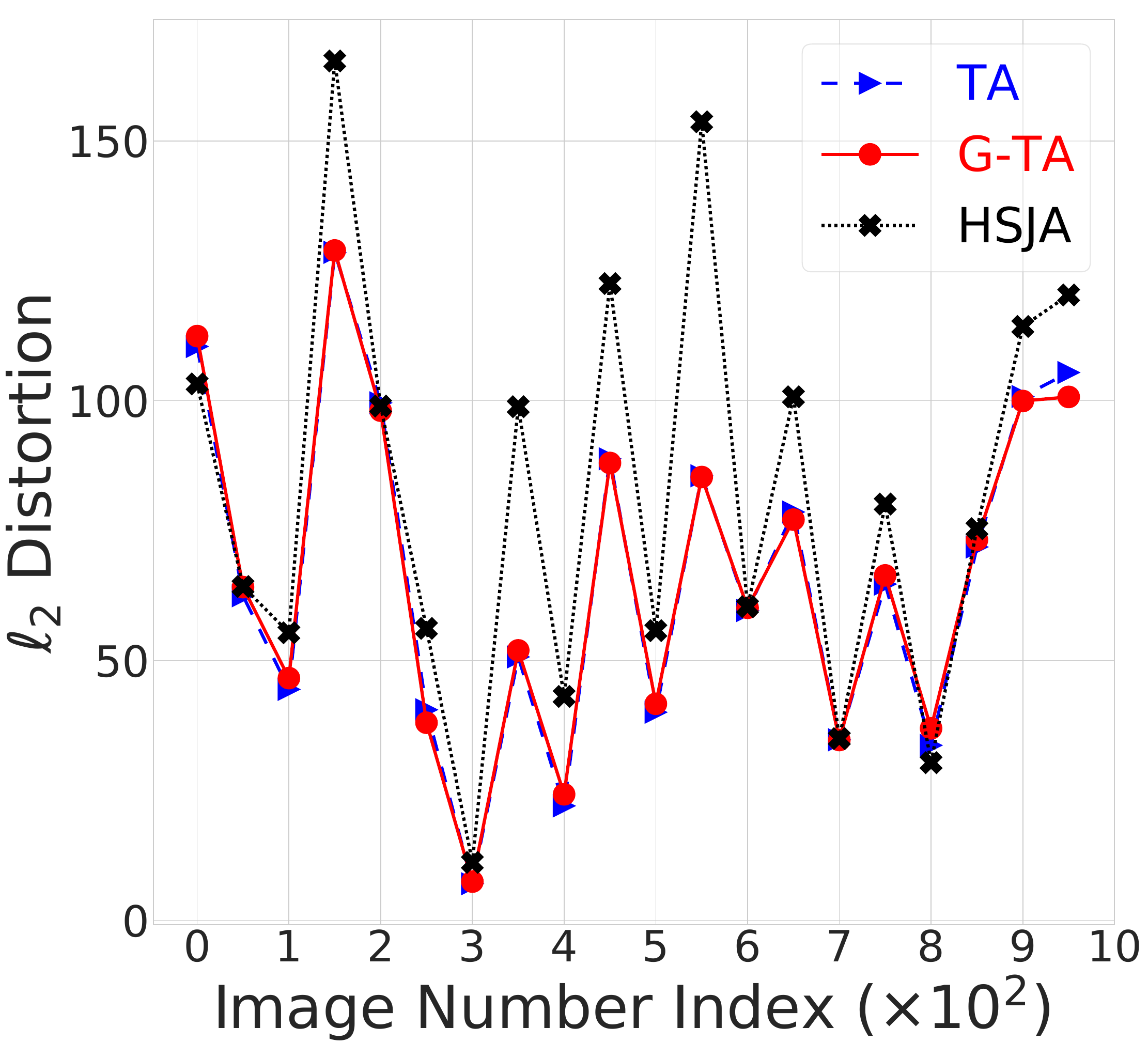}
		\subcaption{Inception-v3 (query: 1K)}
		\label{fig:inceptionv3_1K}
	\end{minipage}
	\begin{minipage}[b]{.3\textwidth}
		\includegraphics[width=\linewidth]{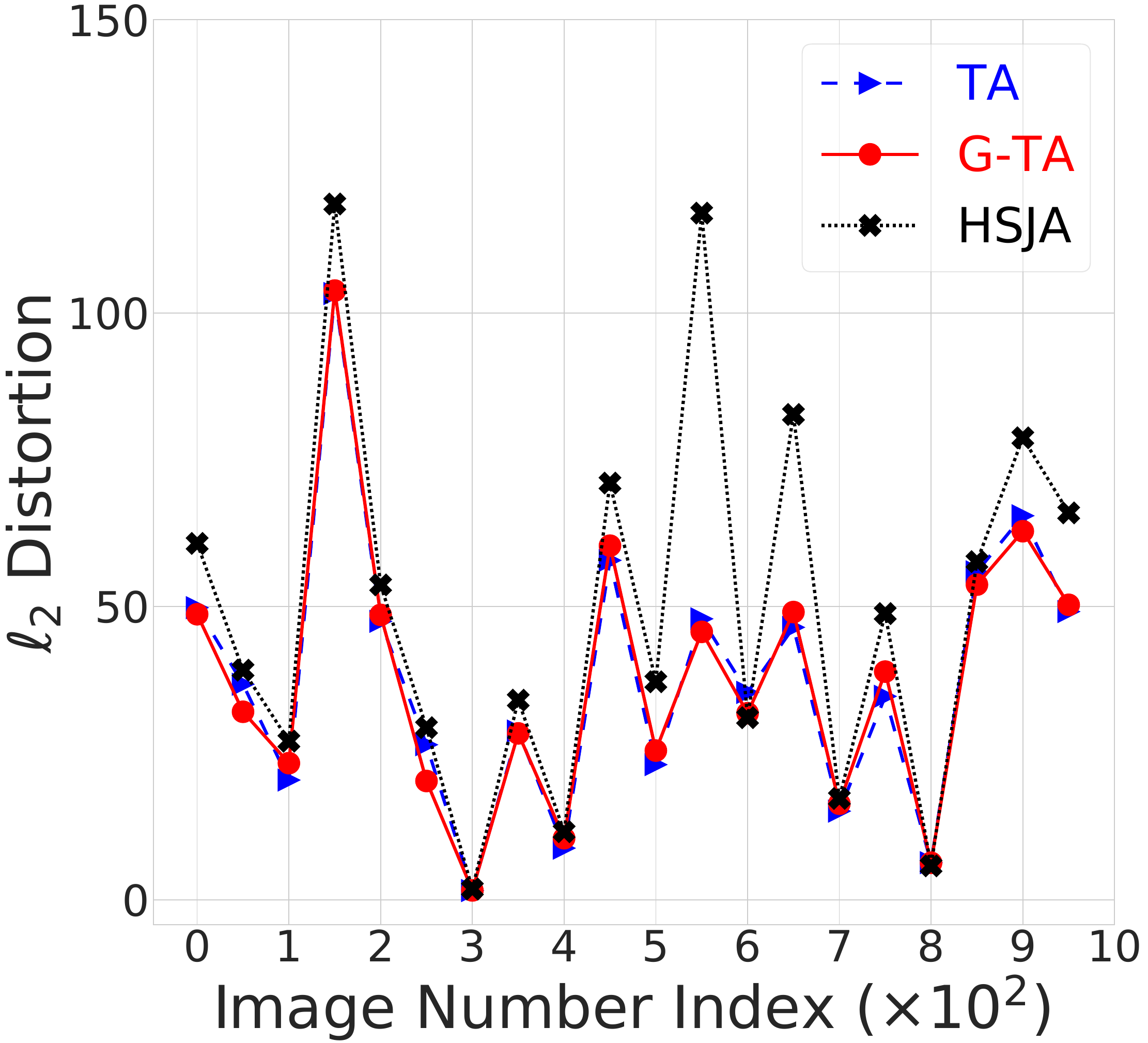}
		\subcaption{Inception-v3 (query: 5K)}
		\label{fig:inceptionv3_5K}
	\end{minipage}
	\begin{minipage}[b]{.3\textwidth}
		\includegraphics[width=\linewidth]{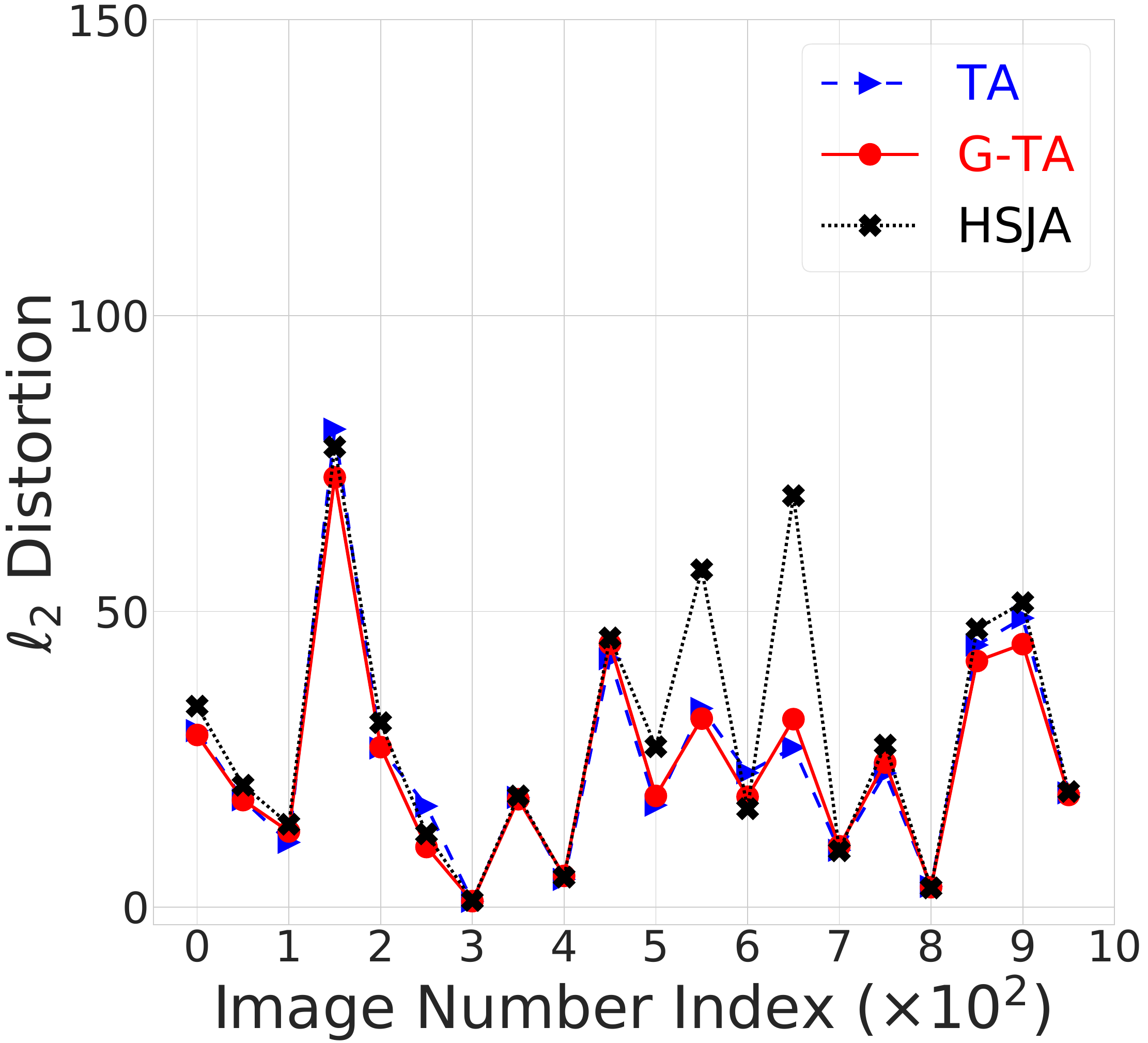}
		\subcaption{Inception-v3 (query: 10K)}
		\label{fig:inceptionv3_10K}
	\end{minipage}
	\begin{minipage}[b]{.3\textwidth}
		\includegraphics[width=\linewidth]{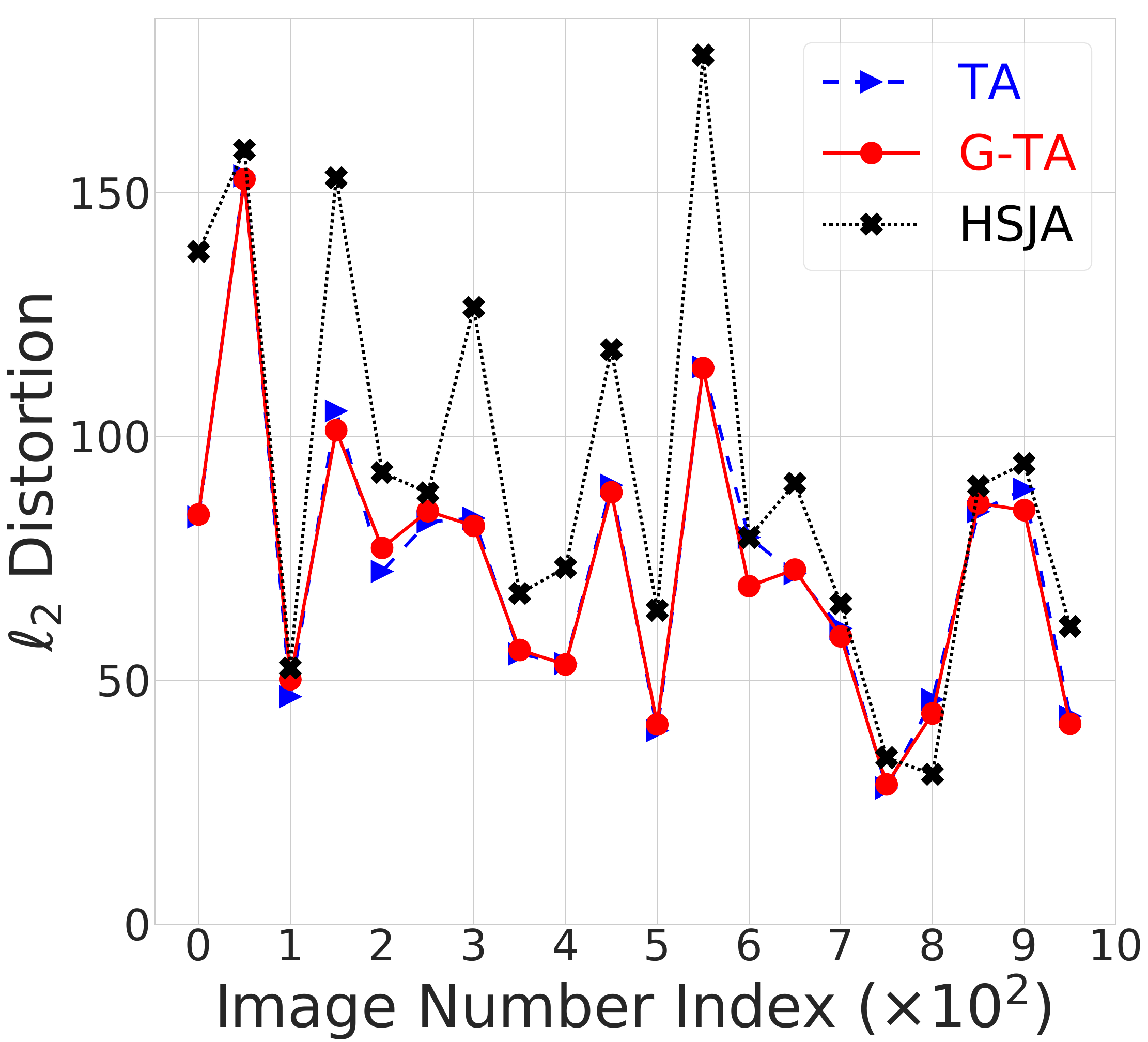}
		\subcaption{Inception-v4 (query: 1K)}
		\label{fig:inceptionv4_1K}
	\end{minipage}
	\begin{minipage}[b]{.3\textwidth}
		\includegraphics[width=\linewidth]{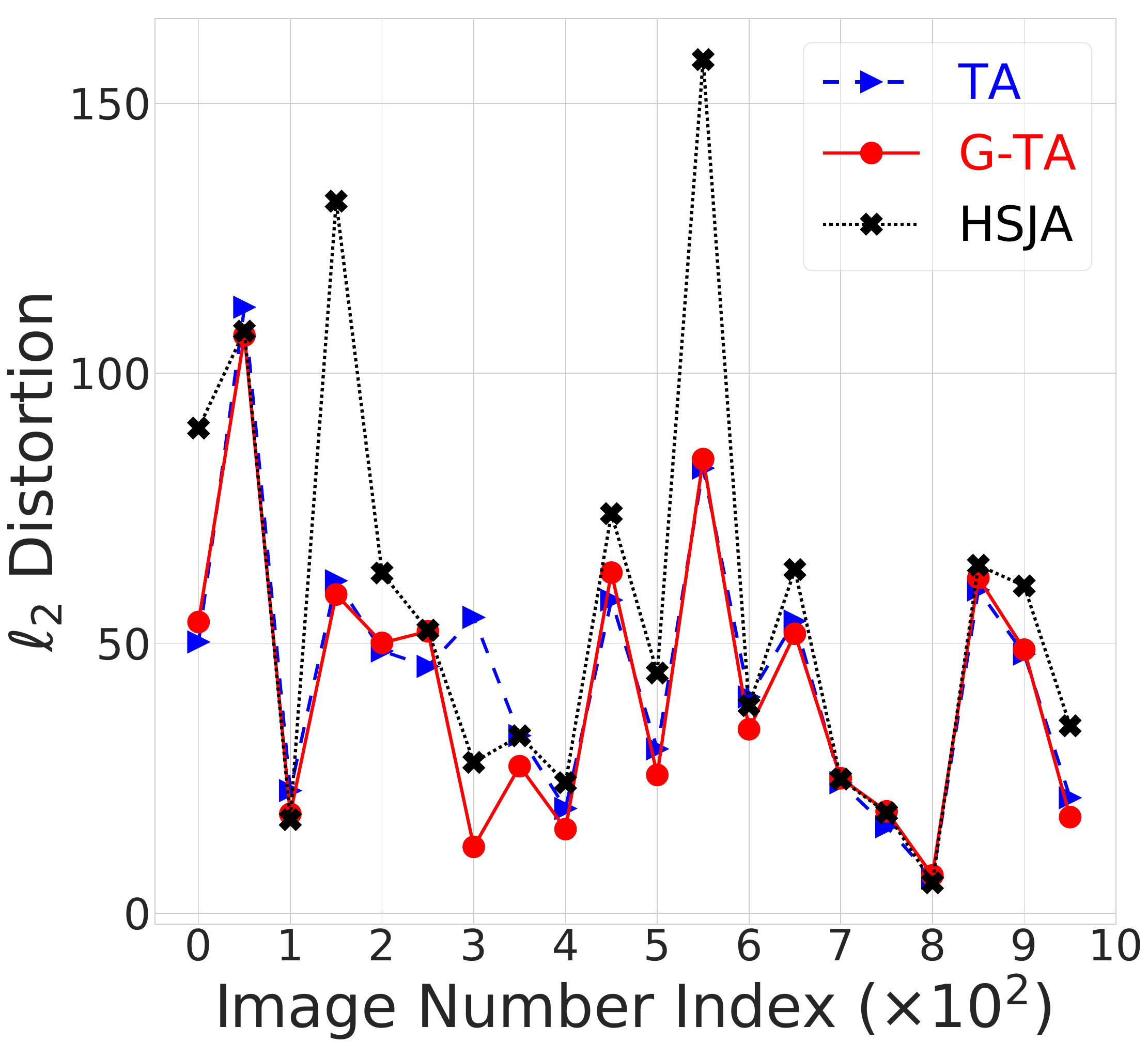}
		\subcaption{Inception-v4 (query: 5K)}
		\label{fig:inceptionv4_5K}
	\end{minipage}
	\begin{minipage}[b]{.3\textwidth}
		\includegraphics[width=\linewidth]{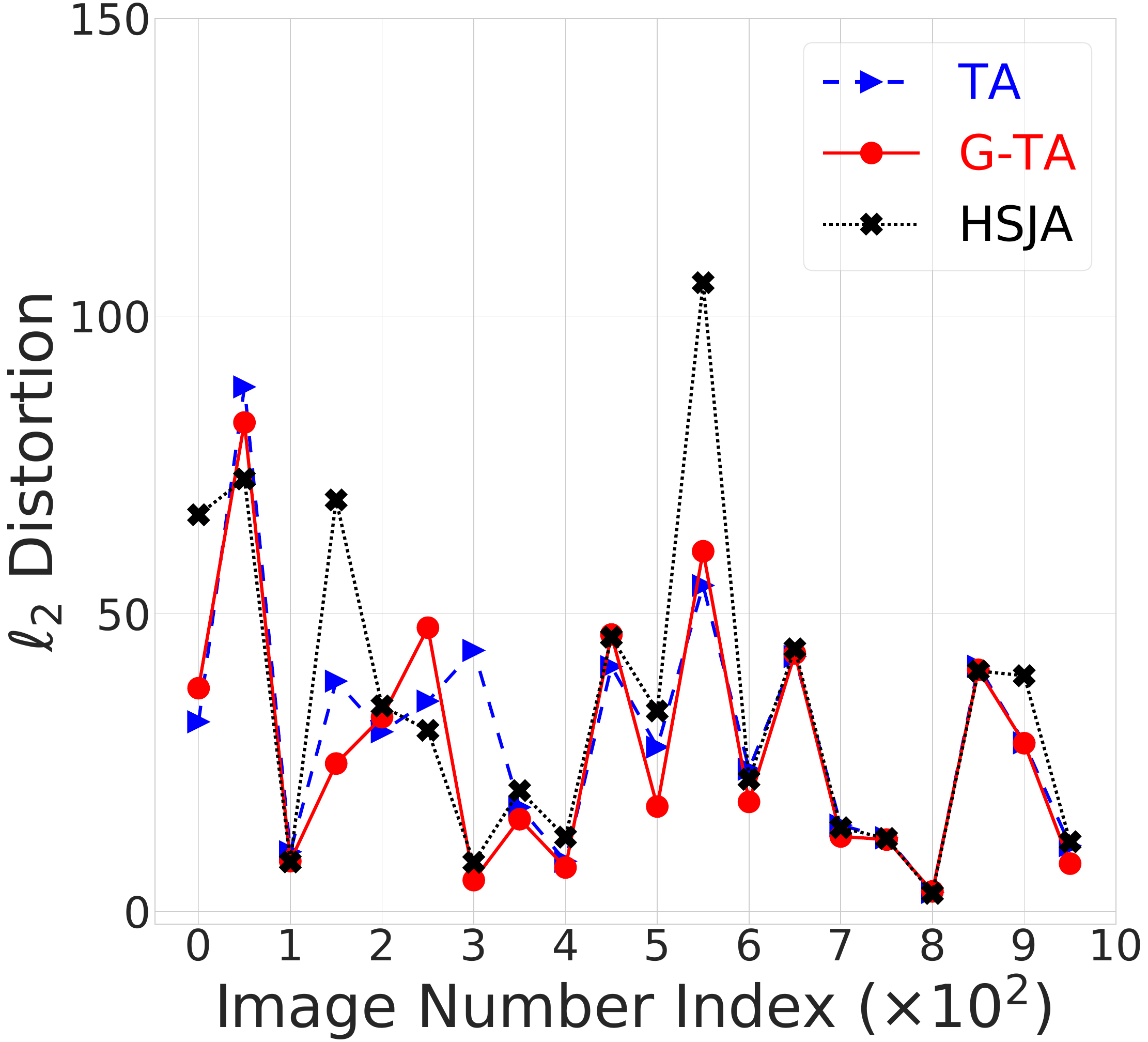}
		\subcaption{Inception-v4 (query: 10K)}
		\label{fig:inceptionv4_10K}
	\end{minipage}
	
	\begin{minipage}[b]{.3\textwidth}
		\includegraphics[width=\linewidth]{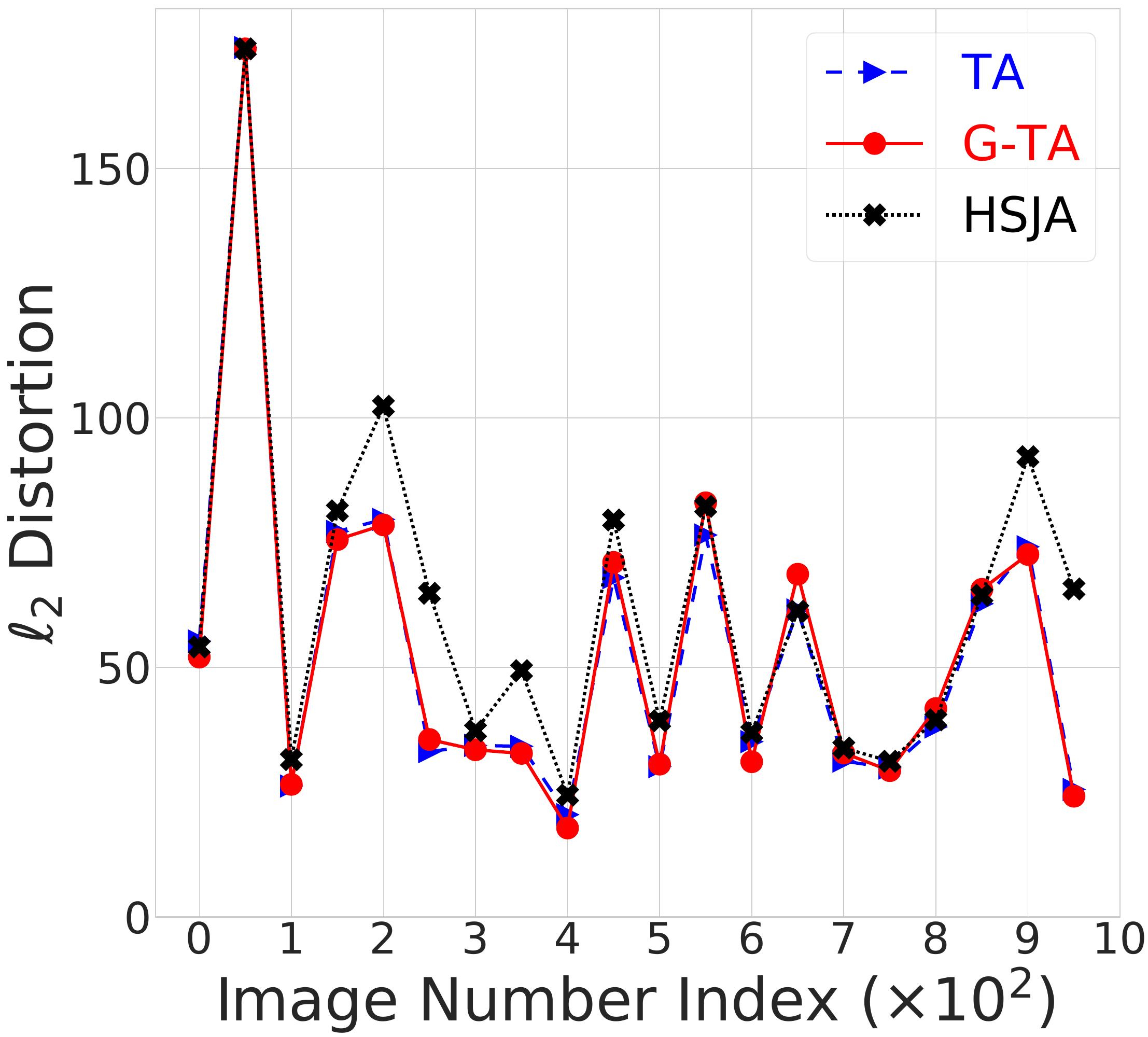}
		\subcaption{SENet-154 (query: 1K)}
		\label{fig:senet154_1K}
	\end{minipage}
	\begin{minipage}[b]{.3\textwidth}
		\includegraphics[width=\linewidth]{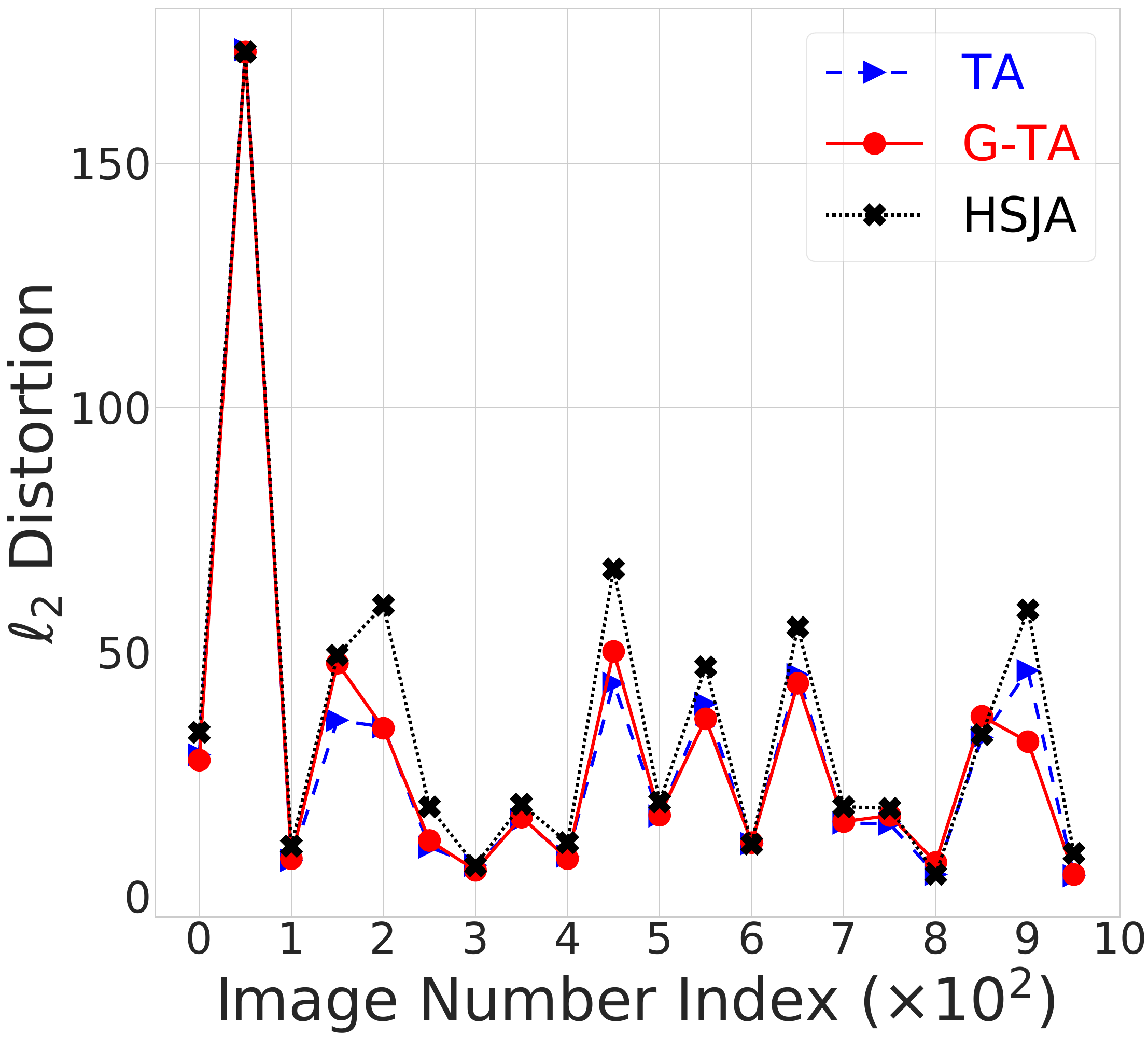}
		\subcaption{SENet-154 (query: 5K)}
		\label{fig:senet154_5K}
	\end{minipage}
	\begin{minipage}[b]{.3\textwidth}
		\includegraphics[width=\linewidth]{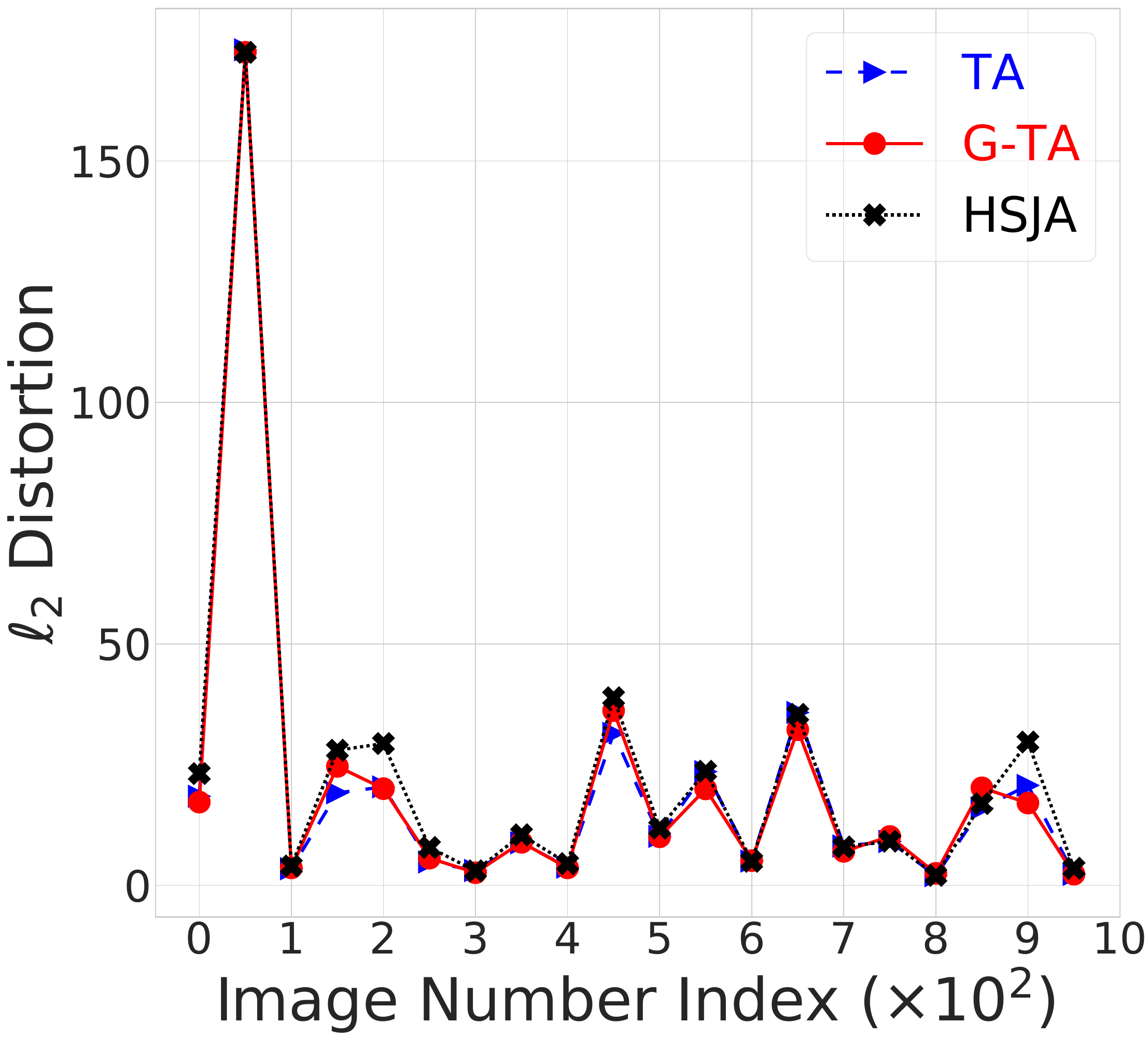}
		\subcaption{SENet-154 (query: 10K)}
		\label{fig:senet154_10K}
	\end{minipage}
	\begin{minipage}[b]{.3\textwidth}
		\includegraphics[width=\linewidth]{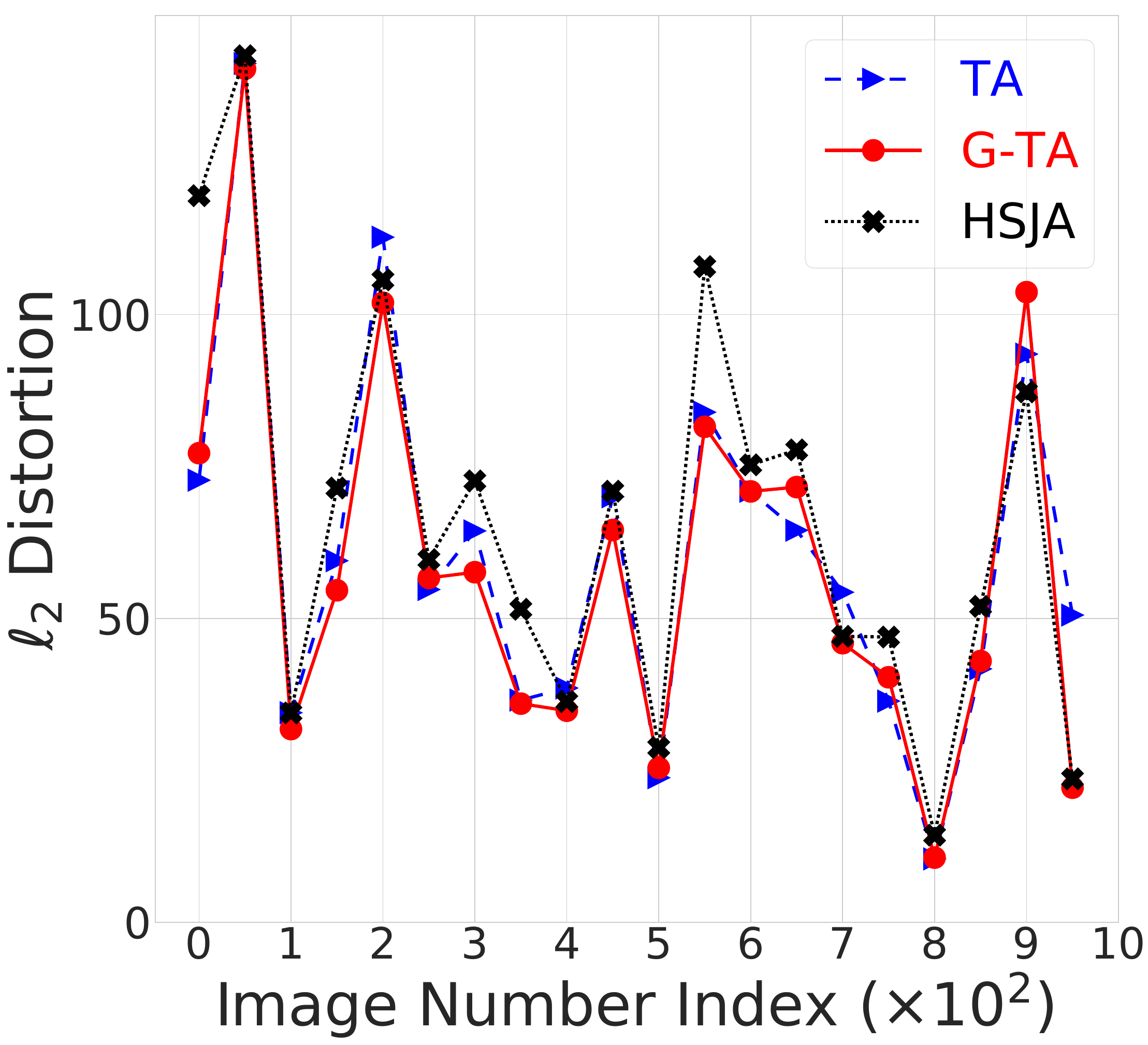}
		\subcaption{ResNet-101 (query: 1K)}
		\label{fig:resnet101_1K}
	\end{minipage}
	\begin{minipage}[b]{.3\textwidth}
		\includegraphics[width=\linewidth]{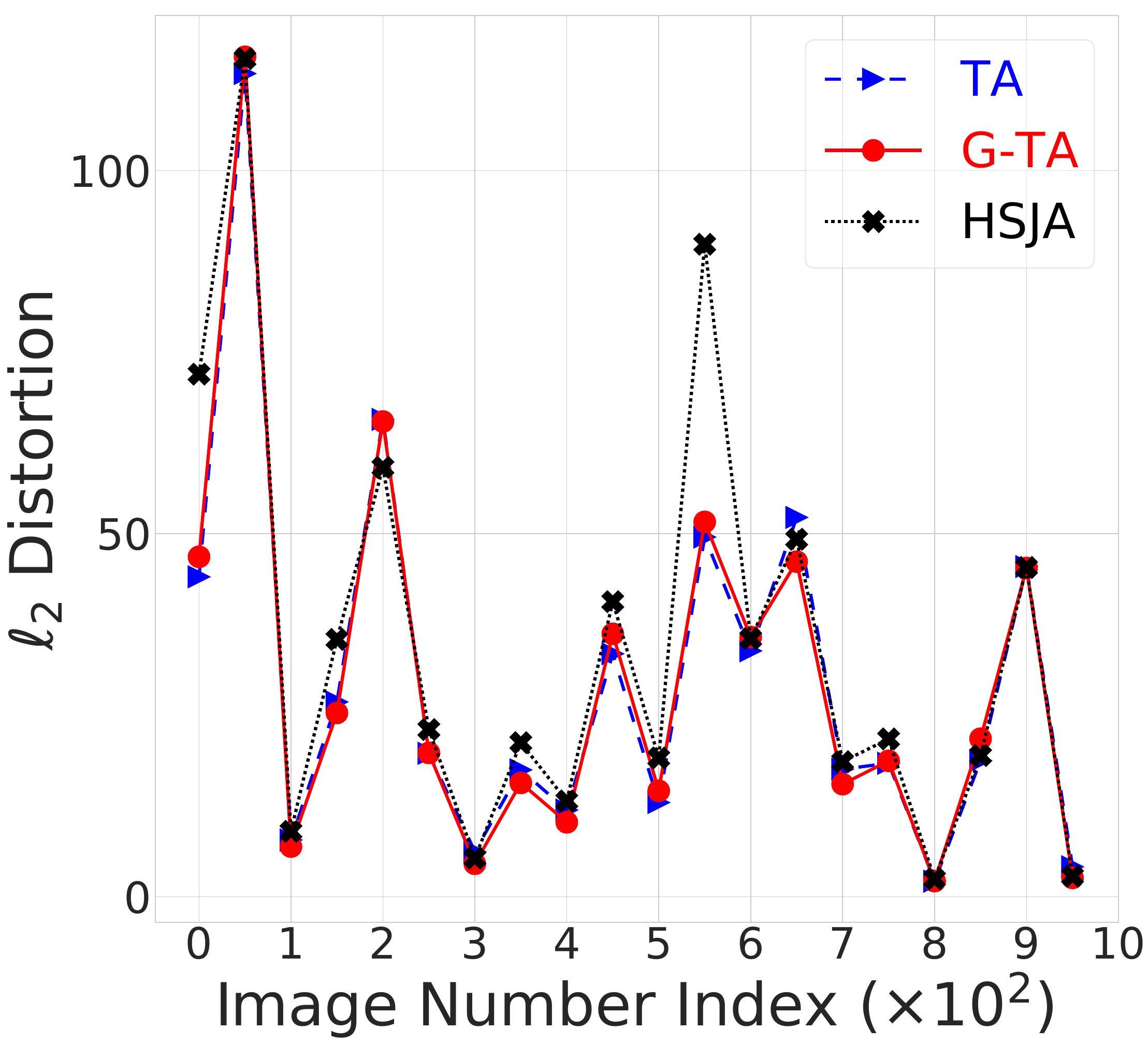}
		\subcaption{ResNet-101 (query: 5K)}
		\label{fig:resnet101_5K}
	\end{minipage}
	\begin{minipage}[b]{.3\textwidth}
		\includegraphics[width=\linewidth]{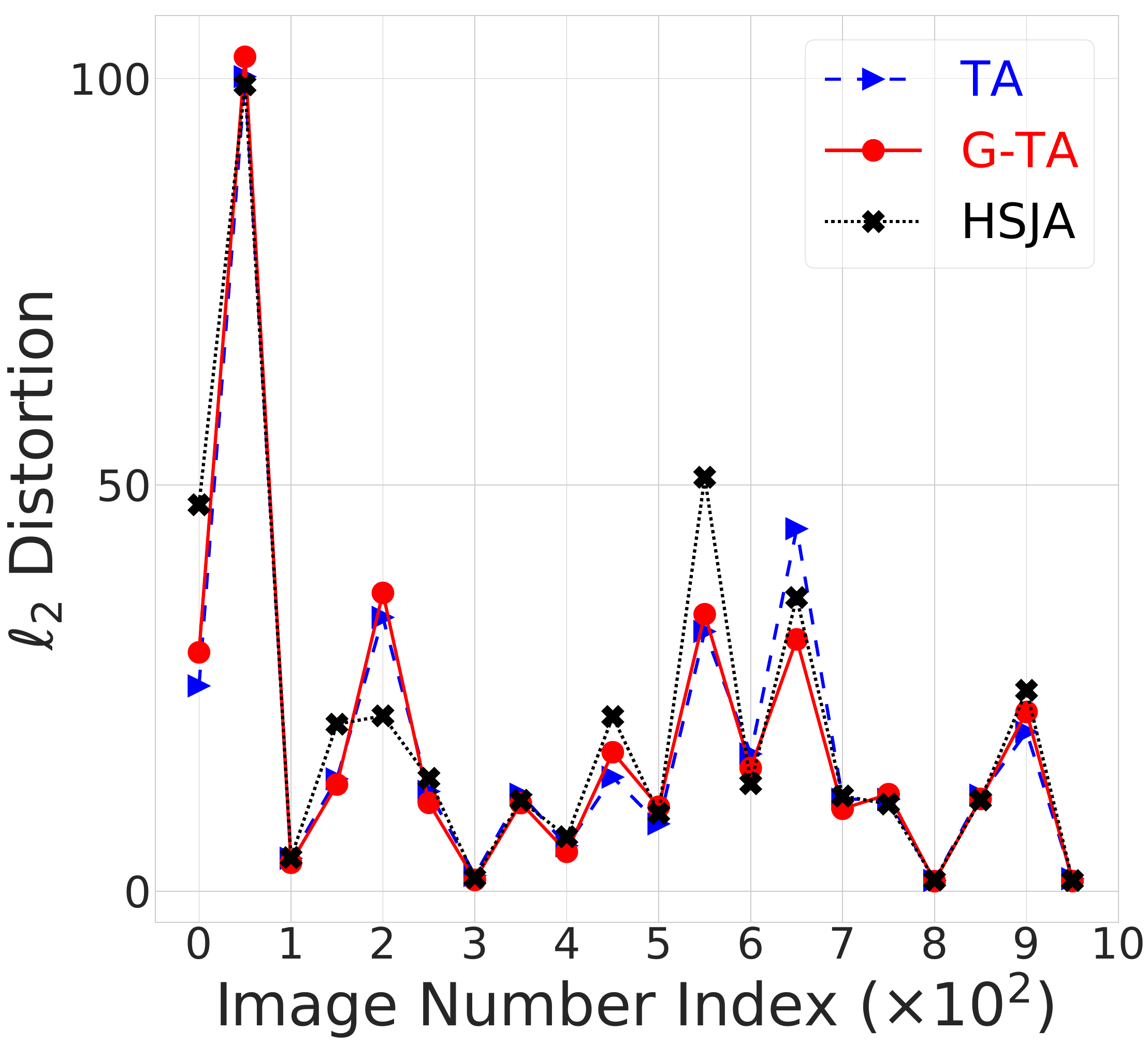}
		\subcaption{ResNet-101 (query: 10K)}
		\label{fig:resnet101_10K}
	\end{minipage}
	\caption{Comparisons of $\ell_2$ distortions across 20 adversarial examples in targeted attacks of the ImageNet dataset.}
	\label{fig:distribution_distortion}
\end{figure}

\subsection{Experimental Results of Median Distortions}
In this section, we report the median $\ell_2$ distortions of different query budgets on the CIFAR-10 and ImageNet datasets. Tables \ref{tab:ImageNet_normal_models_median_result} and \ref{tab:CIFAR-10_normal_models_median_result} show the experimental results. We can draw the following conclusions based on the results. 

\begin{table*}[h]
	\centering
	\small
	\tabcolsep=0.1cm
	\setlength{\belowcaptionskip}{0.5pt}%
	\caption{Median $\ell_2$ distortions of different query budgets on the ImageNet dataset. ``-'' denotes no adversarial example is found in this query budget.}
	\scalebox{0.8}{
		\begin{tabular}{cccccccc|cccccc}
			\toprule
			Target Model  & Method  & \multicolumn{6}{c|}{Targeted Attack} & \multicolumn{6}{c}{Untargeted Attack} \\
			& & @300&  @1K & @2K & @5K & @8K & @10K & @300&  @1K & @2K & @5K & @8K & @10K \\
			\midrule
			\multirow{6}{*}{Inception-v3}&  BA \cite{brendel2018decisionbased}& 105.513 & 101.877 & 101.056 & 97.481 & 81.269 & 73.524 & - & 109.507 & 103.637 & 96.340 & 79.027 & 58.924 \\
			& Sign-OPT \cite{cheng2019sign} & 96.905 & 83.215 & 66.601 & 43.350 & 31.036 & 25.380 & 115.140 & 73.319 & 38.327 & 12.761 & 8.277 & 6.808\\
			& SVM-OPT \cite{cheng2019sign} &  \B 93.649 & 77.838 & 63.631 & 42.897 & 31.838 & 26.322 & 114.879 & 59.343 & \B 30.627 & 12.085 & 8.352 & 7.025\\
			& HSJA \cite{chen2019hopskipjumpattack}& 106.341 & 92.114 & 79.225 & 47.469 & 30.624 & 23.838 & 105.702 & 53.880 & 32.684 & 12.360 & 7.829 & 6.227 \\
			& TA&   96.612 & 75.610 & 62.573 & \B 38.226 & \B 25.892 & \B 19.993 & \B 95.302 &  \B 50.878 & 31.833 & 11.921 & \B 7.464 & \B 6.030\\
			& G-TA &  97.449 & \B 75.499 & \B 62.484 & 38.886 & 26.004 & 20.091 & 96.410 & 50.985 &  31.176 & \B 11.861 &7.549 &  6.087\\
			\Xcline{1-14}{0.1pt}
			\multirow{6}{*}{Inception-v4} & BA \cite{brendel2018decisionbased}& 104.275 & 101.115 & 99.872 & 96.700 & 79.412 & 71.387 & - & 116.855 & 112.335 & 104.557 & 85.044 & 64.123 \\
			& Sign-OPT \cite{cheng2019sign} &  95.388 & 81.865 & 66.159 & 42.871 & 31.241 & 25.798 & 121.725 & 77.838 & 40.465 & 14.268 & 8.924 & 7.153\\
			& SVM-OPT \cite{cheng2019sign}& \B 92.640 & 77.616 & 62.949 & 42.552 & 31.142 & 26.238 & 120.407 & 64.600 & \B 33.960 & \B 13.586 & 9.035 & 7.535 \\
			& HSJA \cite{chen2019hopskipjumpattack}&  104.969 & 90.371 & 78.103 & 47.340 & 31.404 & 24.270 & 109.422 & 60.356 & 37.302 & 14.191 & 8.790 & 6.934\\
			& TA & 96.808 & \B 74.829 & \B 61.974 & \B 37.155 & \B 26.128 & 21.184 & \B 101.170 & \B 55.876 & 36.403 & 14.176 & \B 8.592 &  \B 6.814 \\
			& G-TA & 95.563 & 75.889 & 62.404 & 38.457 & 26.495 & \B 21.069 & 101.186 & 57.672 &  36.743 & 13.999 &8.694 & 6.856 \\
			\Xcline{1-14}{0.1pt}
			\multirow{6}{*}{SENet-154} & BA \cite{brendel2018decisionbased}& 75.653 & 72.327 & 71.420 & 68.293 & 52.332 & 44.391 & - & 75.355 & 70.498 & 65.186 & 51.950 & 38.164 \\
			& Sign-OPT \cite{cheng2019sign}&  70.500 & 59.556 & 45.566 & 27.062 & 18.218 & 14.400  & 65.524 & 42.690 & 23.688 & 9.054 & 5.331 & 4.194\\
			& SVM-OPT \cite{cheng2019sign}&  73.344 & 55.891 & 44.195 & 27.826 & 19.544 & 15.883 & 65.957 & 35.596 & \B 20.549 & 8.760 & 5.368 & 4.332 \\
			& HSJA \cite{chen2019hopskipjumpattack}&  72.589 & 60.361 & 49.487 & 25.718 & 14.929 & 12.197 & 70.043 & 34.697 & 21.811 & 8.098 & 4.482 & 3.707\\
			& TA &  66.285 & \B 51.012 & \B 40.475 & \B 21.590 & \B 13.293 & \B 10.782 & \B 64.784 & 34.034 & 22.269 & \B 7.636 & 4.273 & 3.555\\
			& G-TA & \B 66.077 & 51.852 & 41.065 & 21.946 & 13.461 & 10.899 & 65.122 & \B 33.841 & 21.823 & 7.772 &  \B 4.231 &  \B 3.489\\
			\Xcline{1-14}{0.1pt}
			\multirow{6}{*}{ResNet-101} & BA \cite{brendel2018decisionbased}& 76.772 & 72.674 & 71.761 & 68.231 & 54.847 & 47.785 & - & 63.568 & 59.384 & 55.402 & 42.777 & 29.097 \\
			& Sign-OPT \cite{cheng2019sign}& 72.361 & 62.383 & 48.664 & 30.089 & 20.752 & 16.478 & 53.757 & 35.070 & 19.035 & 8.442 & 5.929 & 4.999 \\
			& SVM-OPT \cite{cheng2019sign}&  73.758 & 58.716 & 47.496 & 30.443 & 21.502 & 17.535 & 52.471 & 29.225 & 16.469 & 8.245 & 6.043 & 5.259\\
			& HSJA \cite{chen2019hopskipjumpattack}& 73.422 & 60.175 & 49.443 & 26.504 & 16.035 & 12.661 & 54.869 & 24.971 & 15.161 & 6.084 & 3.787 & 3.237 \\
			& TA & 69.511 & \B 55.389 & 44.343 & 24.500 & \B 14.778 & \B 11.802 & \B 51.829 & 24.748 & 15.162 & 5.941 & \B 3.698 & 3.203 \\
			& G-TA &  \B 69.117 & 56.275 & \B 44.315 & \B 24.316 & 15.133 & 11.946 & 51.883 & \B 24.403 & \B 14.643 & \B 5.842 & 3.703 & \B 3.191 \\
			\bottomrule
	\end{tabular}	}
	\label{tab:ImageNet_normal_models_median_result}
\end{table*}
\begin{table*}[h]
	\centering
	\small
	\tabcolsep=0.1cm
	\setlength{\belowcaptionskip}{0.5pt}%
	\caption{Median $\ell_2$ distortions of different query budgets on the CIFAR-10 dataset.}
	\scalebox{0.8}{
		\begin{tabular}{cccccccc|cccccc}
			\toprule
			Target Model  & Method & \multicolumn{6}{c|}{Targeted Attack} & \multicolumn{6}{c}{Untargeted Attack} \\-
			& & 300 &  @1K & @2K & @5K & @8K & @10K & 300 & @1K & @2K & @5K & @8K & @10K \\
			\midrule
			\multirow{6}{*}{PyramidNet-272} & BA \cite{brendel2018decisionbased}& 8.240  & 7.711 & 7.697 & 6.013 & 3.938 & 3.068 & - & 5.133 & 4.268 & 4.060 & 2.471 & 1.460  \\
			& Sign-OPT \cite{cheng2019sign}& 7.900 & 6.050 & 3.796 & 1.441 & 0.762 & 0.549 &  3.821 & 1.952 & 0.980 & 0.345 & 0.232 & \B 0.196  \\
			& SVM-OPT \cite{cheng2019sign}& 8.870  & 6.432 & 4.199 & 1.651 & 0.894 & 0.655 & 3.777 & 1.956 & 0.877 & 0.363 & 0.235 & 0.202  \\
			& HSJA \cite{chen2019hopskipjumpattack}&  7.616  & 4.013 & 2.109 & \B 0.589 & 0.384 & 0.325 &  3.935 & \B 1.022 & \B 0.587 & 0.294 & 0.224 & 0.201  \\
			& TA &  7.650  & \B 3.874 & \B 2.071 &  0.599 & \B 0.380 & \B 0.318 & \B 3.758  & 1.028 & 0.589 & 0.289 & \B 0.223 &  0.197  \\
			& G-TA &  \B 7.452  & 3.980 & 2.110 & 0.602 & 0.387 & 0.324 &  3.938  & 1.033 & 0.590 & \B 0.288 & 0.224 & 0.198  \\
			\Xcline{1-14}{0.1pt}
			\multirow{6}{*}{GDAS} &  BA \cite{brendel2018decisionbased}& 8.098  & 7.568 & 7.554 & 5.774 & 3.301 & 2.396 &  -  & 2.626 & 2.409 & 2.286 & 1.541 & 1.015  \\
			& Sign-OPT \cite{cheng2019sign}& 7.947 & 6.418 & 4.166 & 1.514 & 0.669 & 0.457 & 2.067  & 1.331 & 0.766 & 0.298 & 0.209 & 0.176  \\
			& SVM-OPT \cite{cheng2019sign}& 9.138  & 7.242 & 5.090 & 2.103 & 1.043 & 0.673 &2.043 & 1.230 & 0.674 & 0.302 & 0.211 & 0.183  \\
			& HSJA \cite{chen2019hopskipjumpattack}&  7.687 & 3.061 & 1.383 & 0.435 & 0.298 & 0.254 & 1.905  & 0.674 & 0.429 & 0.232 & 0.185 & 0.168  \\
			& TA & \B 7.667 & \B 3.024 & \B 1.380 & 0.435 & \B 0.296 & \B 0.253 & 1.932 & 0.690 & \B 0.425 & 0.228 & 0.185 & 0.169  \\
			& G-TA &  7.728  & 3.104 & 1.385 & \B 0.430 & 0.298 & \B 0.253 & \B 1.883 & \B 0.665 & 0.428 & \B 0.226 & \B 0.182 & \B 0.167  \\
			\Xcline{1-14}{0.1pt}
			\multirow{6}{*}{WRN-28} & BA \cite{brendel2018decisionbased}& 8.317  &  7.789 & 7.764 & 5.493 & 2.199 & 1.293 &  - & 3.900 & 3.332 & 3.167 & 1.361 & 0.732  \\
			& Sign-OPT \cite{cheng2019sign}& 7.737 & 5.188 & 2.816 & 0.797 & 0.439 & 0.354 &  2.679 & 1.298 & 0.723 & 0.281 & \B 0.214 & \B 0.191  \\
			& SVM-OPT \cite{cheng2019sign}&  9.054 & 5.697 & 3.317 & 0.981 & 0.511 & 0.398 &  2.627  & 1.279 & 0.627 & 0.288 & 0.218 & 0.198  \\
			& HSJA \cite{chen2019hopskipjumpattack}& 6.446 & 2.064 & 1.005 & 0.443 & 0.339 & \B 0.306 & \B 2.497 & 0.697 & 0.442 & 0.264 & 0.224 & 0.208  \\
			& TA &  6.518 &  \B 2.018 & \B 0.988 & \B 0.428 & \B 0.337 & \B 0.306 &  2.606 & \B 0.682 & \B 0.431 & 0.262 & 0.225 & 0.210  \\
			& G-TA &  \B 6.444 & 2.060 & 1.000 & 0.439 & 0.341 & \B 0.306 &  2.538 & \B 0.682 & 0.444 & \B 0.261 &  0.223 & 0.209  \\
			\Xcline{1-14}{0.1pt}
			\multirow{6}{*}{WRN-40} & BA \cite{brendel2018decisionbased}&  8.181 & 7.760 & 7.722 & 5.482 & 2.193 & 1.363 & -  &3.773 & 3.187 & 3.045 & 1.321 & 0.726  \\
			& Sign-OPT \cite{cheng2019sign}&  7.782 & 5.285 & 2.895 & 0.845 & 0.455 & 0.369 &  2.510  & 1.213 & 0.679 & 0.265 & \B 0.194 & \B 0.173  \\
			& SVM-OPT \cite{cheng2019sign}&  9.042  & 5.835 & 3.400 & 1.030 & 0.549 & 0.420 &  2.500  & 1.251 & 0.611 & 0.272 & 0.198 & 0.179  \\
			& HSJA \cite{chen2019hopskipjumpattack}& 6.578 & 2.183 & 1.040 & 0.439 & 0.338 & \B 0.305 &2.470 & 0.702 & 0.453 & 0.256 & 0.214 &  0.198  \\
			& TA & 6.747  &  2.100 & \B 0.983 & \B 0.435 & \B 0.337 & \B 0.305 &  2.584  & \B 0.680 & \B 0.434 & \B 0.254 & 0.215 & 0.201  \\
			& G-TA & \B  6.514  & \B 2.069 & 1.014 & 0.438 & 0.339 & 0.306 & \B 2.453  & 0.695 & 0.441 & 0.255 &  0.213 & 0.199  \\
			\bottomrule
	\end{tabular}}
	\label{tab:CIFAR-10_normal_models_median_result}
\end{table*}

(1) TA and G-TA perform better in attacking high-resolution images, \textit{i.e.,} the images of the ImageNet dataset. The median $\ell_2$ distortions of Table \ref{tab:ImageNet_normal_models_median_result} are larger than that of Table \ref{tab:CIFAR-10_normal_models_median_result}, because the high-resolution images of the ImageNet dataset lead to larger $\ell_2$ distortions.

(2) TA is more effective in the targeted attacks. We speculate that it is because the adversarial region of the target class is narrower and more scattered in the targeted attack, resulting in a smoother decision boundary. Thus, TA is more suitable for targeted attacks.

\end{document}